\documentclass{article}
\PassOptionsToPackage{numbers,compress}{natbib}
\usepackage[main,final]{neurips_2026}

\usepackage[utf8]{inputenc}
\usepackage[T1]{fontenc}
\usepackage{hyperref}
\usepackage{url}
\usepackage{booktabs}
\usepackage{amsfonts}
\usepackage{nicefrac}
\usepackage{microtype}
\usepackage{xcolor}
\usepackage{amsmath}
\usepackage{amssymb}
\usepackage{mathtools}
\usepackage{amsthm}
\usepackage{makecell}
\usepackage{multirow}
\usepackage{graphicx}
\usepackage{seqsplit}

\makeatletter
\newcommand\midfootnotesize{%
  \@setfontsize\midfootnotesize{7.7}{9}%
}
\DeclareMathSizes{7.7}{7.7}{6}{5}
\makeatother

\usepackage[capitalize,noabbrev]{cleveref}

\theoremstyle{plain}
\newtheorem{theorem}{Theorem}[section]
\newtheorem{proposition}[theorem]{Proposition}

\theoremstyle{definition}

\theoremstyle{remark}

\title{pCoMole: Pareto-Constrained\\Molecule Editing with Discrete Flows}

\author{%
Tong Chen,\textsuperscript{1,*}%
\quad Maximilian Holsman,\textsuperscript{2,*}%
\quad Lin Zhao,\textsuperscript{3,*}%
\quad Pranam Chatterjee\textsuperscript{1,3,\ensuremath{\dagger}}\\[0.8em]
\normalfont
\textsuperscript{1}Department of Computer and Information Science, University of Pennsylvania\\
\textsuperscript{2}Department of Computer Science, Duke University\\
\textsuperscript{3}Department of Bioengineering, University of Pennsylvania\\[0.4em]
\textsuperscript{*}Equal contribution \\ 
\textsuperscript{\ensuremath{\dagger}}\textbf{Corresponding author:} \href{mailto:pranam@upenn.edu}{\texttt{pranam@upenn.edu}}\\
\textbf{Code:} \href{https://huggingface.co/ChatterjeeLab/pCoMole}{\texttt{https://huggingface.co/ChatterjeeLab/pCoMole}}
}

\begin{document}
\maketitle

\begin{abstract}
Biomolecular therapeutics often start from known sequences and require targeted editing to improve multiple properties while satisfying hard biochemical and manufacturability constraints. However, existing generative methods do not jointly support multi-objective optimization, hard feasibility, and sequence editing in discrete, variable-length biological spaces. In this work, we introduce \textbf{P}areto-\textbf{Co}nstrained \textbf{Mol}ecule \textbf{e}diting (\textbf{pCoMole}), a framework built on discrete flow matching that steers a pre-trained Edit Flow toward user-specified preferences while enforcing terminal feasibility. pCoMole defines a feasibility-gated terminal distribution using an augmented Tchebycheff utility and realizes the resulting preference tilt through a Doob-$h$ transform of the underlying edit process. To make this construction practical, we approximate the required harmonic function using short Monte Carlo rollouts over candidate edits, yielding an efficient guided editor with provable preference consistency. We validate pCoMole by shrinking GFP while retaining fluorescence-related properties, shortening diverse Cas9 orthologs while preserving PAM specificity, and compressing peptide binders into short peptidomimetics that optimize seven drug-related properties under hard constraints. In wet lab testing, two 229-residue pCoMole-designed eGFP variants retained clear green fluorescence in BL21 cells after 10 deletions, with either one or two substitutions. Together, pCoMole enables constraint-aware, Pareto-aligned editing of biomolecular sequences in discrete, variable-length spaces.
\end{abstract}

\section{Introduction}
\label{sec:introduction}

Biomolecular therapeutics are rarely designed from scratch \citep{packer2015methods, mclure2022high}. Engineering typically proceeds by editing existing sequences to improve efficacy and developability while obeying hard biochemical and manufacturability constraints. In many settings, \emph{sequence shrinkage} is a primary objective: shorter molecules are cheaper to synthesize, easier to manufacture, and more amenable to delivery. This pressure is especially acute for genome editors such as CRISPR-Cas9, whose large size poses a fundamental barrier to viral delivery and \textit{in vivo} translation \citep{wang2020crispr, kabadi2024aav}, motivating aggressive shrinkage while preserving PAM specificity and catalytic function. Similar edit-based optimization arises across modalities. For insulin analogs, targeted edits tune absorption kinetics and duration while preserving receptor engagement and formulation stability \citep{howey1994lys, kramer2021insulin}. For monoclonal antibodies such as adalimumab, sequence modifications reduce immunogenicity and aggregation risk without sacrificing affinity or specificity \citep{weinblatt2003adalimumab}. For peptide therapeutics such as the revolutionary GLP-1R agonists, optimization explicitly includes shortening or removing dispensable segments to reduce synthesis cost while retaining biological activity \citep{swedberg2016truncated, han2024transformation, yoshida2022peptide, cunningham2017peptides}. These examples illustrate that biomolecular design is fundamentally an \emph{editing problem}, often driven by shrinkage and requiring multi-objective optimization under hard terminal constraints.

Several recent methods address protein shrinkage directly. RayGun \citep{devkota2024miniaturizing} performs template-guided miniaturization by encoding variable-length protein sequences into a fixed-dimensional embedding space and decoding them at shorter lengths, enabling large indels while broadly preserving structure. SCISOR \citep{baron2025diffusion} instead formulates shrinkage as a discrete diffusion process trained to plan deletions by reversing an insertion-only noising process, yielding state-of-the-art predictions of deletion effects and improved motif preservation. While effective for shrinkage, these methods are not designed for general multi-objective editing: they do not jointly optimize multiple developability or functional objectives under explicit feasibility constraints, nor do they provide a mechanism for preference-driven trade-off control during editing.

More broadly, constrained multi-objective molecular optimization has been studied using evolutionary search and reinforcement learning \citep{xia2025cmomo, lin2026multi, sun2022molsearch, yang2024enabling, olivecrona2017molecular, you2018graph, zhou2019optimization, loeffler2024reinvent}. These approaches can approximate Pareto fronts but typically rely on repeated online oracle evaluations and are not naturally suited to offline editing of incumbent molecules in large, variable-length discrete spaces. Discrete diffusion and discrete flow matching models have recently emerged as powerful generators for sequence spaces \citep{austin2021structured, campbell2022continuous, gat2024discrete}, with guidance mechanisms enabling control over objectives and constraints \citep{li2022diffusion, schiff2024simple, tang2025gumbelfm, zhang2025metalorian}. However, most controllable discrete generators focus on \emph{de novo} generation and address either constraint satisfaction or preference control in isolation, without supporting iterative sequence editing with insertions and deletions \citep{chen2025mogdfm, chen2025areuredi, tang2025peptune, tang2025tr2d2, vincoff2025soapia}. Edit Flows extend discrete flow matching to stochastic sequence editing via local insertions, deletions, and substitutions, but have not been developed as a constraint-aware, multi-objective editing framework for biomolecular design \citep{havasi2025edit}.

To address this gap, we introduce \textbf{P}areto-\textbf{Co}nstrained \textbf{Mol}ecule \textbf{e}diting (\textbf{pCoMole}), a framework for offline, constraint-aware multi-objective sequence editing. pCoMole steers a pre-trained Edit Flow by defining a feasibility-gated terminal distribution ranked by an augmented Tchebycheff utility, and realizes this preference tilt through a Doob-$h$ transform. Practical guidance is achieved via short Monte Carlo rollouts that approximate the required harmonic function, enabling efficient editing in large, variable-length discrete biomolecular spaces.

\noindent\textbf{Our key contributions are as follows:}
\begin{enumerate}
    \item We provide an offline method for Pareto-aligned biomolecular sequence editing under hard terminal constraints.
    \item We derive a Doob-$h$ guided editing process with an augmented Tchebycheff utility and make it practical using short rollout-based guidance with theoretical guarantees.
    \item We demonstrate pCoMole across multiple biomolecular editing tasks, including GFP and Cas9 shrinkage and \textit{in silico} peptidomimetic binder compression, and experimentally confirm retained green fluorescence in two shortened eGFP designs.
    \item We show improved feasibility and trade-off control relative to state-of-the-art shrinkage and editing methods, including RayGun and SCISOR.
\end{enumerate}

A detailed discussion of related work is provided in Appendix~\ref{app:related_work}.

\section{Background}
\label{sec:editflows}

\paragraph{Continuous-time Markov chains}
Let $\mathcal{X}$ be a discrete state space and $(X_t)_{t\in[0,1]}$ a continuous-time Markov chain (CTMC) on $\mathcal{X}$ with time-inhomogeneous rate $u_t(\cdot\mid x_t)$. The CTMC is characterized by the first-order expansion
\begin{equation}
% \small
\Pr(X_{t+h}=x \mid X_t=x_t) \;=\; \delta_{x_t}(x) + h\,u_t(x\mid x_t) + o(h),
\label{eq:ctmc_update}
\end{equation}
where $\delta_{x_t}(x)$ is the Kronecker delta and $\lim_{h\to 0} o(h)/h = 0$. Valid rates satisfy the rate conditions:
\begin{equation}
u_t(x\mid x_t)\ge 0 \;\; \forall x\neq x_t,
\qquad
\sum_{x\in\mathcal{X}} u_t(x\mid x_t)=0,
\label{eq:rate_conditions}
\end{equation}
which implies $u_t(x_t\mid x_t)=-\sum_{x\neq x_t} u_t(x\mid x_t)$ so that the time marginals $p_t(x)=\Pr(X_t=x)$ obey the Kolmogorov forward equation.

\paragraph{Edit Flows.}
Edit Flows define a CTMC directly on the space of variable-length sequences \citep{havasi2025edit}. Let $\mathcal{T}$ denote a vocabulary of size $M$ and let $\mathcal{X}=\bigcup_{n=0}^{N}\mathcal{T}^n$ be the set of all sequences up to length $N$. The model parameterizes a rate field $u_t^\theta(\cdot\mid x)$ whose support is restricted to sequences that differ from $x$ by a single edit. For a sequence $x=(x_1,\ldots,x_{n(x)})$ and position $i\in\{1,\ldots,n(x)\}$, the edit operations are insertion $\mathrm{ins}(x,i,a)$, deletion $\mathrm{del}(x,i)$, and substitution $\mathrm{sub}(x,i,a)$, where $a\in\mathcal{T}$. Since these operations yield mutually exclusive outcomes, the rate for each possible edit is parameterized by a type-specific total rate and, when applicable, a token distribution:
\begin{align}
u_t^\theta(\mathrm{ins}(x,i,a)\mid x) &= \lambda^{\mathrm{ins}}_{t,i}(x)\,Q^{\mathrm{ins}}_{t,i}(a\mid x), \\
u_t^\theta(\mathrm{del}(x,i)\mid x) &= \lambda^{\mathrm{del}}_{t,i}(x), \\
u_t^\theta(\mathrm{sub}(x,i,a)\mid x) &= \lambda^{\mathrm{sub}}_{t,i}(x)\,Q^{\mathrm{sub}}_{t,i}(a\mid x).
\end{align}
Here $\lambda^{\mathrm{ins}}_{t,i}(x)$, $\lambda^{\mathrm{del}}_{t,i}(x)$, and $\lambda^{\mathrm{sub}}_{t,i}(x)$ are nonnegative total rates that control how frequently each edit type occurs at position $i$, and $Q^{\mathrm{ins}}_{t,i}(\cdot\mid x)$ and $Q^{\mathrm{sub}}_{t,i}(\cdot\mid x)$ are normalized distributions over token values. The diagonal term $u_t^\theta(x\mid x)$ is determined by the rate conditions and equals the negative sum of all insertion, deletion, and substitution total rates across positions, ensuring that the CTMC is well-defined.

Edit Flows are trained using an auxiliary alignment variable to resolve the ambiguity that multiple edit paths can map $x_0$ to $x_1$. The method augments the token space with a blank symbol $\varepsilon$ and considers aligned sequences $z\in\mathcal{Z}=(\mathcal{T}\cup\{\varepsilon\})^N$, together with a projection $f_{\mathrm{frm\text{-}blank}}:\mathcal{Z}\to\mathcal{X}$ that removes blanks. Training constructs pairs $(z_0,z_1)$ that project to $(x_0,x_1)$ and samples intermediate alignments $z_t$ along a token-wise mixture path with schedule $\kappa_t$. Each mismatch coordinate between $z_t$ and $z_1$ induces a unique single-edit update of the projected sequence $x_t=f_{\mathrm{frm\text{-}blank}}(z_t)$, which provides a tractable supervision signal for the Edit Flow rates. The resulting objective penalizes large total outgoing rate while increasing the log-rate assigned to the alignment-implied edit:
\begin{equation}
% \small
\mathcal{L}(\theta)
=
\mathbb{E}\Bigg[
\sum_{x\neq x_t} u_t^\theta(x\mid x_t)
-
w_t \sum_{i\in\Delta_t}\log u_t^\theta\!\big(x_t^{(i)}\mid x_t\big)
\Bigg],
\label{eq:editflows_loss}
\end{equation}
where $w_t=\dot{\kappa}_t/(1-\kappa_t)$, $\Delta_t=\{\,i: z_t^{\,i}\neq z_1^{\,i}\,\}$, and $x_t^{(i)}$ denotes the sequence obtained by replacing the $i$-th coordinate of $z_t$ by $z_1^{\,i}$ and projecting back via $f_{\mathrm{frm\text{-}blank}}$.

In the rest of the paper, we use $x$ to denote a generic sequence in $\mathcal{X}$ and write $x_t$ for the state of the CTMC at time $t$; all rate functions such as $u_t(\cdot\mid x)$, $\lambda_{t,p}(x)$, and $\pi^{\mathrm{type}}_{t,p}(\cdot\mid x)$ are defined for generic $x$ and are evaluated at $x=x_t$ inside the training objective.

\begin{figure}[t]
  \vskip 0.2in
  \begin{center}
    \centerline{\includegraphics[width=\textwidth]{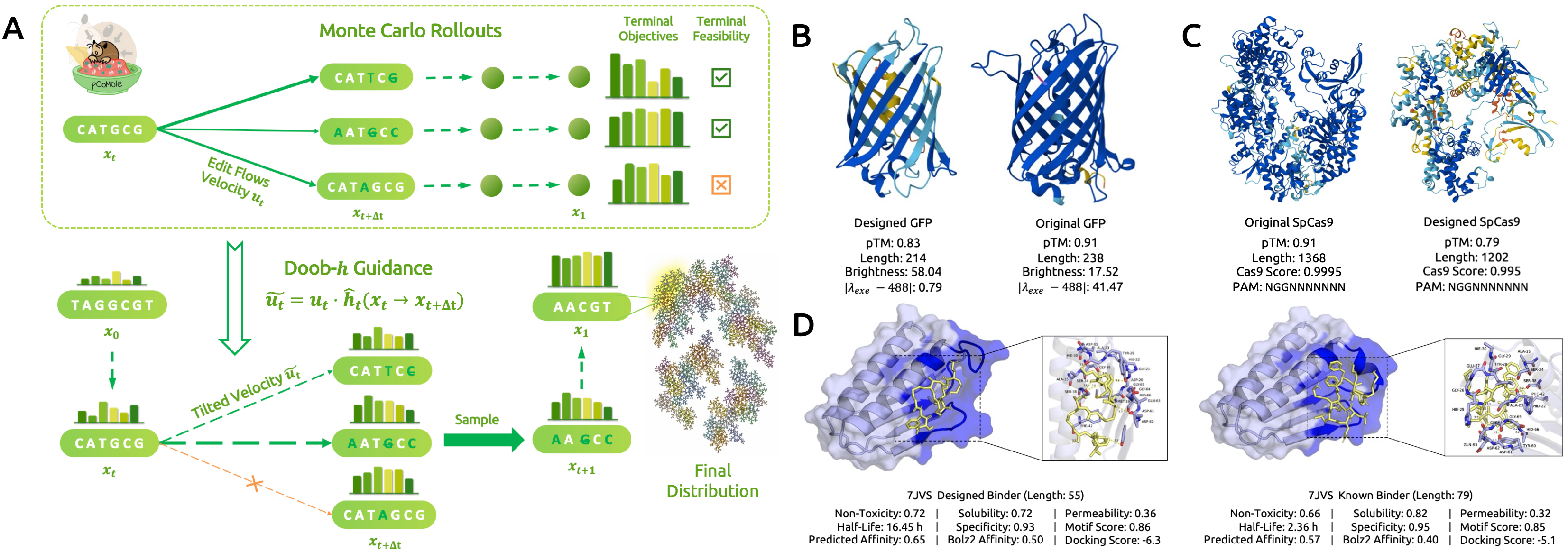}}
    \caption{
\textbf{Overview of pCoMole and representative designs.}
\textbf{(A)} Starting from an intermediate sequence $x_t$, pCoMole samples candidate edits from a pretrained Edit Flows velocity $u_t$ and uses short Monte Carlo rollouts to estimate terminal utility and feasibility. The resulting approximate Doob-$h$ guidance reweights transitions into a tilted velocity $\tilde{u}_t$, steering generation toward feasible, high-utility terminal sequences.
\textbf{(B)} In GFP miniaturization, pCoMole reduces sequence length while preserving the canonical $\beta$-barrel and improving predicted optical objectives.
\textbf{(C)} In Cas9 shrinkage, pCoMole produces a shortened SpCas9 variant that satisfies the PAM constraint while maintaining high predicted Cas9-likeness and plausible structure.
\textbf{(D)} In short peptidomimetic binder design, pCoMole shortens a known 7JVS binder while improving predicted properties. The binder is shown in yellow, the target in light blue, and dark blue highlights target motif regions. Insets zoom into the binding interface (see Figure \ref{fig:dock4} for enlarged views).
}
    \label{fig:pcomole}
  \end{center}
\end{figure}

\section{Methods}
\label{sec:method}
\subsection{Problem Setup}
\label{sec:problem_setup}

Let $\mathcal{T}$ be a finite token vocabulary and let $\mathcal{X}=\bigcup_{n=0}^{N}\mathcal{T}^n$ denote the set of variable-length token sequences up to the model-specific maximum length $N$. We consider $K$ objective functions $f_k:\mathcal{X}\to\mathbb{R}$ for $k\in\{1,\dots,K\}$ to be maximized, together with $M$ inequality constraints $g_m:\mathcal{X}\to\mathbb{R}$ for $m\in\{1,\dots,M\}$ and $L$ equality constraints $h_\ell:\mathcal{X}\to\mathbb{R}$ for $\ell\in\{1,\dots,L\}$. The feasible set is
\begin{equation}
\mathcal{F}\coloneqq\{x\in\mathcal{X}: g_m(x)\le 0\ \forall m,\ \ h_\ell(x)=0\ \forall \ell\}.
\label{eq:feasible_set}
\end{equation}
Given an input sequence $x_0\in\mathcal{X}$ and a pre-trained Edit Flow that induces a conditional terminal distribution $q(\cdot\mid x_0)$ over edited sequences, our goal is to sample terminal sequences $x_1\sim q(\cdot\mid x_0)$ that solve a multi-objective optimization problem by improving the objective vector $(f_1(x_1),\dots,f_K(x_1))$ while satisfying $x_1\in\mathcal{F}$. Intermediate states visited by the editing process are not required to satisfy the constraints because they are transient and are not returned as designs. Enforcing feasibility after every edit can exclude coordinated trajectories that temporarily leave $\mathcal{F}$ before reaching a feasible endpoint. Terminal rollouts retain feasibility information by assigning zero preference to infeasible terminal states.

\subsection{pCoMole with \texorpdfstring{Doob-$h$}{Doob-h} Guided Edit Flows}
\label{sec:doob_h_guidance}

With a pre-trained Edit Flow in hand, we next introduce \textbf{pCoMole}, a \textbf{P}areto-\textbf{Co}nstrained \textbf{Mol}ecule \textbf{e}diting algorithm, to edit molecule sequences for optimizing multiple objectives while maintaining feasibility (Figure \ref{fig:pcomole}). Let $u_t(\cdot\mid x)$ denote the fixed rate field of the trained Edit Flow and let $(X_t)_{t\in[0,1]}$ be the induced time-inhomogeneous CTMC with $X_0=x_0$, which defines a terminal distribution $q(\cdot\mid x_0)$ at $t=1$. pCoMole biases this terminal distribution toward feasible molecules with improved multi-objective performance by specifying a terminal preference function $G:\mathcal{X}\to\mathbb{R}_{\ge 0}$.

We construct $G$ from an augmented Tchebycheff utility. Let $f(x)=(f_1(x),\dots,f_K(x))$ be objective scores to be maximized, let $\omega\in\mathbb{R}_{>0}^K$ be a weight vector, and let $r\in\mathbb{R}^K$ be a reference point. We define
\begin{equation}
U(x)
\coloneqq
\min_{k\in\{1,\dots,K\}} \omega_k\,(f_k(x)-r_k)
+
\rho\sum_{k=1}^K \omega_k\,(f_k(x)-r_k),
\label{eq:atc_utility}
\end{equation}
where $\rho>0$ is the augmentation coefficient. This utility is strictly monotone with respect to Pareto dominance for $\omega>0$, and therefore encourages improvements across objectives rather than collapsing to a single criterion. We incorporate feasibility by defining
\begin{equation}
G(x)
\coloneqq
\exp\!\big(\beta\,U(x)\big)\,\mathbb{I}[x\in\mathcal{F}],
\label{eq:terminal_preference}
\end{equation}
with inverse-temperature $\beta>0$. The corresponding target terminal distribution is the tilted law
\begin{equation}
q_G(x_1\mid x_0) = \frac{q(x_1\mid x_0)\,G(x_1)}{\sum_{x'} q(x'\mid x_0)\,G(x')}\propto q(x_1\mid x_0)\,G(x_1),
\label{eq:tilted_terminal}
\end{equation}
which reweights base Edit Flow samples by preference while assigning zero mass to infeasible terminals.

The Doob-$h$ transform provides a principled way to realize \eqref{eq:tilted_terminal} through a modified CTMC that preserves the local edit structure of the base process. Define the harmonic function
\begin{equation}
h_t(x)\ \coloneqq\ \mathbb{E}\!\left[G(X_1)\,\middle|\,X_t=x\right],
\label{eq:h_function}
\end{equation}
where the conditional expectation is taken under the base rates $u_t$. The Doob-$h$ guided process is the CTMC with off-diagonal rates
\begin{equation}
u_t^{G}(y\mid x)\ \coloneqq\ u_t(y\mid x)\,\frac{h_t(y)}{h_t(x)},\qquad y\neq x,
\label{eq:doob_rates}
\end{equation}
and diagonal term determined by the rate conditions.

\begin{proposition}
\label{prop:doob_transform}
Assume $G(X_1)$ is integrable under the base Edit Flow and $h_t(x)>0$ for all $(t,x)$ reachable from $x_0$. The CTMC with rates \eqref{eq:doob_rates} has terminal distribution $q_G(\cdot\mid x_0)$ defined in \eqref{eq:tilted_terminal}.
\end{proposition}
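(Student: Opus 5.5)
We will prove the claim by showing that the Doob-$h$ process has the same path law as the base process reweighted by the terminal factor $G(X_1)/h_0(x_0)$. Equivalently, we will show that its transition kernel is the $h$-reweighted base kernel. Write $P_{s,t}(x,y)=\Pr(X_t=y\mid X_s=x)$ for the base CTMC. The candidate kernel for the guided process is
\begin{equation*}
P^G_{s,t}(x,y)\coloneqq P_{s,t}(x,y)\,\frac{h_t(y)}{h_s(x)} .
\end{equation*}
If we show that $P^G$ is the transition kernel generated by the rates \eqref{eq:doob_rates}, the result follows immediately. Since $h_1=G$, taking $s=0$ and $t=1$ gives $P^G_{0,1}(x_0,x_1)=q(x_1\mid x_0)G(x_1)/h_0(x_0)$. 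Moreover $h_0(x_0)=\sum_{x'}q(x'\mid x_0)G(x')$, which is exactly the normalizer in \eqref{eq:tilted_terminal}.

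The key steps, in order, are as follows.
\begin{enumerate}
\item Establish the space-time harmonicity of $h$, which is what makes $P^G$ stochastic. By the Markov property and the tower rule, $h_s(x)=\sum_y P_{s,t}(x,y)h_t(y)$. Differentiating (the state space $\mathcal{X}$ is finite since lengths are bounded by $N$) gives the Kolmogorov backward equation $\partial_t h_t(x)+\sum_y u_t(y\mid x)h_t(y)=0$. Integrability of $G(X_1)$ guarantees $h$ is finite, and positivity on reachable states makes the ratios well-defined.
\item Check that $P^G$ is a Markov kernel. Row sums equal $1$ by harmonicity, and Chapman--Kolmogorov holds because the $h$ factors telescope.
\item Compute the generator of $P^G$ from the first-order expansion \eqref{eq:ctmc_update}. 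For $y\neq x$, $P^G_{t,t+\delta}(x,y)=\delta\,u_t(y\mid x)h_{t+\delta}(y)/h_t(x)+o(\delta)$, which yields exactly the off-diagonal rates \eqref{eq:doob_rates}. Continuity of $h$ in $t$ follows from the backward equation.
\item Conclude: the diagonal of the generator is fixed by the rate conditions \eqref{eq:rate_conditions}, which the row-sum identity confirms.
\end{enumerate}

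The main obstacle is the uniqueness step. We must argue that the CTMC defined by the rates \eqref{eq:doob_rates}, with their diagonal, actually has kernel $P^G$, rather than merely that $P^G$ has these rates. Our approach is to verify that $P^G_{s,\cdot}(x,\cdot)$ solves the Kolmogorov forward equation with generator $u^G_t$, using the product rule together with the forward equation for $P$ and the backward equation for $h$. The diagonal terms combine as $u_t(y\mid y)+\partial_t h_t(y)/h_t(y)=-\sum_{z\neq y}u_t(z\mid y)h_t(z)/h_t(y)$, matching the diagonal of $u^G_t$. On a finite state space with bounded, measurable rates, this linear ODE has a unique solution, so the two kernels coincide.

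A subtlety arises at states with $h_t(x)=0$, which can occur because $G$ vanishes off $\mathcal{F}$. Such states are never entered by the guided process, since the rate into them is zero. Therefore the assumption of positivity only on states reachable from $x_0$ suffices: we restrict all of the above to the set of states with $h_t>0$.
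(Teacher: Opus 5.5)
Your argument is correct. It takes the analytic route to the $h$-transform, whereas the paper takes the probabilistic one.

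\textbf{What the paper does.} It works on path space. It shows $L_t=h_t(X_t)/h_0(x_0)$ is a $\mathbb{P}$-martingale and defines $\mathrm{d}\mathbb{Q}/\mathrm{d}\mathbb{P}=G(X_1)/h_0(x_0)$. It reads off the terminal marginal $q_G$ in one line. It then uses Bayes' rule to get $\mathbb{Q}(X_{t+\Delta}=y\mid X_t=x)=\mathbb{P}(X_{t+\Delta}=y\mid X_t=x)\,h_{t+\Delta}(y)/h_t(x)$, which yields the rates \eqref{eq:doob_rates}.

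\textbf{How you differ.} That short-time quantity is exactly your $P^G_{t,t+\Delta}(x,y)$, so the core identity is shared. The difference is that you never construct $\mathbb{Q}$. Instead you verify by hand (backward equation, row sums, Chapman--Kolmogorov, forward equation) that the $h$-reweighted kernel is the transition function generated by $u^G_t$.

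\textbf{What each route buys.} The paper's route gets stochasticity and consistency of the kernel for free, because $\mathbb{Q}$ is a probability measure. It exhibits the guided path law explicitly as the base law reweighted by $G(X_1)/h_0(x_0)$. It also has $X_1$ automatically well defined, since $\mathbb{Q}\ll\mathbb{P}$. Your route supplies a step the paper leaves implicit: that the chain actually simulated from \eqref{eq:doob_rates} coincides in law with $X$ under $\mathbb{Q}$. You obtain this from elementary uniqueness for a finite linear ODE system.

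\textbf{One caveat on the uniqueness step.} The guided rates are generally not bounded up to $t=1$. Read the hypothesis as positivity for $t<1$; imposed at $t=1$ it would force every base-reachable terminal to be feasible. Then a state $x\notin\mathcal{F}$ has $h_t(x)\to G(x)=0$, while $u_t(y\mid x)h_t(y)$ stays bounded below for a feasible neighbor $y$. So $u^G_t(y\mid x)$ typically blows up like $(1-t)^{-1}$, which is exactly what drives the chain into $\mathcal{F}$.

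The fix is short:
\begin{itemize}
\item Run your ODE argument on $[s,T]$ with $T<1$, where the rates are bounded.
\item Let $T\uparrow 1$. Then $P^G_{0,T}(x_0,y)\to q(y\mid x_0)G(y)/h_0(x_0)$ by continuity of $P_{0,T}$ and $h_T$.
\item The guided chain has finitely many jumps a.s., because its expected jump count $\frac{1}{h_0(x_0)}\int_0^1\sum_x P_{0,t}(x_0,x)\sum_{y\neq x}u_t(y\mid x)h_t(y)\,\mathrm{d}t$ is finite.
\item Hence the terminal state $X_{1-}$ exists a.s.\ and has the limiting law.
\end{itemize}
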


The proof is provided in Appendix \ref{prop:doob_transform_proof}. Proposition~\ref{prop:doob_transform} justifies our design by showing that, in the ideal setting where $h_t$ is available, the guided edit process samples exactly from the preference-tilted terminal distribution that encodes both feasibility and multi-objective optimization through $G$. Varying the weight vector $\omega$ yields different tilts of $q(\cdot\mid x_0)$, which allows pCoMole to explore distinct Pareto trade-offs while remaining anchored to the pre-trained Edit Flow dynamics. The proofs for Pareto optimality and coverage (Proposition \ref{prop:atc_pareto}) are provided in Appendix \ref{sec:proofs}.

\subsection{Practical Estimation with Short Rollouts}
\label{sec:short_rollouts}

The Doob-$h$ guided rates in \eqref{eq:doob_rates} depend on the harmonic function
$h_t(x)=\mathbb{E}[G(X_1)\mid X_t=x]$, which is intractable to compute exactly in large edit spaces. pCoMole therefore approximates $h_t$ using short Monte Carlo rollouts under the base Edit Flow. Given $(t,x)$, we simulate $R$ independent continuations from time $t$ to $1$ using the base rate field $\{u_s\}_{s\in[t,1]}$, producing terminal states $\{X_1^{(r)}(t,x)\}_{r=1}^R$, and define
\begin{equation}
\widehat{h}_t(x)\ \coloneqq\ \frac{1}{R}\sum_{r=1}^{R} G\!\left(X_1^{(r)}(t,x)\right).
\label{eq:rollout_estimator}
\end{equation}
When the continuations are exact simulations of the base CTMC, $\widehat{h}_t(x)$ is an unbiased estimator of $h_t(x)$, with feasibility incorporated through $G$.

Evaluating \eqref{eq:rollout_estimator} for all admissible successors is prohibitive, so we adopt a two-stage approximation. At a decision point $(t,x_t)$, we first sample $C$ candidate next states $\{y_c\}_{c=1}^{C}$ by drawing $C$ edit events from the base rates $u_t(\cdot\mid x_t)$ and applying the corresponding edits. For each candidate $y_c$, we run $R$ rollouts from $(t,y_c)$ to time $1$ and compute
\begin{equation}
\widehat{h}_t(y_c)\ \coloneqq\ \frac{1}{R}\sum_{r=1}^{R} G\!\left(X_1^{(c,r)}\right),\qquad c\in\{1,\dots,C\}.
\label{eq:rollout_estimator_candidates}
\end{equation}
Substituting $\widehat{h}_t$ into the Doob-$h$ reweighting yields an approximate guided selection rule over the candidate set,
\begin{equation}
\widetilde{u}_t(y_c\mid x_t)\ \propto\ u_t(y_c\mid x_t)\,\widehat{h}_t(y_c),
\label{eq:approx_guided_rates}
\end{equation}
with normalization over $c\in\{1,\dots,C\}$. Here, $u_t(y_c\mid x_t)$ proposes locally plausible edits, while $\widehat{h}_t(y_c)$ favors candidates whose downstream continuations achieve higher terminal preference under $G$, naturally downweighting candidates unlikely to admit feasible completions.

Because short rollouts may miss feasible terminals under limited budgets, pCoMole maintains an incumbent feasible terminal sequence with the highest observed preference value across all rollouts. For efficiency, we first prune candidates using a cheap screening criterion and allocate rollouts only to candidates whose immediate scalarized score exceeds that of the current state. We then discard candidates whose best observed feasible terminal preference fails to improve upon the current sequence, focusing computation on candidates with evidence of downstream improvement. After the iterative editing phase, we run one additional terminal rollout from the best intermediate state and return the best feasible terminal sequence observed across the entire sampling process. Candidates whose rollouts are all infeasible receive zero estimated continuation value. If no candidate at a step has positive value, the current state is retained. If a complete run observes no feasible terminal, pCoMole returns the input sequence and counts the run as unsuccessful. The rollout approximation and pruning preserve the main guarantees of pCoMole (Appendix~\ref{sec:proofs}).

\section{Experiments}
Our experiments focus on molecule editing settings, where the goal is to improve multiple properties while preserving feasibility under a constrained edit process. In biomolecular design, reducing sequence length is a central practical objective to improve synthesizability, manufacturability, and delivery, whereas systematically increasing sequence length is typically misaligned with the design paradigm and offers limited utility. To demonstrate the efficacy of pCoMole, we evaluate on three shrinkage benchmarks: (i) GFP shrinkage, which designs shorter fluorescent proteins without sacrificing fluorescence-related objectives; (ii) Cas9 shrinkage, which aims to reduce protein length while preserving PAM specificity; and (iii) peptidomimetic binder design, where long peptide binders are compressed into shorter peptidomimetics while improving multiple drug-related properties. The primary benchmarks use task-specific computational predictors, hard constraints, and structure-based checks where applicable. For GFP, we additionally tested pCoMole-shortened eGFP designs in BL21 cells using 488-nm fluorescence imaging.

\subsection{Edit Flows Enable High-Quality Biological Sequence Editing}
To support sequence editing, we first trained five unconditional Edit Flow models: (i) a UniRef short-sequence Edit Flow, UniRef$_S$, trained on around 30k proteins no longer than 350 residues; (ii) a GFP Edit Flow obtained by fine-tuning UniRef$_S$ on around 800 known GFP sequences; (iii) a UniRef long-sequence Edit Flow, UniRef$_L$, trained on around 30k proteins between 800 residues and 1600 residues in length; (iv) a Cas9 Edit Flow trained from scratch on around 20k Cas9-family sequences; (v) a peptidomimetic Edit Flow trained from scratch on 28k peptidomimetics represented as SELFIES strings. These editors define variable-length CTMC dynamics through insertion, deletion, and substitution operations, serving as the fixed proposal processes for pCoMole to steer toward feasibility and user preferences.

\begin{table}[h!]
% \centering
\small
\renewcommand{\arraystretch}{1.2}
\caption{Edit Flow training and evaluation metrics across sequence domains.}
\label{tab:editflow_losses}
\begin{center}
        \begin{tabular}{ccccc}
        \toprule
        \textbf{\makecell{Edit Flow}}&\textbf{\makecell{Train\\Loss}}& \textbf{\makecell{Test\\Loss}}& \textbf{Diversity}& \textbf{\makecell{Validity\\Rate}} \\
        \midrule
        UniRef$_{S}$&594.39&602.47&0.68& \textemdash\\
        UniRef$_L$&3621.89&3573.07&0.45& \textemdash\\
        GFP &147.61&436.65&0.53&0.91\\
        Cas9 &2430.42&2498.37&0.25&0.93\\
        Peptidomimetic &36.26& 38.40&0.80&0.70\\
        \bottomrule 
        \end{tabular}
        \label{tab:editflow}
\end{center}
% \vspace{-2em}
\end{table}

Our Edit Flow models achieved low train-test gaps at their respective length scales, obtaining a lower loss when trained on domain-specific data (Table \ref{tab:editflow}). To assess unconditional generation quality, we sampled 1,000 sequences from each Edit Flow model and report validity and diversity (Appendix \ref{sec:diversity}). We further compared structural confidence by plotting the distribution of mean pLDDT scores for 1,000 sequences generated by the UniRef$_S$ Edit Flow against the corresponding 1,000 UniRef$_S$ test sequences used as inputs (Figure~\ref{fig:plddts}). The generated sequences closely match the pLDDT distribution of the originals, indicating that sampling preserves overall predicted fold quality.

\subsection{pCoMole Shrinks GFP While Optimizing Optical Functions Under Experimentally Motivated Constraints}
\label{sec:gfp}

We next evaluated pCoMole on shrinking GFP while preserving or improving optical function, a task in which fluorescence depends on preserving the characteristic $\beta$-barrel fold. We guide pCoMole to (i) increase predicted brightness, (ii) align the predicted excitation peak with the 488~nm laser line commonly used in fluorescence measurements, and (iii) encourage shrinkage via an explicit length-reduction objective. Three constraints are enforced: a GFP classifier constraint, an emission-range constraint requiring the predicted emission peak to fall within the green band, and a terminal length constraint implemented as a hard equality or a soft bound depending on the setting. Brightness, excitation, and emission are predicted with FPredX \citep{tam2022fpredx}. All experiments start from the same GFP sequence (UniProt ID: P42212; 238 residues) to enable consistent comparison and facilitate downstream experimental testing.

\begin{table}[t]
% \centering
\small
\renewcommand{\arraystretch}{1.3}
\caption{\textbf{GFP design comparison between pCoMole, RayGun, and SCISOR under a fixed length constraint.} For each method, we generate 100 candidates and report average metrics. All methods enforce a fixed terminal length of 213 residues and achieve a validity rate near 100\%.}
\label{tab:gfp_raygun}
\begin{center}
        \begin{tabular}{ccc}
        \toprule
        \textbf{Method}&\textbf{$\lvert \lambda_{\mathrm{exc}} - 488 \rvert$ ($\downarrow$)}& \textbf{Brightness}\\
        \midrule
        RayGun&22.3594& 9.4207\\
 SCISOR& 20.1279& 19.0333\\ 
        \makecell{pCoMole (UniRef$_S$ Edit Flow)}&\textbf{5.8213}& 43.5116\\
        \makecell{pCoMole (GFP Edit Flow)}&8.4372& \textbf{49.0121}\\
        \bottomrule
        \end{tabular}
\end{center}
\vspace{-1em}
\end{table}
\begin{table}[h!]
\centering
\caption{
\textbf{Compute-normalized time to first predicted-usable sequence.}
We report the time required to obtain a sequence satisfying each task-specific quality tier under a 3600-second trial budget. Lower is better. ``N.A.'' indicates that no valid sequence was found within the budget.
}
\label{tab:runtime_tiers}
\small
\begin{tabular}{llccc}
\toprule
Task & Method & Low Tier (s) & Medium Tier (s) & High Tier (s) \\
\midrule
\multirow{3}{*}{GFP}
     & RayGun  & N.A. & N.A. & N.A. \\
     & SCISOR  & 14   & 613  & N.A. \\
     & pCoMole & 266  & 333  & 665  \\
\midrule
\multirow{3}{*}{Cas9}
     & RayGun  & N.A.   & N.A.    & N.A. \\
     & SCISOR  & 6.86  & 76.61 & 240.28 \\
     & pCoMole & 131.69 & 129.15  & 138.96 \\
\bottomrule
\end{tabular}
\end{table}

We first compared pCoMole against RayGun \citep{devkota2024miniaturizing} and SCISOR \citep{baron2025diffusion}, the state-of-the-art methods for protein shrinkage (Table~\ref{tab:gfp_raygun}). Because both baselines require a fixed output length, we applied the same hard length constraint to pCoMole and set all methods to generate sequences of exactly 213 residues. pCoMole achieves substantially improved optical metrics, producing candidates with excitation peaks closer to 488~nm and markedly higher predicted brightness than either baseline. pCoMole remains competitive even with a UniRef-pre-trained Edit Flow without GFP-specific fine-tuning, indicating that gains arise from multi-objective, constraint-aware sampling rather than from a specialized prior. These results also highlight a practical advantage of pCoMole: it can target arbitrary terminal lengths while jointly optimizing functional objectives.

\begin{table}[h!]
\centering
\small
\setlength{\tabcolsep}{4pt}
\renewcommand{\arraystretch}{1.5}
\caption{\textbf{Objective guidance ablation for pCoMole on GFP design.} For each objective guidance setting, we report the average metrics over 100 pCoMole-shrunk GFP sequences. Length reduction guidance is applied in all settings.}
\label{tab:gfp_ablation}

\begin{tabular}{c|cccc}
\toprule
 \textbf{\makecell{Objective Guidance\\$\lvert \lambda_{\mathrm{exc}} - 488 \rvert$\hspace{1em}Brightness}}& \textbf{$\lvert \lambda_{\mathrm{exc}} - 488 \rvert$ ($\downarrow$)}&  \textbf{Brightness}& \textbf{Length}& \textbf{Validity Rate}\\
\midrule
 $\checkmark$\hspace{5em}$\checkmark$& 3.7557& 54.0159& 214& 1.0\\
  $\checkmark$\hspace{5em}$\times$& 2.9810& 31.2600& 213& 1.0\\
$\times$\hspace{5em}$\checkmark$& 12.9500& 62.1200& 214& 1.0\\
$\times$\hspace{5em}$\times$& 32.8200& 25.0200& 213& 1.0\\
\bottomrule
\end{tabular}
\end{table}

We then measured wall-clock time to produce the first predicted-usable design at three increasingly stringent quality tiers, with a 1-hour budget across all three methods (Table~\ref{tab:runtime_tiers}). For GFP, a design was predicted-usable only if it satisfied the terminal length, GFP-likeness, and green-emission constraints. Let $B_0$ denote the predicted brightness of the original GFP. The low, medium, and high tiers require predicted brightness at least $B_0$, $2B_0$, and $3B_0$, respectively, with excitation error $|\lambda_{\mathrm{exc}}-488| \leq 20$, $10$, and $5$ nm. SCISOR reached the low tier faster than pCoMole, consistent with its efficiency as a shrinkage-only generator. However, as the quality threshold tightens, pCoMole became more compute-efficient: it reached the medium tier faster than SCISOR and is the only method to reach the high tier within the budget. This result demonstrates that because pCoMole enforces quality objectives during generation, it can continue producing usable designs even as the threshold tightens, while other methods either cannot or do so much less efficiently.

Finally, we ablated objective guidance using a soft length constraint that limits shrinkage to at most 25 residues, since excessive truncation can destabilize the $\beta$-barrel fold (Table~\ref{tab:gfp_ablation}). Optimizing brightness and excitation alignment jointly yields the best overall trade-off, with omitting brightness guidance reducing predicted brightness and omitting excitation guidance shifting excitation away from 488~nm. AlphaFold3-predicted structures for representative designs preserve the canonical $\beta$-barrel architecture with substantial gains in targeted optical properties (Figure~\ref{fig:gfp}) \citep{abramson2024accurate}.

% To test retained fluorescence beyond FPredX predictions, we separately applied pCoMole to a 239-residue in-house eGFP construct (PC1847, PCMV-eGFP), distinct from the 238-residue UniProt sequence used for the computational benchmark. We tested two shortened designs in BL21 cells. pCoMole\_eGFP\_1 and pCoMole\_eGFP\_2 were each 229 residues in length after 10 deletions and contained one and two substitutions, respectively (Appendix~\ref{app:wetlab_sequences}). Both retained clear green fluorescence under 488-nm imaging, whereas a manually selected 228-residue negative control containing 11 deletions and two substitutions showed no comparable signal (Figure~\ref{fig:gfp_wetlab}).

To test retained fluorescence beyond FPredX predictions, we separately applied pCoMole to a 239-residue in-house eGFP construct (PC1847, PCMV-eGFP), distinct from the 238-residue UniProt sequence used for the computational benchmark. Shortened designs were tested in BL21 cells. Two designs, pCoMole\_eGFP\_1 and pCoMole\_eGFP\_2, were shortened to 229 residues through 10 deletions and contained one and two substitutions, respectively (Appendix~\ref{app:wetlab_sequences}). Both retained clear green fluorescence under 488-nm imaging, whereas a manually selected 228-residue negative control containing 11 deletions and two substitutions showed no comparable signal (Figure~\ref{fig:gfp_wetlab}).

\begin{figure}[h!]
    \centering
    \includegraphics[width=\linewidth]{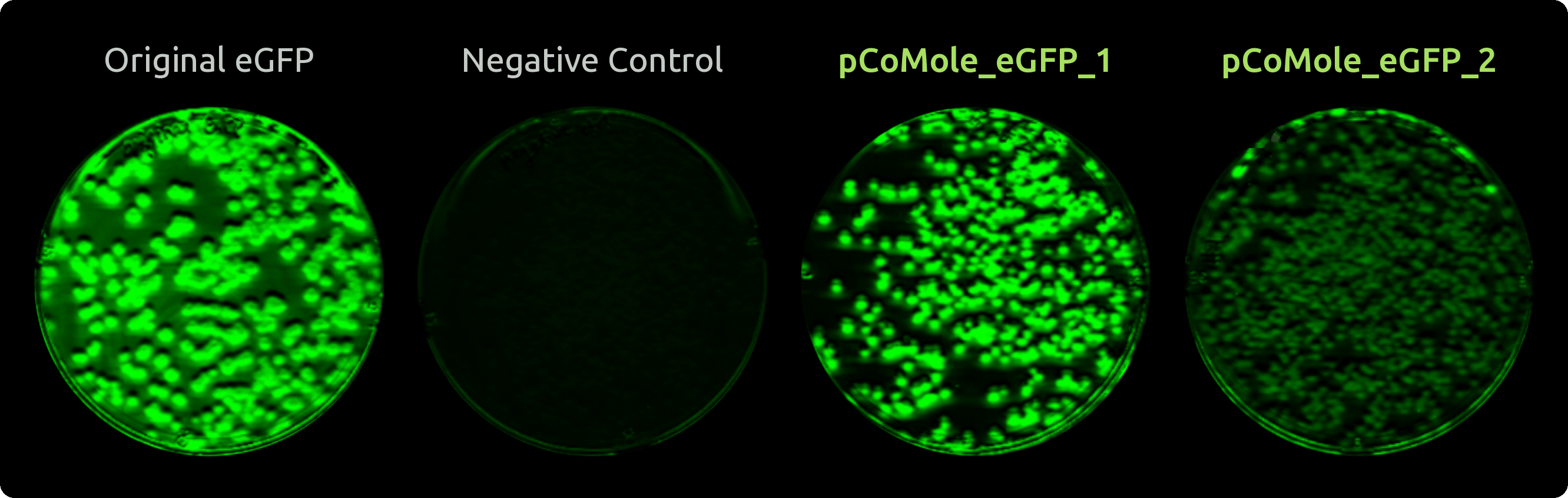}
    \caption{\textbf{Wet-lab fluorescence validation of pCoMole-shortened eGFP designs.} 488-nm fluorescence images of the 239-residue in-house eGFP construct, a manually selected 228-residue negative control, and two 229-residue pCoMole designs. pCoMole\_eGFP\_1 contains 10 deletions and one substitution; pCoMole\_eGFP\_2 contains 10 deletions and two substitutions.}
    \label{fig:gfp_wetlab}
\end{figure}

\subsection{pCoMole Shrinks Cas9s While Maintaining PAM Specificity}
\label{sec:cas9}

CRISPR-Cas9 is a transformative genome-editing platform, but the large size of Cas9 endonucleases poses a fundamental challenge for delivery and packaging, particularly for viral vectors \citep{edraki2019compact, ran2015vivo, behr2021vivo, kabadi2024aav, wang2020crispr}. Systematic shrinkage is therefore highly desirable, but must preserve critical functional properties, most notably PAM specificity, which determines the genomic targets a Cas9 can recognize \citep{collias2021crispr, chatterjee2018minimal, chatterjee2020engineered, zhao2023pam}. We task pCoMole with shrinking Cas9 sequences while maintaining (i) a high likelihood that the edited sequence remains a valid Cas9 and (ii) the original Cas9's PAM specificity.

Here, pCoMole was guided by three objectives: maximizing Cas9-likelihood, minimizing the cross-entropy between the predicted PAM distribution of the edited sequence and that of the input, and maximizing length reduction. We further imposed hard terminal constraints requiring exact PAM matching and a shorter terminal length. Cas9-likelihood was evaluated with a pre-trained classifier distinguishing valid Cas9s from UniRef, other Cas proteins, and corrupted variants, and PAM prediction uses Protein2PAM \citep{nayfach2025engineering}.

We first evaluated pCoMole on four well-characterized Cas9s: SpCas9, St1Cas9, St3Cas9, and GeoCas9, \citep{jinek2012programmable, muller2016streptococcus, harrington2017thermostable} generating 50 shrunk variants per input while strictly enforcing PAM preservation (Table~\ref{tab:cas9_main_table}). Across all tested Cas9s, pCoMole achieves substantial length reductions while maintaining high Cas9-likelihood as well as the predicted PAMs of the original input sequences (Figure~\ref{fig:designed_cas9}).

We then repeated the compute-quality analysis from Section~\ref{sec:gfp} on Cas9 shrinkage, where the tiers now correspond to progressively greater length reductions under exact PAM preservation and maintained Cas9-likelihood (Table~\ref{tab:runtime_tiers}). Specifically, the low, medium, and high tiers require terminal lengths $L\in[930, 970]$, $[890, 930]$, and $[850, 890]$ amino acids, respectively. While both pCoMole and SCISOR were able to reach all three tiers within budget, pCoMole did so at nearly constant runtimes, as its Doob-$h$ guidance enforces the quality criteria during generation. SCISOR's sampling, by contrast, is unguided, and thus its samples increasingly fail as the constraint becomes more demanding. The result is that SCISOR's time-to-usable rises sharply across tiers while pCoMole's stays nearly flat (132s, 129s, 139s), with pCoMole reaching the high tier substantially faster than SCISOR. A complementary fixed-length benchmark (984 $\rightarrow$ 934 residues) confirms this gap: pCoMole achieves 100\% PAM match against SCISOR's 35\% and RayGun's 0\% (Appendix Table~\ref{tab:cas9_benchmark}).

\begin{table*}[h!]
\footnotesize
\caption{\textbf{pCoMole shrinks common Cas9 sequences while maintaining PAM specificity.}
``Cas9 Likelihood'' is the score of a pre-trained Cas9 classifier. 
``PAM Distribution CE'' is a cross-entropy between the predicted PAM distribution of the input and that of the generated sequence.
``PAM Match Rate'' is the fraction of generated sequences whose predicted PAM exactly matches the input.
``PAM matched' is the predicted PAM which pCoMole maintained during shrinkage.}
\label{tab:cas9_main_table}
\begin{center}
        \begin{tabular}{c|cccc|c}
        \toprule
        \textbf{Input Cas9} &
        \textbf{Length Change} &
        \textbf{Cas9 Likelihood} &
        \textbf{PAM Distr. CE} &
        \textbf{PAM Match Rate} &
        \textbf{PAM matched}\\
        \midrule
        SpCas9    &
        $1368 \rightarrow 1211\;(-157)$ &
        $1.00 \rightarrow 0.97$ &
        $0.97 \rightarrow 0.90$ &
        1.00 &
        NGG \\
        St1Cas9 &
        $1121 \rightarrow 1023\;(-98)$ &
        $1.00 \rightarrow 0.98$ &
        $0.91 \rightarrow 0.89$ &
        1.00 &
        NNANAA \\
        St3Cas9 &
        $1388 \rightarrow 1301\;(-87)$ &
        $1.00 \rightarrow 0.95$ &
        $0.94 \rightarrow 0.90$ &
        1.00 &
        NGGNG \\
        GeoCas9   &
        $1087 \rightarrow 1035\;(-52)$ &
        $0.96 \rightarrow 0.95$ &
        $0.93 \rightarrow 0.89$ &
        1.00 &
        NNNNCNAA \\
        \bottomrule
        \end{tabular}
\end{center}
\end{table*}

To evaluate PAM preservation independently of Protein2PAM, we used CICERO only for post hoc evaluation of St3Cas9 and GeoCas9 designs \citep{fang2026cicero}. We evaluated 25 designs for each ortholog and condition. The guidance-disabled control removed the PAM objective and hard PAM constraint while keeping the remaining pCoMole settings fixed. PAM guidance substantially improved PAM preservation under the independent CICERO evaluator, increasing exact PAM match rates from 4\% to 88\% for St3Cas9 and from 20\% to 72\% for GeoCas9 (Appendix Table~\ref{tab:cicero}). PWM similarity, which measures similarity in predicted PAM distribution between generated and original sequences, changed in the same direction. These held-out evaluations support PAM preservation beyond the Protein2PAM guidance model.

\subsection{pCoMole Develops Short Peptidomimetic Binders Optimizing Seven Drug-Related Properties}

Finally, via a peptidomimetic Edit Flow, we used pCoMole to shrink known peptide binders into short peptidomimetic candidates that retain key interaction motifs while mitigating common peptide liabilities such as poor stability, limited permeability, and unfavorable pharmacokinetics \cite{cunningham2017peptides, wang2022therapeutic}. Starting from each known binder, pCoMole jointly optimized seven drug-related properties: non-toxicity, solubility, permeability, half-life, binding affinity, motif score, and motif specificity, where the latter two reward pocket retention and penalize off-target binding. We enforced a soft shrinkage constraint to prevent degenerate collapse and a hard peptidomimetic validity constraint computed via RDKit. Property predictions use a chemotype-adaptive ensemble: each candidate SMILES received a peptide-likeness score $\alpha(x)\in[0,1]$, and property scores were computed as a convex mixture of peptide-specialized PeptiVerse predictions \citep{zhang2026peptiverse} and small-molecule Admetica predictions \citep{admetica}, weighting peptide-specific predictors more heavily for peptide-like candidates.

\begin{table*}[t]
\centering
\scriptsize
\setlength{\tabcolsep}{4pt}
\renewcommand{\arraystretch}{1.2}
\caption{
\textbf{pCoMole designs shorter peptidomimetics with improved predicted properties.}
For each target, we compare the original peptide drug binder with the average over 100 pCoMole-designed peptidomimetics. Length denotes the number of tokenized SMILES tokens. All pCoMole designs satisfy the specified task constraints. For GLP-1R, motif score and specificity are computed relative to Semaglutide and are therefore reported only for Semaglutide and its pCoMole designs.
}
\vspace{0.5em}
\label{tab:mimetics_drug}

\begin{tabular}{cccccccccc}
\toprule
\textbf{Target}&\textbf{Ligand}& \textbf{Length}& \textbf{Non-Toxicity}&  \textbf{Solubility}& \textbf{Permeability}& \textbf{Half-life (h)}& \textbf{Affinity}& \textbf{Motif Score}&  \textbf{Specificity}\\
\midrule
\multirow{2}{*}{PTH1R}&Teriparatide& 259& 0.8890& 0.6905& 0.2793& 2.8301& 0.9148& 0.2696& 0.9834\\
&pCoMole& 217& 0.8729& 0.7268& 0.2959& 2.9488& 0.9358& 0.3593& 0.9706\\
\midrule
\multirow{2}{*}{p53}&p28& 170& 0.8533& 0.6737& 0.2627& 2.2015& 0.6369& 0.5418& 0.9873\\
&pCoMole& 133& 0.8076& 0.6309& 0.3071& 3.8517& 0.6885& 0.5014& 0.9649\\
\midrule
\multirow{6}{*}{GLP-1R}&Semaglutide& 248& 0.8872& 0.7656& 0.2583& 1.8571& 0.9467& 0.4691& 0.9827\\
&pCoMole& 186& 0.8122& 0.7240& 0.2985& 4.0277& 0.9367& 0.5212& 0.9778\\
&Liraglutide& 237& 0.7854& 0.8307& 0.2609& 2.0808& 0.8440& \textemdash & \textemdash \\
&Exenatide& 256& 0.8319& 0.7547& 0.2319& 2.0663& 0.8949& \textemdash & \textemdash \\
&Lixisenatide& 286& 0.7204& 0.8392& 0.2215& 2.7535& 0.8280& \textemdash & \textemdash \\
&Tirzepatide& 273& 0.9491& 0.8303& 0.2573& 1.7149& 0.9798& \textemdash & \textemdash\\
\bottomrule
\end{tabular}
\end{table*}

On 15 PDB targets with known peptide binders (Appendix Table~\ref{tab:mimetics_pdb}), pCoMole generated 100 candidates per target that satisfy both terminal constraints while achieving substantial length reduction. These candidates exhibited higher permeability, longer predicted half-life, and stronger binding affinity than the original peptides, with motif and specificity scores largely preserved or improved. Modest decreases in non-toxicity or solubility on some targets reflect realistic multi-objective trade-offs. Affinity guidance used PeptiVerse, and Boltz-2 and AutoDock VINA were reserved for post hoc evaluation after generation. We scored generated designs with Boltz-2 and docked representative high-scoring designs into each target using AutoDock VINA (Figure~\ref{fig:pcomole}D, Figures~\ref{fig:dock1}-\ref{fig:dock4}) \citep{passaro2025boltz, trott2010autodock}. Despite shorter tokenized SMILES lengths, docked poses occupied the same binding pockets as the known peptides, sit adjacent to annotated motif regions, and engaged highly overlapping residue sets, supporting pocket-level recognition beyond the in-loop affinity predictor.

We further applied pCoMole to clinically relevant peptide drugs on three therapeutic targets (Table~\ref{tab:mimetics_drug}): teriparatide for PTH1R \citep{neer2001effect}, p28 for p53 \citep{warso2013first, yamada2013p28}, and semaglutide for GLP-1R, with comparison against other GLP-1R peptide drugs \citep{marso2016semaglutide, wilding2021once}. The known therapeutic binders served as incumbent reference molecules for this editing case study. pCoMole produced substantially shorter, constraint-satisfying peptidomimetics with improved predicted permeability and half-life while maintaining strong predicted affinity. Most notably, for GLP-1R, semaglutide-derived peptidomimetics remained substantially shorter while retaining favorable predicted property profiles relative to the reference drugs.

\subsection{Ablation Studies}
We provide detailed ablation studies in Appendix~\ref{sec:ablation}, characterizing pCoMole's sensitivity to terminal constraints, scalarization choice, deletion-rate scaling, insertion handling, and sampling hyperparameters.

\section{Discussion}

In this work, we present \textbf{pCoMole}, a framework for constraint-aware, multi-objective \emph{sequence editing} that steers a pre-trained Edit Flow toward feasible, Pareto-aligned outcomes under hard terminal constraints. By operating on variable-length edit trajectories rather than generating sequences \emph{de novo}, pCoMole enables systematic shrinkage while preserving task-critical properties. Across GFP, diverse Cas9 orthologs, and peptidomimetic binder compression, \textit{in silico} experiments showed that pCoMole can shorten biomolecular sequences while maintaining stringent specifications such as PAM preference or binding motifs while navigating realistic multi-objective trade-offs. We further experimentally validated shortened eGFP designs, with two 229-residue pCoMole variants retaining clear green fluorescence in BL21 cells after 10 deletions. This direct validation demonstrates that pCoMole can produce shortened proteins while retaining targeted biological phenotypes beyond computational guidance oracles. As such, despite higher computational cost from rollout-based guidance, pCoMole provides a principled bridge between discrete generative modeling and biomolecular optimization.

\section{Author Contributions}
T.C. formulated the pCoMole algorithm and conducted the GFP and peptidomimetic model training, computational experiments, and analyses. M.H. conducted the Cas9 model training, computational experiments, and analyses. L.Z. performed eGFP wet-lab experiments. P.C. conceived, designed, directed, and supervised the study. 

% \paragraph{Acknowledgments.} 

% \paragraph{Funding Statement.} 

% \paragraph{Competing Interests.} 

%%%%%%%%%%%%%%%%%%%%%%%%%%%%%%%%%%%%%%%%%%%%%%%%%%%%%%%%%%%%
%%%%%%%%%%%%%%%%%%%%%%%%%%%%%%%%%%%%%%%%%%%%%%%%%%%%%%%%%%%%
%%%%%%%%%%%%%%%%%%%%%%%%%%%%%%%%%%%%%%%%%%%%%%%%%%%%%%%%%%%%
%%%%%%%%%%%%%%%%%%%%%%%%%%%%%%%%%%%%%%%%%%%%%%%%%%%%%%%%%%%%
\bibliographystyle{unsrtnat}
\bibliography{citation}  
%%%%%%%%%%%%%%%%%%%%%%%%%%%%%%%%%%%%%%%%%%%%%%%%%%%%%%%%%%%%
%%%%%%%%%%%%%%%%%%%%%%%%%%%%%%%%%%%%%%%%%%%%%%%%%%%%%%%%%%%%
%%%%%%%%%%%%%%%%%%%%%%%%%%%%%%%%%%%%%%%%%%%%%%%%%%%%%%%%%%%%
%%%%%%%%%%%%%%%%%%%%%%%%%%%%%%%%%%%%%%%%%%%%%%%%%%%%%%%%%%%%

%%%%%%%%%%%%%%%%%%%%%%%%%%%%%%%%%%%%%%%%%%%%%%%%%%%%%%%%%%%%
\newpage
\appendix
\renewcommand{\thefigure}{S\arabic{figure}}
\setcounter{figure}{0}
\setcounter{table}{0}
\renewcommand{\thetable}{S\arabic{table}}

\begin{center}
    {\LARGE \bfseries Supplementary Information}
\end{center}
\vspace{1em}

\section{Related Work}
\label{app:related_work}
\paragraph{Molecule Editing Frameworks.}
Protein miniaturization methods such as RayGun \citep{devkota2024miniaturizing} and SCISOR \citep{baron2025diffusion} focus on producing shorter proteins, but are not designed for multi-objective editing that simultaneously optimizes developability and functional criteria; we benchmark both against pCoMole in Sections~\ref{sec:gfp} and~\ref{sec:cas9}. In contrast, many small-molecule editing and optimization methods, including MolDQN \citep{zhou2019optimization} and GCPN \citep{you2018graph}, are formulated as online reinforcement learning, relying on repeated reward evaluation during training or search. pCoMole instead operates in an offline setting with costly per-sequence evaluation, making direct comparison to online methods inappropriate.

\paragraph{Constraint-based Generation.}
Constraint-aware generative methods typically enforce feasibility through projection, constrained dynamics, or constraint-driven fine-tuning. Representative diffusion and flow-matching approaches include CDD \citep{cardei2025constrained}, DiffOPT \citep{kong2024diffusion}, and ALF$^2$ \citep{gutjahr2025constrained}. However, these methods are not formulated as sequence-editing processes with explicit insertion and deletion for variable-length control, and they generally optimize a single scalar objective with constraint penalties rather than providing explicit Pareto-style multi-objective control. Many recent constrained protein design methods further rely on structure-guided constraints, differing from pCoMole’s sequence-only editing setting.

\paragraph{Multi-Objective Optimization.}
Multi-objective guided generation is commonly studied in sequential design settings, where candidates are iteratively proposed and refined using new evaluations \citep{gruver2023protein, jain2023multi, stanton2022accelerating, ahmadianshalchi2024pareto}. Bayesian optimization methods fit surrogate models and select candidates using acquisition functions \citep{yu2020gradient, shahriari2015taking}, with multi-objective extensions based on expected hypervolume improvement \citep{emmerich2008computation}, information-theoretic criteria \citep{belakaria2021output}, or scalarization strategies \citep{knowles2006parego, zhang2007moea, paria2020flexible}. While effective for black-box trade-off exploration, these approaches typically treat feasibility through soft penalties or heuristics and do not model feasibility-gated sampling under hard biochemical or manufacturability constraints.

\paragraph{Constrained Multi-Objective Molecular Optimization.}
Recent frameworks such as CMOMO \citep{xia2025cmomo} and MCEMOL \citep{lin2026multi} study constrained multi-objective molecular optimization, but target different pipelines than ours. CMOMO performs iterative evolutionary search over RDKit-based fitness functions, which does not transfer to sequence editing under biochemical and manufacturability constraints. We do not benchmark against MCEMOL due to the lack of a public implementation. Other constrained multi-objective methods based on search or reinforcement learning, including MolSearch \citep{sun2022molsearch}, ParetoDrug \citep{yang2024enabling}, and REINVENT~4 \citep{loeffler2024reinvent}, likewise assume repeated online evaluations, whereas pCoMole performs offline, edit-based sampling from a fixed pre-trained editor.

These distinctions define the setting addressed by pCoMole: variable-length sequence editing with explicit terminal feasibility and controllable multi-objective preferences.

\section{Log-domain rollout estimation and guidance}
\label{app:log_domain}

Our guided editing procedure relies on a terminal preference function of the form $G(x)=\exp(\ell(x))$, where $\ell(x)$ is a scalar score derived from multi-objective utility and feasibility constraints. In practice, objective scores may be unbounded or have large magnitude, and direct computation of $\exp(\ell(x))$ can overflow. We therefore implement value estimation and guided selection entirely in the log domain, which removes the need to normalize objectives to a fixed range and yields numerically stable computations.

\paragraph{Log-preference and feasibility.}
Let $U(x)$ denote the scalar utility used for guidance, such as the augmented Tchebycheff utility defined in the main text. We define the log-preference as
\begin{equation}
\ell(x)\ \coloneqq\ \beta\,U(x)\ +\ \log \mathbb{I}[x\in\mathcal{F}],
\label{eq:log_pref}
\end{equation}
where $\beta>0$ is an inverse-temperature and $\mathbb{I}[x\in\mathcal{F}]$ is the feasibility indicator. We implement the indicator in log space by setting $\log \mathbb{I}[x\in\mathcal{F}]=0$ for feasible $x$ and $-\infty$ otherwise. This convention ensures that infeasible terminal states contribute zero weight in all subsequent computations.

\paragraph{Log-domain value function and Monte Carlo estimator.}
Doob-$h$ guidance depends on the space-time harmonic function $h_t(x)=\mathbb{E}[G(X_1)\mid X_t=x]$. In log space, we define
\begin{equation}
\log h_t(x)\ =\ \log \mathbb{E}\!\left[\exp(\ell(X_1)) \,\middle|\, X_t=x\right].
\label{eq:log_h_def}
\end{equation}
Given $R$ independent rollouts from $(t,x)$ under the base Edit Flow, producing terminal states $\{X_1^{(r)}\}_{r=1}^R$, the standard Monte Carlo estimator of $h_t(x)$ is $\widehat{h}_t(x)=\frac{1}{R}\sum_{r=1}^R \exp(\ell(X_1^{(r)}))$. We compute its logarithm stably using the log-mean-exp operator
\begin{equation}
\log \widehat{h}_t(x)\ =\ \log\sum_{r=1}^R \exp(\ell(X_1^{(r)}))\ -\ \log R,
\label{eq:log_h_hat}
\end{equation}
where the summation is implemented via $\mathrm{logsumexp}$. Although $\log \widehat{h}_t(x)$ is generally a biased estimator of $\log h_t(x)$ due to Jensen's inequality, it is consistent as $R\to\infty$ and remains well-defined even when $\ell(X_1^{(r)})$ takes large values. When all rollouts terminate in infeasible states, \eqref{eq:log_h_hat} evaluates to $-\infty$, which correctly assigns zero continuation value under the feasibility-weighted preference.

\paragraph{Guided selection from candidate edits in log space.}
At time $t$ and state $x_t$, we restrict attention to a candidate set $\{y_j\}_{j=1}^M$ generated by sampling admissible edits from the base rate field $u_t(\cdot\mid x_t)$. Under the Doob-$h$ transform, the guided rate for a transition $x_t\to y$ is proportional to $u_t(y\mid x_t)\,h_t(y)$. Replacing $h_t$ by the rollout estimator and working in log space yields the candidate score
\begin{equation}
s_j\ \coloneqq\ \log u_t(y_j\mid x_t)\ +\ \log \widehat{h}_t(y_j).
\label{eq:log_score}
\end{equation}
We sample the next state from the categorical distribution obtained by normalizing these scores,
\begin{equation}
\mathbb{P}(Y=y_j\mid x_t)\ =\ \frac{\exp(s_j)}{\sum_{m=1}^{M}\exp(s_m)}
\ =\ \exp\!\left(s_j - \mathrm{logsumexp}_{m\in\{1,\dots,M\}}\, s_m\right),
\label{eq:log_softmax}
\end{equation}
which is computed stably by subtracting $\max_m s_m$ before exponentiation. This computation depends only on differences of log scores and is therefore invariant to adding a constant offset to $\ell(x)$ or to all scores $\{s_j\}$, which is useful when utilities are defined on arbitrary scales.

\paragraph{Stability considerations.}
The log-domain implementation prevents overflow in two places where it commonly occurs. First, \eqref{eq:log_h_hat} avoids forming $\exp(\ell)$ explicitly by using $\mathrm{logsumexp}$. Second, \eqref{eq:log_softmax} avoids overflow and underflow when converting scores into probabilities by applying the standard stabilized softmax. In addition, infeasible rollouts propagate cleanly as $-\infty$ scores and therefore receive zero probability without requiring special-case branching. These properties allow pCoMole to operate with unnormalized objective scores and large-magnitude utilities while preserving the same guided selection rule as the corresponding computations in the original domain.

\section{Theoretical Proofs}
\label{sec:proofs}

This section states guarantees for the idealized pCoMole algorithm that uses exact Doob-$h$ guidance and exact value functions, and then describes what properties are preserved under the rollout-based approximations and practical safeguards used in pCoMole. We follow the notations in Section \ref{sec:method}.

\begin{proposition}[Doob-$h$ transform yields preference-tilted terminals]
\label{prop:doob_transform_proof}
Assume $G(X_1)$ is integrable under the base Edit Flow and $h_t(x)>0$ for all $(t,x)$ reachable from $x_0$, where $h_t(x)\coloneqq \mathbb{E}[G(X_1)\mid X_t=x]$. The CTMC with rates \eqref{eq:doob_rates} has terminal distribution $q_G(\cdot\mid x_0)$ defined in \eqref{eq:tilted_terminal}.
\end{proposition}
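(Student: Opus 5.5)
The plan is to show that the guided CTMC has the same path-space law as the base CTMC reweighted by $G(X_1)/h_0(x_0)$. Marginalizing this identity at time $1$ then gives exactly \eqref{eq:tilted_terminal}.

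\textbf{Step 1: backward equation for $h$.} On the finite state space $\mathcal{X}$, the function $h_t(x)=\mathbb{E}[G(X_1)\mid X_t=x]$ satisfies the Kolmogorov backward equation
\begin{equation*}
\partial_t h_t(x) + \sum_{y} u_t(y\mid x)\,h_t(y) = 0,\qquad h_1 = G.
\end{equation*}
This follows from the Markov property, which gives $h_t(x)=\sum_y P_{t,t+\delta}(x,y)\,h_{t+\delta}(y)$, combined with the first-order expansion \eqref{eq:ctmc_update}. Integrability of $G(X_1)$ makes $h$ finite. Since $\mathcal{X}$ is finite, no interchange-of-limits issues arise, provided the rates are piecewise continuous in $t$.

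\textbf{Step 2: the guided rates are valid.} The off-diagonal rates \eqref{eq:doob_rates} are nonnegative. With the diagonal chosen to satisfy \eqref{eq:rate_conditions}, and using Step 1, the guided diagonal can be written as
\begin{equation*}
u^G_t(x\mid x)=u_t(x\mid x)+\frac{\partial_t h_t(x)}{h_t(x)}.
\end{equation*}

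\textbf{Step 3: identify the guided law as a reweighting of the base law (main step).} Define the reweighted path measure $d\mathbb{Q}/d\mathbb{P} = G(X_1)/h_0(x_0)$. Its total mass is one because $\mathbb{E}[G(X_1)]=h_0(x_0)>0$. I will prove that under $\mathbb{Q}$ the process is Markov with rates $u^G$, using Bayes' rule on conditional transition probabilities:
\begin{equation*}
\mathbb{Q}(X_{t+\delta}=y\mid X_t=x,\mathcal{F}_t)=\frac{\mathbb{P}(X_{t+\delta}=y\mid X_t=x)\,h_{t+\delta}(y)}{h_t(x)}.
\end{equation*}
This holds because the $\mathbb{Q}$-density of the future given the past is proportional to $\mathbb{E}[G(X_1)\mid X_{t+\delta}=y]$, by the tower property. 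Expanding to first order in $\delta$ gives rate $u_t(y\mid x)h_t(y)/h_t(x)$ for $y\neq x$, which is exactly \eqref{eq:doob_rates}.

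A CTMC on a finite space is determined in law by its initial state and its rates, via uniqueness of solutions to the forward equation. Hence the guided CTMC started at $x_0$ has law $\mathbb{Q}$. Its time-$1$ marginal is then $\mathbb{Q}(X_1=x_1)=q(x_1\mid x_0)G(x_1)/h_0(x_0)$, and $h_0(x_0)=\sum_{x'}q(x'\mid x_0)G(x')$ is precisely the normalizer in \eqref{eq:tilted_terminal}.

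\textbf{Alternative route for Step 3.} One can instead verify directly that $\tilde p_t(x)\coloneqq p_t(x)h_t(x)/h_0(x_0)$ solves the forward equation for $u^G$, by combining the forward equation for $p_t$ with the backward equation for $h_t$. At $t=0$ this equals $\delta_{x_0}$, and at $t=1$ it equals $q_G$. Uniqueness then concludes.

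\textbf{Main obstacle: positivity of $h_t$.} The Bayes-rule step and the ratio rates need $h_t(x)>0$ on reachable pairs, which the hypothesis guarantees. States with $h_t(y)=0$ receive zero guided rate and so are never entered, which keeps the guided process on the region where $h>0$.
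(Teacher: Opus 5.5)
Your proof is correct and follows the same route as the paper: define $\mathbb{Q}$ via $\mathrm{d}\mathbb{Q}/\mathrm{d}\mathbb{P}=G(X_1)/h_0(x_0)$, read off the tilted terminal marginal, and identify the $\mathbb{Q}$-rates by Bayes' rule plus a first-order expansion in $\delta$. Conditioning on $\mathcal{F}_t$ (which gives the Markov property under $\mathbb{Q}$) and appealing to uniqueness of the CTMC law given its rates fill in two steps the paper leaves implicit, while the backward- and forward-equation material is a valid but optional supplement.
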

\begin{proof}
Let $(X_t)_{t\in[0,1]}$ be the base time-inhomogeneous CTMC on $\mathcal{X}$ with off-diagonal rates $u_t(y\mid x)$ and initial condition $X_0=x_0$. Let $(\Omega,\mathcal{F},\mathbb{P})$ denote its probability space and let $\mathcal{F}_t=\sigma(X_s:0\le s\le t)$ be the natural filtration. For $t\in[0,1]$ and $x\in\mathcal{X}$ define
\[
h_t(x)\coloneqq \mathbb{E}_{\mathbb{P}}\!\left[G(X_1)\mid X_t=x\right],
\]
which is well-defined by integrability of $G(X_1)$ and strictly positive on the reachable set by assumption. Define the likelihood-ratio process
\begin{equation}
L_t \coloneqq \frac{h_t(X_t)}{h_0(x_0)}, \qquad t\in[0,1].
\label{eq:Lt_def}
\end{equation}
By the Markov property of $X$, for any $0\le s\le t\le 1$,
\begin{equation}
\mathbb{E}_{\mathbb{P}}\!\left[h_t(X_t)\mid \mathcal{F}_s\right]
=
\mathbb{E}_{\mathbb{P}}\!\left[\mathbb{E}_{\mathbb{P}}[G(X_1)\mid X_t]\mid \mathcal{F}_s\right]
=
\mathbb{E}_{\mathbb{P}}\!\left[G(X_1)\mid \mathcal{F}_s\right]
=
\mathbb{E}_{\mathbb{P}}\!\left[G(X_1)\mid X_s\right]
=
h_s(X_s),
\label{eq:martingale_step}
\end{equation}
so $(h_t(X_t))_{t\in[0,1]}$ is a $\mathbb{P}$-martingale and therefore $(L_t)_{t\in[0,1]}$ is a nonnegative $\mathbb{P}$-martingale with $\mathbb{E}_{\mathbb{P}}[L_t]=1$. We use $L_1$ to define a new measure $\mathbb{Q}$ on $(\Omega,\mathcal{F}_1)$ by
\begin{equation}
\frac{\mathrm{d}\mathbb{Q}}{\mathrm{d}\mathbb{P}}\ \coloneqq\ L_1
=
\frac{h_1(X_1)}{h_0(x_0)}
=
\frac{G(X_1)}{h_0(x_0)},
\label{eq:change_measure}
\end{equation}
where we used $h_1(x)=\mathbb{E}_{\mathbb{P}}[G(X_1)\mid X_1=x]=G(x)$. For any measurable set $A\subseteq \mathcal{X}$,
\begin{equation}
\mathbb{Q}(X_1\in A)
=
\mathbb{E}_{\mathbb{P}}\!\left[\mathbb{I}[X_1\in A]\frac{G(X_1)}{h_0(x_0)}\right]
=
\frac{1}{h_0(x_0)}\sum_{x_1\in A}\mathbb{P}(X_1=x_1)\,G(x_1),
\label{eq:tilt_marginal}
\end{equation}
which shows that the terminal marginal under $\mathbb{Q}$ is exactly the tilted distribution in \eqref{eq:tilted_terminal}.

It remains to identify the dynamics of $(X_t)$ under $\mathbb{Q}$. Fix $t\in[0,1)$ and a reachable state $x$. For any $y\neq x$, the base CTMC satisfies
\begin{equation}
\mathbb{P}(X_{t+\Delta}=y\mid X_t=x)=u_t(y\mid x)\,\Delta+o(\Delta), \qquad \Delta\downarrow 0.
\label{eq:base_jump}
\end{equation}
Using Bayes' rule under the change of measure \eqref{eq:change_measure} and the Markov property,
\begin{align}
\mathbb{Q}(X_{t+\Delta}=y\mid X_t=x)
&=
\frac{\mathbb{E}_{\mathbb{P}}\!\left[\mathbb{I}[X_{t+\Delta}=y]\,L_1\mid X_t=x\right]}
{\mathbb{E}_{\mathbb{P}}\!\left[L_1\mid X_t=x\right]}
=
\frac{\mathbb{E}_{\mathbb{P}}\!\left[\mathbb{I}[X_{t+\Delta}=y]\;\frac{G(X_1)}{h_0(x_0)}\mid X_t=x\right]}
{\mathbb{E}_{\mathbb{P}}\!\left[\frac{G(X_1)}{h_0(x_0)}\mid X_t=x\right]} \nonumber\\
&=
\frac{\mathbb{P}(X_{t+\Delta}=y\mid X_t=x)\;\mathbb{E}_{\mathbb{P}}[G(X_1)\mid X_{t+\Delta}=y]}
{\mathbb{E}_{\mathbb{P}}[G(X_1)\mid X_t=x]}
=
\frac{\mathbb{P}(X_{t+\Delta}=y\mid X_t=x)\;h_{t+\Delta}(y)}
{h_t(x)}.
\label{eq:guided_jump}
\end{align}
Since $h_{t+\Delta}(y)=h_t(y)+o(1)$ as $\Delta\downarrow 0$ for fixed $y$, combining \eqref{eq:base_jump} and \eqref{eq:guided_jump} yields
\[
\mathbb{Q}(X_{t+\Delta}=y\mid X_t=x)
=
u_t(y\mid x)\,\frac{h_t(y)}{h_t(x)}\,\Delta + o(\Delta).
\]
Therefore, under $\mathbb{Q}$ the process is a CTMC with off-diagonal rates $u_t^{G}(y\mid x)=u_t(y\mid x)\,h_t(y)/h_t(x)$, which are precisely the Doob-$h$ guided rates in \eqref{eq:doob_rates}. Together with \eqref{eq:tilt_marginal}, this proves that the CTMC driven by \eqref{eq:doob_rates} has terminal distribution $q_G(\cdot\mid x_0)$.
\end{proof}
\begin{proposition}[ATC scalarization: Pareto optimality and coverage]
\label{prop:atc_pareto}
Let $\mathcal{F}\subseteq\mathcal{X}$ be the feasible set and let $U(x)=\min_{k\in\{1,\dots,K\}} \omega_k(f_k(x)-r_k)+\rho\sum_{k=1}^K \omega_k(f_k(x)-r_k)$ with $\omega\in\mathbb{R}_{>0}^K$ and $\rho>0$. If $x^\star\in\mathcal{F}$ maximizes $U(x)$ over $\mathcal{F}$, then $x^\star$ is Pareto-optimal in $\mathcal{F}$. Assume additionally that $f_k(x)>r_k$ for all $x\in\mathcal{F}$ and all $k$. Then for every Pareto-optimal $x^\star\in\mathcal{F}$, the weight choice
\begin{equation}
\omega_k \coloneqq \frac{1}{f_k(x^\star)-r_k},\qquad k\in\{1,\dots,K\},
\label{eq:cheby_weights}
\end{equation}
makes $x^\star$ a maximizer of the Chebyshev term $T_\omega(x)\coloneqq \min_k \omega_k(f_k(x)-r_k)$ over $\mathcal{F}$.
Moreover, if $S_\omega(x)\coloneqq \sum_{k=1}^K \omega_k(f_k(x)-r_k)$ is bounded above on $\mathcal{F}$ by $S_{\max}$ and the margin
\begin{equation}
\delta \coloneqq T_\omega(x^\star) - \sup_{x\in\mathcal{F}\setminus\{x^\star\}} T_\omega(x)
\label{eq:cheby_margin}
\end{equation}
is positive, then $x^\star$ also maximizes $U(x)=T_\omega(x)+\rho S_\omega(x)$ over $\mathcal{F}$ for all $\rho\in\left(0,\;\delta/(S_{\max}-S_\omega(x^\star))\right)$.
\end{proposition}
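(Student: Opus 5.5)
We prove the three claims in turn; each reduces to an elementary inequality about the scalarizations $T_\omega$ and $S_\omega$.

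\emph{Step 1: Pareto optimality of maximizers.} We argue by contradiction. Suppose $x^\star$ maximizes $U$ over $\mathcal{F}$, yet some $x\in\mathcal{F}$ Pareto-dominates it. Then $f_k(x)\ge f_k(x^\star)$ for all $k$, with strict inequality for some $j$.
\begin{itemize}
\item Since $\omega>0$, each term $\omega_k(f_k-r_k)$ is weakly larger at $x$. Hence $T_\omega(x)\ge T_\omega(x^\star)$.
\item The sum satisfies $S_\omega(x)-S_\omega(x^\star)\ge \omega_j(f_j(x)-f_j(x^\star))>0$.
\end{itemize}
Because $\rho>0$, these give $U(x)>U(x^\star)$, contradicting maximality. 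This is the strict monotonicity of $U$ with respect to Pareto dominance asserted in the main text.

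\emph{Step 2: coverage by the Chebyshev term.} Fix a Pareto-optimal $x^\star\in\mathcal{F}$. Under the positivity assumption $f_k>r_k$ on $\mathcal{F}$, the weights in \eqref{eq:cheby_weights} are well defined and positive. They give $\omega_k(f_k(x^\star)-r_k)=1$ for every $k$, so $T_\omega(x^\star)=1$.

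Suppose some $x\in\mathcal{F}$ had $T_\omega(x)>1$. Then $\omega_k(f_k(x)-r_k)>1$ for every $k$, i.e. $f_k(x)>f_k(x^\star)$ for all $k$. So $x$ strictly dominates $x^\star$ in every coordinate, contradicting Pareto optimality. Hence $T_\omega(x)\le 1=T_\omega(x^\star)$ for all $x\in\mathcal{F}$.

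Only strict dominance is excluded here, so $x^\star$ need not be the unique maximizer of $T_\omega$. This is why the next step needs the margin assumption.

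\emph{Step 3: margin argument for the augmented term.} Take any $x\in\mathcal{F}\setminus\{x^\star\}$. By definition of $\delta$ in \eqref{eq:cheby_margin}, $T_\omega(x)\le T_\omega(x^\star)-\delta$. By the upper bound, $S_\omega(x)\le S_{\max}$. Therefore
\[
U(x)-U(x^\star)\le -\delta+\rho\,\bigl(S_{\max}-S_\omega(x^\star)\bigr).
\]
The right-hand side is negative whenever $\rho<\delta/(S_{\max}-S_\omega(x^\star))$, so $x^\star$ is the maximizer of $U$ over $\mathcal{F}$. If $S_{\max}=S_\omega(x^\star)$, the bound already gives $U(x)\le U(x^\star)-\delta<U(x^\star)$ for every $\rho>0$; we read the interval as $(0,\infty)$ in that case.

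\emph{Main obstacle.} The main subtlety is the bookkeeping in Step 3 rather than any deep argument. The sign of $S_{\max}-S_\omega(x^\star)$ must be handled: it is $\ge 0$ because $x^\star\in\mathcal{F}$. The supremum in $\delta$ is handled directly by the inequality above. Steps 1 and 2 are routine.
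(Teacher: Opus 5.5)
Your proof is correct. Steps 1 and 3 coincide with the paper's: the same dominance contradiction, and the same bound $U(x)-U(x^\star)\le-\delta+\rho\,(S_{\max}-S_\omega(x^\star))$. Your treatment of the degenerate case $S_{\max}=S_\omega(x^\star)$ also fills in a point the paper leaves unaddressed.

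Step 2 is where you differ, and your route is the more careful one. The paper argues forward: since a feasible $x\neq x^\star$ cannot dominate $x^\star$, it claims there must be some $k$ with $f_k(x)<f_k(x^\star)$, and concludes the strict inequality $T_\omega(x)<T_\omega(x^\star)$. That inference misses the case $f(x)=f(x^\star)$ with $x\neq x^\star$, i.e.\ distinct sequences with identical objective vectors. There $T_\omega(x)=T_\omega(x^\star)=1$, so the weak conclusion that $x^\star$ maximizes $T_\omega$ survives but the paper's strict claim does not.

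Your contrapositive argument uses the fact that $T_\omega(x)>1$ forces $f_k(x)>f_k(x^\star)$ for every $k$. It proves exactly the weak statement that is asserted. It needs only that $x^\star$ is not strictly dominated, so it also covers weakly Pareto-optimal points. And it correctly shows why the margin hypothesis $\delta>0$ is a genuine extra assumption in Step 3 rather than a consequence of Step 2.
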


\begin{proof}
Assume $x^\star\in\mathcal{F}$ maximizes $U$ for some $\omega\in\mathbb{R}_{>0}^K$ and $\rho>0$. Suppose for contradiction that $x^\star$ is not Pareto-optimal. Then there exists $y\in\mathcal{F}$ such that $f_k(y)\ge f_k(x^\star)$ for all $k$ and $f_j(y)>f_j(x^\star)$ for at least one index $j$. Since $\omega_k>0$, we have $\omega_k(f_k(y)-r_k)\ge \omega_k(f_k(x^\star)-r_k)$ for all $k$, which implies
\[
\min_k \omega_k(f_k(y)-r_k)\ \ge\ \min_k \omega_k(f_k(x^\star)-r_k).
\]
In addition, $\omega_j(f_j(y)-r_j)>\omega_j(f_j(x^\star)-r_j)$ and all other summands are nondecreasing, so
\[
\sum_{k=1}^K \omega_k(f_k(y)-r_k)\ >\ \sum_{k=1}^K \omega_k(f_k(x^\star)-r_k).
\]
Multiplying the strict inequality by $\rho>0$ and adding the inequality for the minimum term yields $U(y)>U(x^\star)$, contradicting maximality of $x^\star$. Therefore $x^\star$ is Pareto-optimal.

For the coverage statement, fix a Pareto-optimal $x^\star\in\mathcal{F}$ and assume $f_k(x)>r_k$ for all $x\in\mathcal{F}$ and all $k$, so the weights in \eqref{eq:cheby_weights} are well-defined and positive. For this choice, $\omega_k(f_k(x^\star)-r_k)=1$ for every $k$, hence $T_\omega(x^\star)=1$. Consider any $x\in\mathcal{F}$ with $x\neq x^\star$. Since $x^\star$ is Pareto-optimal, $x$ cannot Pareto-dominate $x^\star$, so there exists at least one index $k$ with $f_k(x)<f_k(x^\star)$. For that index,
\[
\omega_k(f_k(x)-r_k)\ <\ \omega_k(f_k(x^\star)-r_k)\ =\ 1,
\]
which implies $T_\omega(x)=\min_j \omega_j(f_j(x)-r_j)<1=T_\omega(x^\star)$. Therefore $x^\star$ maximizes $T_\omega$ over $\mathcal{F}$.

Finally, assume $S_\omega$ is bounded above on $\mathcal{F}$ by $S_{\max}$ and the margin $\delta$ in \eqref{eq:cheby_margin} is positive. For any $x\in\mathcal{F}\setminus\{x^\star\}$, we have $T_\omega(x)\le T_\omega(x^\star)-\delta$ and $S_\omega(x)\le S_{\max}$, so
\[
U(x)\ =\ T_\omega(x)+\rho S_\omega(x)\ \le\ T_\omega(x^\star)-\delta+\rho S_{\max}.
\]
On the other hand, $U(x^\star)=T_\omega(x^\star)+\rho S_\omega(x^\star)$. If $\rho<\delta/(S_{\max}-S_\omega(x^\star))$, then $T_\omega(x^\star)-\delta+\rho S_{\max}<T_\omega(x^\star)+\rho S_\omega(x^\star)=U(x^\star)$, which shows $U(x)<U(x^\star)$ for all $x\neq x^\star$ and therefore $x^\star$ maximizes $U$ over $\mathcal{F}$.
\end{proof}

\paragraph{Consistency of rollout-based guidance.}
The practical algorithm replaces $h_t(x)=\mathbb{E}[G(X_1)\mid X_t=x]$ by the Monte Carlo estimator $\widehat{h}_t(x)$ and restricts attention to a finite candidate set $\mathcal{C}(x)\subset\mathcal{E}(x)$. The following statement controls the one-step error induced by multiplicative approximation of $h_t$ on $\mathcal{C}(x)$.

\begin{proposition}[One-step kernel stability]
\label{prop:approx_kernel}
Fix $(t,x)$ and a candidate set $\mathcal{C}(x)\subset\mathcal{E}(x)$. Let $P(\cdot\mid x)$ be the exact Doob-$h$ transition distribution restricted to $\mathcal{C}(x)$, defined by
\[
P(y\mid x)=\frac{u_t(y\mid x)\,h_t(y)}{\sum_{y'\in\mathcal{C}(x)}u_t(y'\mid x)\,h_t(y')},\qquad y\in\mathcal{C}(x).
\]
Let $\widehat{P}(\cdot\mid x)$ be the rollout-based transition distribution defined by
\[
\widehat{P}(y\mid x)=\frac{u_t(y\mid x)\,\widehat{h}_t(y)}{\sum_{y'\in\mathcal{C}(x)}u_t(y'\mid x)\,\widehat{h}_t(y')},\qquad y\in\mathcal{C}(x).
\]
If there exists $\varepsilon\in[0,1)$ such that
\begin{equation}
(1-\varepsilon)\,h_t(y)\ \le\ \widehat{h}_t(y)\ \le\ (1+\varepsilon)\,h_t(y)
\qquad \text{for all } y\in\mathcal{C}(x),
\label{eq:mult_err_h}
\end{equation}
then for all $y\in\mathcal{C}(x)$,
\begin{equation}
\frac{1-\varepsilon}{1+\varepsilon}
\ \le\
\frac{\widehat{P}(y\mid x)}{P(y\mid x)}
\ \le\
\frac{1+\varepsilon}{1-\varepsilon}.
\label{eq:kernel_mult_bound}
\end{equation}
\end{proposition}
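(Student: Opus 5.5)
The argument is a direct ratio computation. Write $w(y)\coloneqq u_t(y\mid x)\,h_t(y)$ and $\widehat w(y)\coloneqq u_t(y\mid x)\,\widehat h_t(y)$ for $y\in\mathcal{C}(x)$, with normalizers $Z\coloneqq\sum_{y'\in\mathcal{C}(x)}w(y')$ and $\widehat Z\coloneqq\sum_{y'\in\mathcal{C}(x)}\widehat w(y')$. Then
\[
\frac{\widehat P(y\mid x)}{P(y\mid x)}=\frac{\widehat w(y)}{w(y)}\cdot\frac{Z}{\widehat Z}.
\]
I will bound the two factors separately, each by a multiplicative interval derived from \eqref{eq:mult_err_h}, and then multiply the bounds.

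First I need the ratio to be well-defined. The factor $u_t(y\mid x)$ cancels in $\widehat w(y)/w(y)$ whenever $u_t(y\mid x)>0$. This holds for every $y\in\mathcal{C}(x)$, because candidates are drawn from the base rates. Likewise $h_t(y)>0$ on reachable states by the standing assumption of Proposition~\ref{prop:doob_transform}. If some candidate had $u_t(y\mid x)h_t(y)=0$, then $P(y\mid x)=0$, and \eqref{eq:mult_err_h} forces $\widehat h_t(y)=0$ as well, so $\widehat P(y\mid x)=0$. I will therefore restrict the claim to candidates with $P(y\mid x)>0$ and note that the zero cases agree trivially. With this in place, \eqref{eq:mult_err_h} gives directly
\[
1-\varepsilon\le \frac{\widehat w(y)}{w(y)}\le 1+\varepsilon .
\]

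Next I handle the normalizers. Multiplying \eqref{eq:mult_err_h} by $u_t(y'\mid x)\ge 0$ and summing over $y'\in\mathcal{C}(x)$ gives $(1-\varepsilon)Z\le\widehat Z\le(1+\varepsilon)Z$. Here $Z>0$ since at least one candidate carries positive weight, and then $\widehat Z\ge(1-\varepsilon)Z>0$ because $\varepsilon<1$. Inverting yields
\[
\frac{1}{1+\varepsilon}\le \frac{Z}{\widehat Z}\le\frac{1}{1-\varepsilon}.
\]
Combining the lower bounds of the two factors and then the upper bounds gives \eqref{eq:kernel_mult_bound}.

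No step is a real obstacle. The only point needing care is the degenerate one: making sure $Z>0$ and handling candidates with zero weight. This is where the hypothesis $\varepsilon<1$ and the positivity of $h_t$ are used.
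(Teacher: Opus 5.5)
Your proposal is correct and follows the paper's proof essentially line for line: factor the ratio as $(\widehat{w}(y)/w(y))\cdot(Z/\widehat{Z})$, bound the first factor directly by the multiplicative condition and the second by summing that condition over $\mathcal{C}(x)$, then multiply the bounds. Your explicit handling of zero-weight candidates (restricting to $P(y\mid x)>0$), and your check that $Z>0$ and $\widehat{Z}>0$ using $\varepsilon<1$, are small rigor additions that the paper leaves implicit.
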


\begin{proof}
Fix $(t,x)$ and abbreviate $\mathcal{C}=\mathcal{C}(x)$. For each $y\in\mathcal{C}$, define the unnormalized weights
\[
a_y \coloneqq u_t(y\mid x)\,h_t(y),
\qquad
\widehat{a}_y \coloneqq u_t(y\mid x)\,\widehat{h}_t(y),
\]
and the corresponding normalizing constants
\[
Z \coloneqq \sum_{y'\in\mathcal{C}} a_{y'},
\qquad
\widehat{Z} \coloneqq \sum_{y'\in\mathcal{C}} \widehat{a}_{y'}.
\]
By definition, $P(y\mid x)=a_y/Z$ and $\widehat{P}(y\mid x)=\widehat{a}_y/\widehat{Z}$.

Assumption \eqref{eq:mult_err_h} implies $(1-\varepsilon)a_y \le \widehat{a}_y \le (1+\varepsilon)a_y$ for all $y\in\mathcal{C}$. Summing these inequalities over $y\in\mathcal{C}$ yields
\begin{equation}
(1-\varepsilon)Z \le \widehat{Z} \le (1+\varepsilon)Z.
\label{eq:Z_bounds}
\end{equation}
Therefore, for any $y\in\mathcal{C}$,
\[
\frac{\widehat{P}(y\mid x)}{P(y\mid x)}
=
\frac{\widehat{a}_y}{a_y}\cdot\frac{Z}{\widehat{Z}}.
\]
The first factor satisfies $(1-\varepsilon)\le \widehat{a}_y/a_y\le (1+\varepsilon)$, and \eqref{eq:Z_bounds} implies $\frac{1}{1+\varepsilon}\le Z/\widehat{Z}\le \frac{1}{1-\varepsilon}$. Multiplying the corresponding lower and upper bounds gives \eqref{eq:kernel_mult_bound}.
\end{proof}

The multiplicative condition in \eqref{eq:mult_err_h} follows from finite-rollout estimation under mild moment assumptions. The next proposition provides a sufficient condition based on a variance bound.

\begin{proposition}[Finite-rollout multiplicative accuracy]
\label{prop:add_to_mult}
Fix $(t,x)$ and a finite candidate set $\mathcal{C}(x)$. For each $y\in\mathcal{C}(x)$, let $Z^{(y)}\coloneqq G(X_1)$ under the conditional law $(X_t=y)$, and assume $\mathbb{E}[Z^{(y)}]=h_t(y)>0$ and $\mathrm{Var}(Z^{(y)})=\sigma_t^2(y)<\infty$. Let $\widehat{h}_t(y)=\frac{1}{R}\sum_{r=1}^{R} Z_r^{(y)}$ be the Monte Carlo estimator formed from $R$ independent rollouts from $(t,y)$. Then for any $\varepsilon\in(0,1)$,
\begin{equation}
\mathbb{P}\!\left(\,|\widehat{h}_t(y)-h_t(y)| \ge \varepsilon\,h_t(y)\,\right)
\ \le\
\frac{\sigma_t^2(y)}{R\,\varepsilon^2\,h_t(y)^2}.
\label{eq:chebyshev_mult_refined}
\end{equation}
Moreover, for any $\delta\in(0,1)$,
\begin{equation}
\mathbb{P}\!\left(\,(1-\varepsilon)h_t(y)\le \widehat{h}_t(y)\le (1+\varepsilon)h_t(y)\ \text{for all }y\in\mathcal{C}(x)\,\right)
\ \ge\
1-\delta
\label{eq:uniform_mult_event}
\end{equation}
whenever
\begin{equation}
R\ \ge\ \frac{|\mathcal{C}(x)|}{\delta\,\varepsilon^2}\,\max_{y\in\mathcal{C}(x)}\frac{\sigma_t^2(y)}{h_t(y)^2}.
\label{eq:R_condition_refined}
\end{equation}
\end{proposition}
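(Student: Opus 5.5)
The plan is to apply Chebyshev's inequality to each candidate separately and then take a union bound over the finite candidate set $\mathcal{C}(x)$.

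\textbf{Step 1 (moments of the estimator).} Fix $y\in\mathcal{C}(x)$. The rollouts are independent exact continuations of the base CTMC from $(t,y)$, so $Z_1^{(y)},\dots,Z_R^{(y)}$ are i.i.d.\ copies of $Z^{(y)}$. It follows that
\[
\mathbb{E}[\widehat{h}_t(y)]=h_t(y), \qquad \mathrm{Var}(\widehat{h}_t(y))=\sigma_t^2(y)/R .
\]
The mean is finite by integrability and the variance by assumption.

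\textbf{Step 2 (Chebyshev).} Apply Chebyshev's inequality with threshold $\varepsilon h_t(y)$. This threshold is strictly positive because $h_t(y)>0$ and $\varepsilon>0$. We get
\[
\mathbb{P}\bigl(|\widehat{h}_t(y)-h_t(y)|\ge \varepsilon h_t(y)\bigr)\le \frac{\sigma_t^2(y)/R}{\varepsilon^2 h_t(y)^2},
\]
which is exactly \eqref{eq:chebyshev_mult_refined}.

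\textbf{Step 3 (union bound).} The complement of the event in \eqref{eq:uniform_mult_event} is contained in $\bigcup_{y\in\mathcal{C}(x)}\{|\widehat{h}_t(y)-h_t(y)|\ge\varepsilon h_t(y)\}$. The inclusion holds because the event $|\widehat{h}_t(y)-h_t(y)|<\varepsilon h_t(y)$ implies the closed two-sided bound $(1-\varepsilon)h_t(y)\le\widehat{h}_t(y)\le(1+\varepsilon)h_t(y)$. Summing the Step 2 bounds over the finite set gives a failure probability of at most
\[
\frac{|\mathcal{C}(x)|}{R\varepsilon^2}\max_{y}\frac{\sigma_t^2(y)}{h_t(y)^2}.
\]
Under \eqref{eq:R_condition_refined} this quantity is at most $\delta$, which yields \eqref{eq:uniform_mult_event}. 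No independence across different candidates is needed, since the union bound holds for arbitrary dependence.

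\textbf{Expected obstacles.} The argument is essentially routine. The only points needing care are the following.
\begin{itemize}
\item The conditioning on $X_t=y$ must be stated precisely, so that the rollouts are genuinely i.i.d.\ samples of $Z^{(y)}$. This uses the exact-simulation assumption from Section~\ref{sec:short_rollouts}.
\item The candidate set $\mathcal{C}(x)$ must be treated as fixed, or equivalently we condition on it. It is sampled before the rollouts, so the bound holds conditionally on the candidates and hence also unconditionally.
\item The strict-versus-non-strict inequality in passing from the Chebyshev event to the multiplicative sandwich is handled as noted in Step 3.
\end{itemize}

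Combining this proposition with Proposition~\ref{prop:approx_kernel} then gives the kernel ratio bound \eqref{eq:kernel_mult_bound} with probability at least $1-\delta$.
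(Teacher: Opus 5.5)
Your proposal is correct and matches the paper's proof: you apply Chebyshev's inequality to the $R$-sample mean for each candidate, then take a union bound over $\mathcal{C}(x)$ and invoke the condition on $R$. Your observation that the intersection of the strict-inequality events $\{|\widehat{h}_t(y)-h_t(y)|<\varepsilon h_t(y)\}$ is only contained in the closed multiplicative event, rather than equal to it as the paper states, is slightly more precise than the paper's wording, and containment is all the argument needs.
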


\begin{proof}
Fix $y\in\mathcal{C}(x)$. By construction, $\widehat{h}_t(y)$ is the average of $R$ independent copies of $Z^{(y)}$, hence $\mathbb{E}[\widehat{h}_t(y)]=h_t(y)$ and $\mathrm{Var}(\widehat{h}_t(y))=\sigma_t^2(y)/R$. Chebyshev's inequality gives
\[
\mathbb{P}\!\left(\,|\widehat{h}_t(y)-h_t(y)| \ge \varepsilon\,h_t(y)\,\right)
\le
\frac{\mathrm{Var}(\widehat{h}_t(y))}{\varepsilon^2 h_t(y)^2}
=
\frac{\sigma_t^2(y)}{R\,\varepsilon^2\,h_t(y)^2},
\]
which proves \eqref{eq:chebyshev_mult_refined}.

For the uniform statement, define the event $A_y=\{|\widehat{h}_t(y)-h_t(y)| \ge \varepsilon h_t(y)\}$. A union bound implies
\[
\mathbb{P}\!\left(\bigcup_{y\in\mathcal{C}(x)} A_y\right)
\le
\sum_{y\in\mathcal{C}(x)} \mathbb{P}(A_y)
\le
\sum_{y\in\mathcal{C}(x)} \frac{\sigma_t^2(y)}{R\,\varepsilon^2\,h_t(y)^2}
\le
\frac{|\mathcal{C}(x)|}{R\,\varepsilon^2}\,\max_{y\in\mathcal{C}(x)}\frac{\sigma_t^2(y)}{h_t(y)^2}.
\]
Under \eqref{eq:R_condition_refined}, the right-hand side is at most $\delta$, which yields \eqref{eq:uniform_mult_event} because $\bigcap_{y\in\mathcal{C}(x)} A_y^{c}$ is exactly the event that \eqref{eq:mult_err_h} holds for all $y\in\mathcal{C}(x)$.
\end{proof}

Proposition~\ref{prop:approx_kernel} is a one-step statement. The following proposition shows how one-step kernel stability propagates to the multi-step editing procedure when the algorithm is viewed as a discrete sequence of decision steps.

\begin{proposition}[Propagation of one-step stability to multi-step trajectories]
\label{prop:multistep_stability}
Consider a discrete-time process $\{X^{(s)}\}_{s=0}^{S}$ on $\mathcal{X}$ with initial state $X^{(0)}=x^{(0)}$. At each step $s\in\{0,\dots,S-1\}$ and state $x$, a candidate set $\mathcal{C}_s(x)\subseteq\mathcal{X}$ is formed and the next state is sampled from a transition kernel supported on $\mathcal{C}_s(x)$. Let $P_s(\cdot\mid x)$ denote the ideal Doob-$h$ transition distribution restricted to $\mathcal{C}_s(x)$ and let $\widehat{P}_s(\cdot\mid x)$ denote the rollout-based transition distribution restricted to the same candidate set. Assume there exists $\varepsilon\in[0,1)$ such that for every step $s$, every state $x$ reachable under the ideal process, and every $y\in\mathcal{C}_s(x)$ with $P_s(y\mid x)>0$,
\begin{equation}
\frac{1-\varepsilon}{1+\varepsilon}
\ \le\
\frac{\widehat{P}_s(y\mid x)}{P_s(y\mid x)}
\ \le\
\frac{1+\varepsilon}{1-\varepsilon}.
\label{eq:stepwise_ratio}
\end{equation}
Then for any trajectory $\tau=(x^{(0)},x^{(1)},\dots,x^{(S)})$ such that $\prod_{s=0}^{S-1}P_s(x^{(s+1)}\mid x^{(s)})>0$,
\begin{equation}
\left(\frac{1-\varepsilon}{1+\varepsilon}\right)^{S}
\ \le\
\frac{\widehat{\mathbb{P}}(\tau)}{\mathbb{P}(\tau)}
\ \le\
\left(\frac{1+\varepsilon}{1-\varepsilon}\right)^{S},
\label{eq:path_ratio}
\end{equation}
where
\[
\mathbb{P}(\tau)\coloneqq \prod_{s=0}^{S-1}P_s(x^{(s+1)}\mid x^{(s)}),\qquad
\widehat{\mathbb{P}}(\tau)\coloneqq \prod_{s=0}^{S-1}\widehat{P}_s(x^{(s+1)}\mid x^{(s)}).
\]
Moreover, for any terminal state $x^{(S)}$ with $\mathbb{P}(X^{(S)}=x^{(S)})>0$,
\begin{equation}
\left(\frac{1-\varepsilon}{1+\varepsilon}\right)^{S}
\ \le\
\frac{\widehat{\mathbb{P}}(X^{(S)}=x^{(S)})}{\mathbb{P}(X^{(S)}=x^{(S)})}
\ \le\
\left(\frac{1+\varepsilon}{1-\varepsilon}\right)^{S}.
\label{eq:terminal_ratio}
\end{equation}
\end{proposition}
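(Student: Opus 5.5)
The path-ratio bound \eqref{eq:path_ratio} is a telescoping product argument, and the terminal bound \eqref{eq:terminal_ratio} follows by marginalizing over trajectories. I would write $a\coloneqq(1-\varepsilon)/(1+\varepsilon)$ and $b\coloneqq(1+\varepsilon)/(1-\varepsilon)$ for brevity.

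\textbf{Path ratio.} First fix a trajectory $\tau=(x^{(0)},\dots,x^{(S)})$ with $\mathbb{P}(\tau)>0$. Every factor $P_s(x^{(s+1)}\mid x^{(s)})$ is then positive. By induction on $s$, every prefix state $x^{(s)}$ is reachable under the ideal process, so hypothesis \eqref{eq:stepwise_ratio} applies at every step with $y=x^{(s+1)}\in\mathcal{C}_s(x^{(s)})$. Both path probabilities are products over the same index set, which gives
\[
\frac{\widehat{\mathbb{P}}(\tau)}{\mathbb{P}(\tau)}=\prod_{s=0}^{S-1}\frac{\widehat{P}_s(x^{(s+1)}\mid x^{(s)})}{P_s(x^{(s+1)}\mid x^{(s)})}.
\]
Each factor lies in $[a,b]$ and all factors are nonnegative, so the product lies in $[a^S,b^S]$. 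This proves \eqref{eq:path_ratio}.

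One subtlety needs care: the candidate sets must be the same under both processes. I would read the statement as saying that $\mathcal{C}_s(x)$ is a deterministic function of $(s,x)$, or that we condition on the candidate sets. Under that reading the trajectory probability is exactly the product above.

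\textbf{Terminal marginal.} Let $\mathcal{T}(x^{(S)})$ be the set of trajectories with $\mathbb{P}(\tau)>0$ that end at $x^{(S)}$. Then $\mathbb{P}(X^{(S)}=x^{(S)})=\sum_{\tau\in\mathcal{T}}\mathbb{P}(\tau)$, and this sum is positive by assumption. Summing the per-path bounds gives
\[
a^S\,\mathbb{P}(X^{(S)}=x^{(S)})\le\sum_{\tau\in\mathcal{T}}\widehat{\mathbb{P}}(\tau)\le b^S\,\mathbb{P}(X^{(S)}=x^{(S)}).
\]
The lower bound in \eqref{eq:terminal_ratio} follows at once, because $\widehat{\mathbb{P}}(X^{(S)}=x^{(S)})\ge\sum_{\tau\in\mathcal{T}}\widehat{\mathbb{P}}(\tau)$.

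\textbf{Main obstacle.} The upper bound needs one more fact: $\widehat{\mathbb{P}}$ puts no mass on trajectories that have ideal probability zero. I would handle this as follows. If $y\in\mathcal{C}_s(x)$ has $P_s(y\mid x)=0$, then $u_t(y\mid x)h_t(y)=0$, so $u_t(y\mid x)\widehat{h}_t(y)=0$ as well. This holds because $\widehat{h}_t(y)$ is a nonnegative average of $G$ over continuations from $y$. If $h_t(y)=0$, then $G(X_1)=0$ almost surely from $y$, so every rollout returns zero. Hence $\widehat{P}_s(y\mid x)=0$ too.

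Inducting along the trajectory, any path with $\mathbb{P}(\tau)=0$ also has $\widehat{\mathbb{P}}(\tau)=0$. So $\widehat{\mathbb{P}}$ is supported on the same set $\mathcal{T}(x^{(S)})$, and the two-sided bound gives \eqref{eq:terminal_ratio}. If this support argument fails for some pathological implementation of the estimator, I would add absolute continuity, $\widehat{P}_s\ll P_s$, as an explicit hypothesis; it is implicit in the multiplicative condition \eqref{eq:mult_err_h}. I expect this absolute-continuity bookkeeping, together with fixing the candidate sets, to be the only delicate point; the rest is routine.
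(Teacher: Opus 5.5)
Your proposal is correct and follows the same route as the paper: along a path with $\mathbb{P}(\tau)>0$ you bound each per-step ratio and multiply, then you sum the path bounds over $\mathcal{T}(x^{(S)})$ and divide by $\mathbb{P}(X^{(S)}=x^{(S)})$. Your support argument shows that $\widehat{\mathbb{P}}$ puts no mass on paths with $\mathbb{P}(\tau)=0$, which is exactly what the upper bound in \eqref{eq:terminal_ratio} needs; the paper's proof takes this step for granted when it identifies $\sum_{\tau\in\mathcal{T}(x^{(S)})}\widehat{\mathbb{P}}(\tau)$ with $\widehat{\mathbb{P}}(X^{(S)}=x^{(S)})$, and you make it explicit.
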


\begin{proof}
Fix a trajectory $\tau=(x^{(0)},\dots,x^{(S)})$ with $\mathbb{P}(\tau)>0$. By definition of the path probabilities,
\[
\frac{\widehat{\mathbb{P}}(\tau)}{\mathbb{P}(\tau)}
=
\prod_{s=0}^{S-1}
\frac{\widehat{P}_s(x^{(s+1)}\mid x^{(s)})}{P_s(x^{(s+1)}\mid x^{(s)})}.
\]
Since $\mathbb{P}(\tau)>0$, each factor in the denominator is positive, and the assumption \eqref{eq:stepwise_ratio} applies to each step along the trajectory. Applying the lower and upper bounds in \eqref{eq:stepwise_ratio} termwise and multiplying over $s=0,\dots,S-1$ yields \eqref{eq:path_ratio}.

For the terminal bound, fix a terminal state $x^{(S)}$ with $\mathbb{P}(X^{(S)}=x^{(S)})>0$, and let $\mathcal{T}(x^{(S)})$ denote the set of length-$S$ trajectories ending at $x^{(S)}$ with positive $\mathbb{P}(\tau)$. Summing \eqref{eq:path_ratio} over $\tau\in\mathcal{T}(x^{(S)})$ gives
\[
\left(\frac{1-\varepsilon}{1+\varepsilon}\right)^{S}\sum_{\tau\in\mathcal{T}(x^{(S)})}\mathbb{P}(\tau)
\ \le\
\sum_{\tau\in\mathcal{T}(x^{(S)})}\widehat{\mathbb{P}}(\tau)
\ \le\
\left(\frac{1+\varepsilon}{1-\varepsilon}\right)^{S}\sum_{\tau\in\mathcal{T}(x^{(S)})}\mathbb{P}(\tau).
\]
The left and right sums equal $\mathbb{P}(X^{(S)}=x^{(S)})$ and $\widehat{\mathbb{P}}(X^{(S)}=x^{(S)})$, respectively, by definition of the terminal marginal as the sum of path probabilities over trajectories ending at $x^{(S)}$. Dividing both sides by $\mathbb{P}(X^{(S)}=x^{(S)})$ yields \eqref{eq:terminal_ratio}.
\end{proof}

\paragraph{Properties preserved by practical safeguards.}
pCoMole maintains an incumbent feasible terminal sequence among all rollouts executed during sampling and returns the best feasible terminal encountered. Let $\mathcal{S}$ denote the multiset of terminal sequences produced by all rollouts performed during sampling, including the final terminal rollout. When $\mathcal{S}\cap\mathcal{F}\neq\varnothing$, the algorithm returns
\begin{equation}
x_{\mathrm{out}} \in \arg\max_{x\in \mathcal{S}\cap \mathcal{F}} G(x),
\label{eq:incumbent_return}
\end{equation}
with an arbitrary tie-breaking rule. When $\mathcal{S}\cap\mathcal{F}=\varnothing$, the algorithm applies a predefined fallback rule that does not affect the statement below.

\begin{proposition}[Incumbent optimality among evaluated feasible terminals]
\label{prop:incumbent_feasible}
If $\mathcal{S}\cap\mathcal{F}\neq\varnothing$, then $x_{\mathrm{out}}\in\mathcal{F}$ and $G(x_{\mathrm{out}})\ge G(x)$ for all $x\in\mathcal{S}\cap\mathcal{F}$. Moreover, any pruning rule that only removes candidate evaluations or rollouts, without altering the preference function $G$ or the incumbent update rule, preserves \eqref{eq:incumbent_return} with $\mathcal{S}$ interpreted as the multiset of terminal sequences actually evaluated after pruning.
\end{proposition}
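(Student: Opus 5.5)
The first claim follows directly from the definition of $x_{\mathrm{out}}$ in \eqref{eq:incumbent_return}, so the plan is to unpack that definition and then argue the pruning statement separately.

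\emph{Step 1 (well-posedness of the argmax).} When $\mathcal{S}\cap\mathcal{F}\neq\varnothing$, the set $\mathcal{S}\cap\mathcal{F}$ is a nonempty finite collection. This holds because only finitely many rollouts are executed: the decision steps, the $C$ candidates, the $R$ rollouts per candidate and the single final rollout are all finite. Since $G$ is real-valued, a maximizer of $G$ over this set exists. The tie-breaking rule then selects one element $x_{\mathrm{out}}$.

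\emph{Step 2 (feasibility and optimality).} Because $x_{\mathrm{out}}\in\mathcal{S}\cap\mathcal{F}\subseteq\mathcal{F}$, feasibility is immediate. The maximizing property gives $G(x_{\mathrm{out}})\ge G(x)$ for every $x\in\mathcal{S}\cap\mathcal{F}$. I would also note that on $\mathcal{F}$ we have $G=\exp(\beta U)$ with $\beta>0$. Hence the same $x_{\mathrm{out}}$ maximizes $U$ over the evaluated feasible terminals, which links the result to Proposition~\ref{prop:atc_pareto}.

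\emph{Step 3 (invariance under pruning).} The key step is to formalize the incumbent update as a running maximum. Let $x^{\mathrm{inc}}_j$ be the incumbent after the $j$-th evaluated terminal $z_j$. The rule is: if $z_j\in\mathcal{F}$ and either $G(z_j)>G(x^{\mathrm{inc}}_{j-1})$ or no incumbent exists yet, replace the incumbent by $z_j$.

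By induction on $j$, $x^{\mathrm{inc}}_j$ is a $G$-maximizer over $\{z_1,\dots,z_j\}\cap\mathcal{F}$. This statement depends only on the sequence of terminals actually evaluated, not on how that sequence was generated. A pruning rule only shrinks or reorders which $z_j$ occur. It leaves $G$ and the update rule unchanged, so the induction applies verbatim with $\mathcal{S}$ replaced by the post-pruning multiset. Therefore \eqref{eq:incumbent_return} still holds.

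\emph{Main obstacle.} The only subtlety is the tie-breaking and order dependence: the running maximum may return a different argmax element than a batch argmax would. This is harmless, because the statement only asserts membership in the argmax set under an arbitrary tie-breaking rule.
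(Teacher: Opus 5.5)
Your proof is correct and follows the paper's route: the first claim is read off from the definition \eqref{eq:incumbent_return}, and the pruning claim rests on the fact that the incumbent rule acts only on the terminals actually evaluated. Your running-maximum induction makes explicit what the paper asserts in one line, and it does not rely on the paper's inclusion $\mathcal{S}_{\mathrm{pruned}}\subseteq\mathcal{S}_{\mathrm{full}}$, which need not hold once pruning changes the visited trajectory and hence the later rollouts (for the same reason your phrase ``only shrinks or reorders'' is inaccurate, but this is harmless because the induction never uses it).
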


\begin{proof}
Assume $\mathcal{S}\cap\mathcal{F}\neq\varnothing$. By definition \eqref{eq:incumbent_return}, $x_{\mathrm{out}}$ is selected from $\mathcal{S}\cap\mathcal{F}$ and maximizes $G$ over this set, which implies $x_{\mathrm{out}}\in\mathcal{F}$ and $G(x_{\mathrm{out}})\ge G(x)$ for all $x\in\mathcal{S}\cap\mathcal{F}$.

For the pruning statement, let $\mathcal{S}_{\mathrm{full}}$ denote the multiset of terminal sequences that would be produced without pruning and let $\mathcal{S}_{\mathrm{pruned}}$ denote the multiset of terminal sequences actually produced with pruning. A pruning rule that only removes evaluations can only decrease the set of executed rollouts, hence $\mathcal{S}_{\mathrm{pruned}}\subseteq \mathcal{S}_{\mathrm{full}}$ as multisets. The incumbent update rule is applied to the terminal sequences that are evaluated, so under pruning the algorithm returns an element of $\arg\max_{x\in \mathcal{S}_{\mathrm{pruned}}\cap\mathcal{F}} G(x)$ whenever $\mathcal{S}_{\mathrm{pruned}}\cap\mathcal{F}\neq\varnothing$. This is exactly \eqref{eq:incumbent_return} with $\mathcal{S}$ interpreted as the evaluated multiset after pruning, which completes the proof.
\end{proof}

\paragraph{Approximate guarantees for rollout-based pCoMole with pruning.}
The ideal Doob-$h$ construction yields the preference-tilted terminal law in Proposition~\ref{prop:doob_transform_proof}, and Pareto-optimality of ATC maximizers follows from Proposition~\ref{prop:atc_pareto}. The practical algorithm replaces $h_t$ by the rollout estimator $\widehat{h}_t$ and evaluates guidance on a sampled candidate set. Proposition~\ref{prop:add_to_mult} shows that, with sufficiently many rollouts, $\widehat{h}_t$ satisfies the multiplicative accuracy condition \eqref{eq:mult_err_h} on a fixed candidate set with high probability, and Proposition~\ref{prop:approx_kernel} then implies that the resulting one-step selection distribution is a controlled multiplicative perturbation of the ideal Doob-$h$ choice on that set. Proposition~\ref{prop:multistep_stability} propagates this one-step stability across decision steps, implying that the terminal distribution induced by rollout-based guidance remains multiplicatively close to the ideal guided terminal distribution over any fixed editing horizon, and therefore approximately preserves the preference tilting that drives multi-objective improvement under a given weight vector.

Pruning and incumbent caching affect which rollouts are executed but do not alter the preference function $G$ used to evaluate terminals. Proposition~\ref{prop:incumbent_feasible} guarantees that whenever any feasible terminal is generated by the executed rollouts, the returned output is feasible and maximizes $G$ among all feasible terminals that were evaluated. This safeguard preserves feasibility and best-observed preference within the evaluated set, while the approximation results above quantify how rollout-based guidance approaches the ideal Doob-$h$ behavior as rollout and candidate budgets increase. Coverage of distinct Pareto trade-offs is obtained by varying the weight vector across runs, and the same stability reasoning applies to each run independently, whereas pruning may reduce empirical coverage under a fixed computational budget by reducing the set of evaluated terminals.

\section{Ablation Studies}
\label{sec:ablation}

Beyond the GFP objective-guidance ablation in Section~\ref{sec:gfp}, we evaluate the contributions of terminal constraints, scalarization, Cas9-specific sampling choices, and sampling budgets.

\begin{table}[t]
\centering
\tiny
\setlength{\tabcolsep}{4pt}
\renewcommand{\arraystretch}{1.5}
\caption{\textbf{Constraint ablation for peptidomimetic binder design on 3AMA and 5JHF.} For each target, we report the property scores of the known binder (“Baseline”) and the average scores over 100 pCoMole-generated sequences under different constraint settings. Length is the number of tokenized SMILES tokens. PM denotes the hard peptidomimetic constraint and LEN denotes the soft length constraint. $R_{\mathrm{PM}}$ and $R_{\mathrm{LEN}}$ are the fractions (out of 100) of generated sequences that satisfy the PM and LEN constraints, respectively. Please refer to Section~\ref{sec:ablation} for details.}
\label{tab:constraint_ablation}

\begin{tabular}{cc|cccccccccc}
\toprule
\textbf{PDB}&\textbf{Constraints}& \textbf{Length}& \textbf{Non-Toxicity}&  \textbf{Solubility}& \textbf{Permeability}& \textbf{Half-life (h)}& \textbf{Affinity}& \textbf{Motif Score}&  \textbf{Specificity} & \textbf{$R_\text{PM}$}& \textbf{$R_\text{LEN}$}\\
\midrule
\multirow{5}{*}{3AMA}& \makecell{Baseline\\(Known Binder)}& 141& 0.901& 0.854& 0.2528& 1.4508& 0.6731& 0.9171& 0.9941 & \textemdash&\textemdash\\
&PM+LEN& 129& 0.8085& 0.4991& 0.3049& 4.8599& 0.6946& 0.9798& 0.9429 & 1.00&1.00\\
&PM only& 112& 0.7444& 0.5769& 0.3537& 14.4684& 0.6666& 0.8521& 0.9482 & 1.00&0.27\\
&LEN only& 131& 0.8261& 0.486& 0.3081& 4.9182& 0.6947& 0.9782& 0.9443 & 1.00&1.00\\
&None& 21& 0.5104& 0.8422& 0.6186& 68.4154& 0.4696& 0.4247& 0.9613 & 0.10&0.02\\
\midrule
\multirow{5}{*}{5JHF}&\makecell{Baseline\\(Known Binder)}& 95& 0.8029& 0.8744& 0.2431& 1.5936& 0.5457& 0.8484& 0.9926 & \textemdash&\textemdash\\
&PM+LEN& 70& 0.6989& 0.6496& 0.3296& 9.2383& 0.6327& 0.9327& 0.9576 & 1.00&1.00\\
&PM only& 54& 0.664& 0.717& 0.4032& 19.8692& 0.5861& 0.7769& 0.97 & 1.00&0.42\\
&LEN only& 72& 0.7131& 0.6319& 0.3342& 8.6049& 0.6319& 0.9337& 0.957 & 0.99&1.00\\
&None& 7& 0.3851& 0.7816& 0.7176& 175.339& 0.4342& 0.5186& 0.9997 & 0.00&0.00\\
\bottomrule
\end{tabular}
\end{table}
\paragraph{Constraint Ablation.}
We ablate the terminal constraints used in peptidomimetic binder design to quantify their impact (Table~\ref{tab:constraint_ablation}). The peptidomimetic hard constraint (PM) enforces chemical validity, while the soft length constraint (LEN) encourages shrinkage without reducing below half of the original size. Enforcing both constraints yields short, feasible designs with strong predicted properties and 100\% satisfaction rates. Removing LEN leads to over-shrinking and degraded trade-offs, often sacrificing binding-related properties for extended half-life. Removing PM rarely produces invalid outputs due to the strong inductive bias of the underlying Edit Flow, but explicit enforcement provides an additional safety layer against off-manifold samples. Removing both constraints results in degenerate behavior, including extreme length collapse and loss of affinity and motif consistency, underscoring the necessity of terminal constraints for controlled shrinkage.

\begin{table}[t]
\centering
\footnotesize
\setlength{\tabcolsep}{3pt}
\renewcommand{\arraystretch}{1.3}
\caption{\textbf{Pareto coverage comparison on PDB 4O56 under different scalarizations and weight vectors.} We evaluate three scalarization choices (ATC, Tchebycheff, and linear weighted sum) for multi-objective peptidomimetic binder design optimizing solubility, permeability, and binding affinity. For each method, we run pCoMole with three sets of objective weight vectors and report the average properties of 100 generated sequences. The rightmost column reports each method’s empirical Pareto coverage against a reference front constructed by pooling solutions across all weight settings and methods (higher is better).}
\label{tab:coverage}

\begin{tabular}{cc|cccc|c}
\toprule
\textbf{Scalarization}& \textbf{\makecell{Objective Weights\\Solubility\hspace{0.75em}Permeability\hspace{0.75em}Affinity}}& \textbf{Length}&  \textbf{Solubility}& \textbf{Permeability}& \textbf{Affinity}& \textbf{\makecell{Coverage vs.\\Reference Front}}\\
\midrule
\multirow{3}{*}{ATC}& 1\hspace{5em}1\hspace{5em}8& 38& 0.6856& 0.3909& 0.7488& \multirow{3}{*}{0.7297}\\
&1\hspace{5em}8\hspace{5em}1& 34& 0.7131& 0.4724& 0.6832& \\
&8\hspace{5em}1\hspace{5em}1& 37& 0.8002& 0.3851& 0.6381& \\
\midrule
\multirow{3}{*}{Tchebycheff}&1\hspace{5em}1\hspace{5em}8& 34& 0.6973& 0.4829& 0.6748& \multirow{3}{*}{0.6486}\\
&1\hspace{5em}8\hspace{5em}1& 36& 0.7248& 0.4002& 0.701& \\
&8\hspace{5em}1\hspace{5em}1& 35& 0.7197& 0.4625& 0.6509& \\
\midrule
\multirow{3}{*}{Weighted Sum}&1\hspace{5em}1\hspace{5em}8& 38& 0.7064& 0.4052& 0.7711& \multirow{3}{*}{0.5946}\\
&1\hspace{5em}8\hspace{5em}1& 34& 0.7137& 0.4866& 0.6538& \\
&8\hspace{5em}1\hspace{5em}1& 37& 0.8051& 0.3702& 0.6348& \\
\bottomrule
\end{tabular}
\end{table}
\paragraph  {Pareto Coverage Under Different Scalarizations.}
We assess the effect of scalarization and weight choice on Pareto coverage for PDB~4O56, optimizing solubility, permeability, and binding affinity (Table~\ref{tab:coverage}). Across representative weight vectors, varying the weights produce the expected trade-offs, confirming controllable navigation of the Pareto front. The augmented Tchebycheff (ATC) utility achieves the highest empirical coverage against a pooled reference front, while standard Tchebycheff yields moderate coverage and linear weighted sums perform worst, consistent with their tendency to under-explore non-convex regions. These results support ATC as the default scalarization in pCoMole.

\begin{table}[t]
\setlength{\tabcolsep}{3pt}
\renewcommand{\arraystretch}{1.3}
\caption{\textbf{Deletion-rate scaling ablation for pCoMole on St1Cas9 edits.}
In all cases, the corresponding model was used to generate 50 sequences. ``Base EF'' denotes sampling from the Edit Flows directly without any guidance. ``s'' denotes the deletion rate scaling term.}
\label{tab:del_rate_ablation}
\begin{center}
\begin{tabular}{c|c|c|c|c}
\toprule
\textbf{Model} &
\textbf{Length Diff.} &
\textbf{Cas9 Likeli.} &
\textbf{PAM Distr. CE} &
\textbf{PAM Match \%} \\
\midrule
Base EF (\(s=1\)) &
\(1121 \rightarrow 1119\;\) &
\(1.00 \rightarrow 1.00\) &
\(0.91 \rightarrow 0.89\) &
\(36\) \\
Base EF (\(s=1000\)) &
\(1121 \rightarrow 1004\;\) &
\(1.00 \rightarrow 0.70\) &
\(0.91 \rightarrow 0.78\) &
\(0\) \\
pCoMole (\(s=1\)) &
\(1121 \rightarrow 1118\;\) &
\(1.00 \rightarrow 1.00\) &
\(0.91 \rightarrow 0.94\) &
\(100\) \\
pCoMole (\(s=1000\)) &
\(1121 \rightarrow 1016\;\) &
\(1.00 \rightarrow 0.95\) &
\(0.91 \rightarrow 0.89\) &
\(100\) \\
\bottomrule
\end{tabular}
\end{center}
\end{table}
\paragraph{Deletion-rate scaling ablation.}
For Cas9 shrinkage, we amplify the deletion rate during sampling to encourage substantial length reduction from the input wild-type sequence. To test whether this sampling bias alone explains the results, we compare pCoMole with the unguided base Edit Flow under matched deletion-rate scaling and insertion-disabled sampling (Table~\ref{tab:del_rate_ablation}). The unguided Edit Flow can shorten the sequence, but it fails to preserve functional feasibility: at $1000\times$ deletion scaling, PAM match collapses to 0\% and Cas9-likeness drops substantially. In contrast, pCoMole maintains high Cas9-likeness and exact PAM matching while achieving comparable shrinkage. Thus, deletion-rate scaling controls the strength of the shrinkage pressure, while Doob-$h$ guidance controls whether the resulting sequence remains feasible.

\begin{table}[t]
\caption{\textbf{Ablation of allowing pCoMole edit steps to make insertions.} 
In both cases, pCoMole was used to generate 50 shrunk variants of St1Cas9.}
\label{tab:cas9_ablation}
\begin{center}
        \begin{tabular}{c|cccc}
        \toprule
        \textbf{pCoMole Config.} &
        \textbf{Length Diff.} &
        \textbf{Cas9 Likeli.} &
        \textbf{PAM Distr. CE} &
        \textbf{PAM Match \%}\\
        \midrule
        pCoMole w/ no ins. &
        $1121 \rightarrow 1021\; $ &
        $1.00 \rightarrow 0.98$ &
        $0.91 \rightarrow 0.89$ &
        100 \\
        pCoMole w/ ins.&
        $1121 \rightarrow 1049\; $ &
        $1.00 \rightarrow 0.97$ &
        $0.91 \rightarrow 0.90$ &
        100 \\
        \bottomrule
        \end{tabular}
\end{center}
\end{table}

\paragraph{Disabling Insertions Ablation.}
We ablate insertions during editing on the Cas9 task (Table~\ref{tab:cas9_ablation}). Because the underlying Edit Flow is trained as an unconditional generator, insertions can counteract shrinkage; disabling them consistently yields greater length reduction with negligible impact on Cas9-likelihood or PAM preservation, making insertion-free editing the preferred configuration for Cas9 shrinkage in pCoMole.

\begin{table}[t]
\centering
\scriptsize
\setlength{\tabcolsep}{4pt}
\renewcommand{\arraystretch}{1.5}
\caption{\textbf{Sampling-step ablation for peptidomimetic binder design on 2KXQ and 8EYA.} For each target, the row with \#-steps$=0$ reports the known binder (baseline), and the remaining rows report averages over 100 pCoMole-generated sequences under different numbers of sampling steps. Length is the number of tokenized SMILES tokens. We also report the average wall-clock time (seconds) to generate one sequence for each step setting.}
\label{tab:step_ablation}

\begin{tabular}{cc|ccccccccc}
\toprule
\textbf{PDB}&\textbf{\#-steps}& \textbf{Length}& \textbf{Non-Toxicity}&  \textbf{Solubility}& \textbf{Permeability}& \textbf{Half-life (h)}& \textbf{Affinity}& \textbf{Motif Score}&  \textbf{Specificity} & \textbf{Time (s)}\\
\midrule
\multirow{5}{*}{2KXQ}& 0& 130& 0.5343& 0.8607& 0.2882& 2.2298& 0.6373& 0.9528& 0.8556& \textemdash\\
&10& 116& 0.8017& 0.787& 0.3000& 3.5262& 0.7565& 0.9719& 0.7786& 69.73\\
&20& 111& 0.7683& 0.7279& 0.3113& 4.4267& 0.8055& 0.9690& 0.7882& 150.94\\
&30& 108& 0.7626& 0.7036& 0.3180& 5.3392& 0.8106& 0.9705& 0.7837& 257.37\\
&50& 99& 0.7541& 0.6664& 0.3330& 6.7532& 0.8037& 0.9699& 0.7928& 530.83\\
\midrule
\multirow{5}{*}{8EYA}&0& 114& 0.7220& 0.8249& 0.2410& 4.2260& 0.6285& 0.9392& 0.9967& \textemdash\\
&10& 99& 0.7137& 0.7873& 0.2848& 4.6720& 0.7010& 0.9796& 0.9840& 67.33\\
&20& 96& 0.6737& 0.7670& 0.3054& 5.8047& 0.7306& 0.9796& 0.9800& 139.69\\
&30& 93& 0.6676& 0.7459& 0.3152& 6.9622& 0.7432& 0.9791& 0.9765& 215.27\\
&50& 84& 0.6721& 0.7146& 0.3504& 11.0632& 0.7511& 0.9788& 0.9703& 444.16\\
\bottomrule
\vspace{1em}
\end{tabular}
\end{table}
\begin{table}[t]
\centering
\scriptsize
\setlength{\tabcolsep}{3pt}
\renewcommand{\arraystretch}{1.5}
\caption{\textbf{Rollout ablation for peptidomimetic binder design on 2QSC and 7ZPY.} For each target, the row with \#-rollouts$=0$ reports the known binder (baseline), and the remaining rows report averages over 100 pCoMole-generated sequences under different numbers of rollouts. Length is the number of tokenized SMILES tokens. We also report the average wall-clock time (seconds) to generate one sequence for each rollout setting.}
\label{tab:rollout_ablation}

\begin{tabular}{cc|ccccccccc}
\toprule
\textbf{PDB}&\textbf{\#-rollouts}& \textbf{Length}& \textbf{Non-Toxicity}&  \textbf{Solubility}& \textbf{Permeability}& \textbf{Half-life (h)}& \textbf{Affinity}& \textbf{Motif Score}&  \textbf{Specificity} & \textbf{Time (s)}\\
\midrule
\multirow{5}{*}{2QSC}& 0& 111& 0.8861& 0.7156& 0.2874& 1.7210& 0.6775& 0.9323& 0.9621& \textemdash\\
&5& 87& 0.7084& 0.6840& 0.2940& 5.9379& 0.7745& 0.9371& 0.9470 & 78.79\\
&10& 86& 0.7172& 0.6858& 0.2974& 6.2827& 0.7809& 0.9287& 0.9477& 124.57\\
&20& 87& 0.7447& 0.6938& 0.3063& 6.7057& 0.7826& 0.9335& 0.9464& 216.92\\
&30& 84& 0.7175& 0.6967& 0.3042& 7.1078& 0.7756& 0.9294& 0.9462& 306.83\\
\midrule
\multirow{5}{*}{7ZPY}&0& 84& 0.4621& 0.7744& 0.2787& 2.5357& 0.5218& 0.3083& 0.9954& \textemdash\\
&5& 63& 0.6625& 0.7089& 0.4353& 15.4066& 0.7062& 0.6002& 0.9914& 131.07\\
&10& 60& 0.6886& 0.7144& 0.4490& 17.4544& 0.6988& 0.6288& 0.9936& 205.87\\
&20& 59& 0.6694& 0.7122& 0.4435& 18.1834& 0.6940& 0.6113& 0.9920& 344.24\\
&30& 61& 0.6729& 0.7230& 0.4505& 19.0042& 0.7020& 0.616& 0.9922& 271.69\\
\bottomrule
\end{tabular}
\vspace{1em}
\end{table}
\begin{table}[t]
\centering
\tiny
\setlength{\tabcolsep}{4pt}
\renewcommand{\arraystretch}{1.5}
\caption{\textbf{Number of candidates ablation for peptidomimetic binder design on 2O9V and 8PFT.} For each target, the row with \#-candidates$=0$ reports the known binder (baseline), and the remaining rows report averages over 100 pCoMole-generated sequences under different numbers of candidates. Length is the number of tokenized SMILES tokens. We also report the average wall-clock time (seconds) to generate one sequence for each candidate-count setting.}
\label{tab:candidate_ablation}

\begin{tabular}{cc|ccccccccc}
\toprule
\textbf{PDB}&\textbf{\#-candidates}& \textbf{Length}& \textbf{Non-Toxicity}&  \textbf{Solubility}& \textbf{Permeability}& \textbf{Half-life (h)}& \textbf{Affinity}& \textbf{Motif Score}&  \textbf{Specificity} & \textbf{Time (s)}\\
\midrule
\multirow{5}{*}{2O9V}& 0& 65& 0.0826& 0.8305& 0.2977& 2.7590& 0.4128& 0.9671& 0.8806& \textemdash\\
&10& 47& 0.6998& 0.7226& 0.4626& 16.5440& 0.6143& 0.9692& 0.8327& 41.42\\
&20& 45& 0.7330& 0.7345& 0.4633& 17.5760& 0.6241& 0.9646& 0.8309& 76.76\\
&30& 46& 0.7511& 0.7323& 0.4574& 19.6885& 0.6278& 0.9658& 0.8339& 58.32\\
&50& 47& 0.7532& 0.7379& 0.4541& 20.1591& 0.6242& 0.9683& 0.8312& 163.55\\
\midrule
\multirow{5}{*}{8PFT}&0& 143& 0.6156& 0.7538& 0.185& 6.1210& 0.5750& 0.6579& 0.9502& \textemdash\\
&10& 124& 0.5925& 0.7336& 0.2478& 6.2114& 0.7314& 0.7429& 0.9300& 113.14\\
&20& 120& 0.5730& 0.7208& 0.2586& 6.1821& 0.7381& 0.7548& 0.9255& 128.26\\
&30& 115& 0.5833& 0.7183& 0.2619& 6.6555& 0.7481& 0.7498& 0.9298& 166.76\\
&50& 116& 0.5679& 0.7094& 0.2642& 6.4031& 0.7524& 0.7631& 0.9250& 480.98\\
\bottomrule
\end{tabular}
\end{table}
\paragraph{Hyperparameter Ablation.}
We evaluate sensitivity to three sampling hyperparameters: number of sampling steps, rollouts per candidate, and candidates per step (Tables~\ref{tab:step_ablation}, \ref{tab:rollout_ablation}, \ref{tab:candidate_ablation}). Increasing any budget improves shrinkage quality and predicted properties, particularly affinity and motif preservation, but with clear diminishing returns and increased runtime. More steps enable progressive refinement, additional rollouts reduce myopic decisions at higher cost, and more candidates improve exploration but saturate beyond moderate values. These trends motivate intermediate settings for all experiments (Appendix \ref{sec:sampling}). pCoMole is relatively robust to these choices because Doob-$h$ guidance concentrates probability mass on high-utility trajectories, so increased budgets often refine already strong preferences.

\section{Edit Flow details}

\subsection{Dataset Curation}
\paragraph{UniRef Dataset.} We trained our general protein models on two datasets of UniRef sequences with differing lengths. To curate the dataset for our UniRef$_{S}$ Edit Flow, we sampled 30k reviewed UniRef sequences with length $\leq350$. We distributed these sampled sequences into training, test, and validation sets following an 80-10-10 split. The training data for UniRef$_{L}$ was curated by splitting UniRef50 data into buckets based on the distribution of lengths in our Cas9 dataset. We sampled sequences from each bucket such that the resulting set was as large as possible while still maintaining the Cas9 dataset's length distribution. We used MMSEQS to reduce redundancy of the set, and distributed sequences into training, test, and validation sets following an 80-10-10 split. The final dataset was $\sim$30k sequences. 

\paragraph{GFP Dataset.} We curate our GFP dataset from two sources: FPBase (an online dataset of fluorescent proteins) and UniRef. For FPBase, we extract basic proteins with an emission maximum within a desired range. From UniRef, we extract sequences in the PF01353 PFam family. We use MMSEQS to remove redundancy from the set and distribute sequences into training, test, and validation sets following a 80-10-10 split. 

\paragraph{Cas9 Dataset.}
We curate a Cas9-family protein dataset from the CRISPR-Cas Atlas \citep{ruffolo2024design}, a large, publicly available resource that aggregates CRISPR-Cas systems and associated effector protein sequences across diverse microbial genomes. Starting from the full Atlas release, we filter entries to retain only sequences annotated as Cas9 (discarding other Cas effectors and non-Cas proteins). This yields an initial pool of roughly $10^5$ candidate Cas9 sequences.

To reduce redundancy and obtain a compact but representative training set, we apply MMseqs2 redundancy reduction. Specifically, we cluster sequences by similarity and retain a representative sequence per cluster, reducing the dataset to around $2\times 10^4$ diverse Cas9 sequences while preserving broad family coverage.

Finally, to prevent information leakage across splits, we assign train/validation/test sets at the cluster level rather than at the individual-sequence level. We perform a final clustering pass and then allocate entire clusters into the training, validation, and test splits using an 80-10-10 partition. This ensures that highly similar sequences do not appear in multiple splits, making generalization metrics more meaningful.

\paragraph{Peptidomimetics Dataset.} For training the peptidomimetics Edit Flow, we curated a peptidomimetic sequence dataset by combining two sources: a commercial peptidomimetic library from ChemDiv and a manually expanded set generated from peptide SMILES using an RDKit-based transformation pipeline. For each input peptide, our script constructs candidate peptidomimetics through chemically motivated edits, including peptoid conversion, triazole isostere replacement, and for noncyclic peptides, N-terminal and C-terminal residue deletion, as well as a combined peptoid plus isostere edit. We further filter the generated peptidomimetics using synthetic accessibility and SCScore thresholds to remove implausible chemistries, and we remove exact duplicates to keep a clean dataset. Finally, we transform the remaining SMILES strings into SELFIES strings and use these SELFIES sequences to train the peptidomimetics Edit Flow.

\subsection{Model Architecture}
The backbone of our Edit Flow is a Transformer encoder that parameterizes a continuous time Markov chain over discrete edit operations. Input token sequences are first mapped to continuous vectors using a learned token embedding layer, and the normalized flow time is embedded with a time embedding module. Except for the peptidomimetic Edit Flow that applies a learned token embedding layer, the other Edit Flows apply a pre-trained ESM-2-650M model to compute the sequence embeddings \citep{lin2023evolutionary}. These two embeddings are combined and processed by a stack of Transformer blocks equipped with multi-head self attention, enabling the model to capture long range dependencies along the sequence. We use rotary positional embeddings to provide length generalization while avoiding fixed absolute position tables. Residual connections and layer normalization stabilize optimization, and a final layer normalization produces the hidden representation used by multiple output heads.

To model edit dynamics, the network predicts both the total edit rate and the distribution over edit types, using a reparameterization that separates the scalar intensity from the categorical choice among insertion, deletion, and substitution. Specifically, one head outputs a nonnegative total rate through a Softplus transformation, while a second head produces logits over the three edit types. Conditioned on the chosen type, token level heads output vocabulary logits for insertion and substitution, which define the token proposals at each position. This factorized design yields a flexible, stable parameterization of per position edit intensities and token proposals, and supports variable length sequence generation through insertion and deletion operations.

\subsection{Training Strategy}
During Edit Flow training, we use the least optimal alignment objective to supervise the edit process with an alignment between the source and target token sequences. For each terminal sequence $x_1$ in the dataset, we construct an initial sequence $x_0$ by sampling tokens from the vocabulary, with the initial length drawn uniformly from $[0,\,2|x_1|]$ for the Peptidomimetic Edit Flow and $[(1-a)|x_1|,\,(1+a)|x_1|]$ for the protein Edit Flows. For UniRef$_S$, $a=0.5$ and for UniRef$_L$ and Cas9 $a=0.25$. This randomized initialization exposes the model to a broad range of starting lengths and content, encouraging robust insertion and deletion behavior and avoiding over-reliance on near identity alignments. Given the sampled $x_0$ and target $x_1$, we compute the least optimal alignment and train the model to match the induced edit dynamics, including both edit type intensities and token proposal distributions, along the continuous time editing trajectory.

\paragraph{UniRef and Cas9 Edit Flow models.}
Training was conducted on one NVIDIA RTX A6000 GPU with 48\,GB of VRAM. The UniRef$_S$, UniRef$_L$, and Cas9 Edit Flow models were trained for around 30, 10, and 100 epochs respectively using the AdamW optimizer. We use a learning rate of $1\times 10^{-5}$ and weight decay of 0.03. A learning-rate scheduler with cosine decay is applied. We use the same warmup and minimum learning-rate setting as in our other Edit Flow trainings, with both the initial and minimum learning rates set to $1\times 10^{-5}/10 = 1\times 10^{-6}$. The network architecture uses a model dimension of 768, 8 Transformer layers, and 12 attention heads.

\paragraph{GFP Edit Flow model.}
Training was conducted on one NVIDIA RTX A6000 GPU with 48\,GB of VRAM. We initialize the GFP Edit Flow from the pre-trained UniRef$_S$ checkpoint and fine-tune on the curated GFP dataset. The model is trained using the AdamW optimizer with weight decay 0.03. We use a learning rate of $1\times 10^{-6}$ and apply a learning-rate scheduler with 10 warm-up epochs and cosine decay. As above, we set both the initial and minimum learning rates to $1\times 10^{-6}/10 = 1\times 10^{-7}$. The network architecture uses a model dimension of 768, 8 Transformer layers, and 12 attention heads.

\paragraph{Peptidomimetic Edit Flow.} Training was conducted on one H100 NVIDIA NVL GPU with 94 GB of VRAM. The model was trained for 100 epochs using the AdamW optimizer and a learning rate of 3e-4 with weight decay of 0.03. A learning rate scheduler with 10 warm-up epochs and cosine decay was used, with initial and minimum learning rates both 3e-5. The network architecture included a model dimension of 768, 8 transformer layers, and 12 attention heads. 

\paragraph{Diversity Evaluation.}
\label{sec:diversity}
To quantify sample diversity for each Edit Flow, we draw 1000 sequences from the model and report an alignment-free $k$-mer Jaccard diversity computed on the generated sequences. For each sequence $s$, we form the set of all contiguous $k$-grams (we use $k=3$) and estimate the average Jaccard similarity over a large number of randomly sampled sequence pairs. The reported diversity is defined as one minus this average similarity, $\mathrm{Div}_{k}=1-\mathbb{E}[\mathrm{Jaccard}(K_k(s_i),K_k(s_j))]$, which is robust to variable-length sequences and captures lexical diversity without requiring sequence alignment.

\section{Score Model Details}

\subsection{Cas9 Score Models}
\paragraph{Cas9 Classifier.}
We train a binary Cas9 classifier to predict whether a candidate protein sequence is a valid Cas9, and use this score both as an objective signal and as a quality-control component in Cas9 shrinkage experiments. The classifier is a lightweight MLP on top of a frozen ESM-2-650M sequence encoder: given an amino-acid sequence, we compute per-token embeddings with ESM-2-650M, apply mean pooling over non-padding tokens to obtain a fixed-length sequence representation, and then pass this representation through a small multi-layer perceptron to output a single logit (converted to a probability with a sigmoid).

We construct the positive set by sampling from the same curated Cas9 sequence corpus used to train the Cas9 Edit Flow. The negative set is designed to be challenging and length-matched: it includes (i) UniRef protein sequences sampled to have a similar length distribution to Cas9s, (ii) non-Cas9 Cas effector sequences from the CRISPR-Cas Atlas (e.g., other Cas families), and (iii) broken or corrupted Cas9-like sequences (e.g., truncated or otherwise implausible Cas9 sequences) to explicitly teach the classifier to reject malformed Cas9 candidates. Training was conducted on one NVIDIA RTX A6000 GPU with 48\,GB of VRAM.

% \paragraph{PAM prediction.} For PAM prediction, we use the full-length protein2PAM model, available at `Profluent-Bio/protein2pam-cas9\textunderscore
% full`. Given that Protein2PAM outputs a distribution over predicted PAMs, enforcing exact PAM matching requires setting a threshold to determine what level of model confidence constitutes a nucleotide prediction vs. 'N'. Throughout all experiments, we set this threshold to 0.7, ensuring that only PAM predictions for which Protein2PAM is relatively confident are considered the 'correct PAM' for pCoMole to match.

\paragraph{PAM prediction.} For PAM prediction, we use the full-length protein2PAM model, available at `Profluent-Bio/protein2pam-cas9\textunderscore
full`. Given that Protein2PAM outputs a distribution over predicted PAMs, enforcing exact PAM matching requires setting a threshold to determine what level of model confidence constitutes a nucleotide prediction vs. ``N''. Throughout all experiments, we set this threshold to 0.7, ensuring that only PAM predictions for which Protein2PAM is relatively confident are considered the ``correct PAM'' for pCoMole to match.

\subsection{GFP Score Models}
\paragraph{GFP Classifier.} We train a binary GFP classifier to enforce that generated sequences remain in the GFP family during GFP shrinkage experiments. The classifier is trained on the same curated GFP dataset used for GFP Edit Flows training. Architecturally, we encode each input sequence with a frozen ESM-2-650M encoder and apply mean pooling over non-special, non-padding tokens, followed by a small MLP head with two hidden layers and dropout to produce a single logit. We use stratified train/validation/test splits (0.70/0.15/0.15) and optimize a class-balanced binary cross-entropy loss with a positive-class weight computed from the training split.

Training was conducted on one H100 NVIDIA NVL GPU system with 94 GB of VRAM. The model was trained for 20 epochs using the AdamW optimizer and a learning rate of 2e-4 with weight decay of 0.01. A learning rate scheduler with 2 warm-up epochs and cosine decay was used, with initial and minimum learning rates both 2e-5. The network architecture included a model dimension of 512 and a 0.3 dropout rate. Training uses AdamW with a warmup-cosine learning-rate schedule and gradient clipping of 1.0. We select the best checkpoint by validation AUROC. On the held-out test set, the GFP classifier achieves loss $0.1834$, AUROC $0.9784$, AUPRC $0.9480$, F1 $0.9457$, and accuracy $0.9506$.

\paragraph{FPredX.} We use FPredX as our fluorescence predictor for GFP variants \citep{tam2022fpredx}. Given an amino acid sequence, FPredX outputs predicted brightness, excitation wavelength, and emission wavelength. We use these predictions to define the GFP objectives (maximize brightness and align excitation to the 488\,nm laser line) and the emission-range constraint requiring the emission peak to fall within the green band. We use the pre-trained FPredX model without additional fine-tuning.

\subsection{Peptidomimetics Score Models}
\paragraph{Chemotype-adaptive property predictors.}
For the peptidomimetic design tasks, we compute developability-related objectives using a chemotype-adaptive ensemble of PeptiVerse \citep{zhang2026peptiverse} and Admetica predictors \citep{admetica}. Peptidomimetics occupy an intermediate regime between peptide-like and small-molecule-like chemical space, so applying a single property predictor uniformly can be inappropriate. We therefore compute, for each candidate SMILES $x$, a continuous peptide-likeness score $\alpha(x)\in[0,1]$ and define the property score for objective $j$ as
\begin{equation}
    \hat f_j(x)
    =
    \alpha(x)\hat f^{\mathrm{Pep}}_j(x)
    +
    \left(1-\alpha(x)\right)\hat f^{\mathrm{ADM}}_j(x),
    \label{eq:peptide_Admetica_mixture}
\end{equation}
where $\hat f^{\mathrm{Pep}}_j$ denotes the PeptiVerse prediction and $\hat f^{\mathrm{ADM}}_j$ denotes the corresponding Admetica prediction. This mixture weights peptide-specialized predictions more heavily for peptide-like candidates and small-molecule ADMET predictions more heavily for less peptide-like candidates. The predictor ensemble is fixed throughout all experiments and is used consistently during pCoMole-guided sampling, baseline evaluation, and ablations.

\paragraph{Peptide-likeness score.}
The mixture coefficient $\alpha(x)$ is computed from RDKit-derived substructure features that capture peptide-like amide connectivity. Given a candidate SMILES $x$, we count heavy atoms, heavy bonds, amide bonds, carbonyl carbons, and adjacent amide pairs. Let $f_1(x)$ be the fraction of carbonyl carbons adjacent to nitrogen, $f_2(x)$ be the amide-bond density normalized by a target density $\delta_{\mathrm{amide}}=0.12$, and $f_3(x)=1-\exp(-n_{\mathrm{adj}}(x))$ be a saturating function of the number of adjacent amide pairs. We additionally penalize non-peptidic carbonyl environments using $p(x)=1-f_{\mathrm{pure}}(x)$, where $f_{\mathrm{pure}}(x)$ is the fraction of carbonyls classified as pure amides. The final peptide-likeness score is
\begin{equation}
\begin{aligned}
    \alpha(x)
    =
    \mathrm{clip}_{[0,1]}
    \Big[
    0.55 f_1(x)
    + 0.25 f_2(x)
    + 0.20 f_3(x)
    - 0.25 p(x)
    \Big].
\end{aligned}
\label{eq:peptide_likeness}
\end{equation}
This score approaches one for molecules with dense peptide-like amide connectivity and approaches zero for molecules whose carbonyl chemistry is less peptide-like. We include ring amides by default so that cyclic and macrocyclic peptidomimetics are not systematically penalized.

\paragraph{SMILES BindEvaluator. } To estimate pocket-level binding consistency for peptidomimetic editing, we train a SMILES BindEvaluator that predicts residue-wise binding sites given a binder SMILES sequence and a target protein amino-acid sequence. The model follows the same architecture as BindEvaluator \citep{chen2025moppit}, but uses a pre-trained ChemBERTa-zinc250k-v2-40k model for tokenization and embedding pipeline for SMILES \citep{chithrananda2020chemberta}, while the target sequence is encoded by pre-trained ESM-2-650M \citep{lin2023evolutionary}. The two representations are fused to produce per-residue binding-site probabilities. Training data are curated from PLINDER \citep{durairaj2024plinder}: we remove duplicates, filter target proteins with lengths in $[30,800]$, merge binding-site annotations by majority vote for repeated (binder, target) pairs, drop entries with invalid SMILES or metals, and remove targets containing non-canonical amino acids. We further prevent leakage by removing validation examples that appear in the training set, and we augment the corpus with additional binder-target pairs from PepNN and BioLip2 after removing any overlap with PLINDER \citep{abdin2022pepnn, zhang2024biolip2}. After filtering, the final dataset contains 48155 training pairs and 6451 validation pairs.

Training was conducted on a 2xH100 NVIDIA NVL GPU system with 94 GB of VRAM. The model was trained for 50 epochs using the AdamW optimizer and a learning rate of 1e-4 with weight decay of 0.03. A learning rate scheduler with 5 warm-up epochs and cosine decay was used, with initial and minimum learning rates both 1e-5. The network architecture included a model dimension of 128, hidden dimension of 128, 8 transformer layers and a 0.3 dropout. During training, we tune the classification threshold on the validation set to maximize MCC, and we use the resulting threshold $\tau=0.918$ for downstream scoring. SMILES BindEvaluator achieves a high prediction performance across all metrics.

\paragraph{Motif and specificity scores.}
Given a target of length $L$ (excluding special tokens) and SMILES BindEvaluator probabilities $p_i \in [0,1]$ for each residue $i$ being in the binding site, we define two pocket-consistency scores. Let $\mathcal{M}$ denote the set of residues corresponding to the known binding motif/pocket region for the reference binder. The \emph{motif score} is the mean predicted binding-site probability over motif residues,
\[
\mathrm{Motif}(x) \;=\; \frac{1}{|\mathcal{M}|}\sum_{i\in \mathcal{M}} p_i \, .
\]
To quantify off-motif binding, we define \emph{specificity} using the threshold $\tau$. We count the number of residues outside the motif whose predicted binding probability exceeds $\tau$, and normalize by sequence length,
\[
\mathrm{Specificity}(x) \;=\; 1 \;-\; \frac{1}{L}\sum_{i\notin \mathcal{M}} \mathbf{1}\!\left[p_i \ge \tau\right].
\]
Higher motif scores indicate stronger predicted binding in the intended pocket region, while higher specificity indicates fewer high-confidence binding-site predictions outside the known motif.

\paragraph{Peptidomimetic Constraint.}
We classify each SMILES as \textit{natural peptide}, \textit{peptidomimetic}, or \textit{non-peptide/uncertain} using RDKit substructure matching and topology-based checks. We first detect amide (C=O-N) links and backbone ``tiles'' with SMARTS patterns for an $\alpha$-peptide segment (O=C-N-C(sp$^3$)-C=O) and for $\beta/\gamma$ spacing around an amide. A molecule is deemed \emph{$\alpha$-peptide-only} if it contains at least two amide bonds, has at least one secondary amide (N-H), exhibits no $\beta/\gamma$ tiles, and its amides are largely covered by the $\alpha$-tile pattern (allowing up to two terminal amides). We then flag \emph{peptidomimetic indicators} capturing deviations from a natural $\alpha$-peptide backbone, including backbone isosteres (depsipeptide ester links, thioamides, peptoid motifs), non-proline tertiary amides, and common linkage replacements such as ureas/thioureas, guanidines, sulfonamides, and imide/lactam/diacyl-amide motifs. If the $\alpha$-peptide-only criterion holds and no indicators are present we label the SMILES \texttt{natural\_peptide}; if any indicator is detected we label it \texttt{peptidomimetic\_not\_natural}; otherwise we mark it \texttt{non\_peptide\_or\_uncertain} and log an audit record of all SMARTS counts and triggered indicators.

\paragraph{Length Constraint.} To determine whether a generated candidate is \emph{shorter} or \emph{smaller} than the starting molecule, we compute a backbone-length proxy and standard size descriptors with RDKit. We define \emph{backbone centers} as amide carbonyl carbons (SMARTS \texttt{[CX3](=[OX1])[NX3]}) and optionally sulfonamide sulfur centers (SMARTS \texttt{[SX4](=[OX1])(=[OX1])[NX3]}), then build an undirected graph over centers by connecting two centers if there exists a single-bond path consistent with a peptide/peptidomimetic linkage: $\textit{center}_i$-N-C(sp$^3$)-$\textit{center}_j$ (length 3) or $\textit{center}_i$-N-(O/S/N)-C(sp$^3$)-$\textit{center}_j$ (length 4). We extract the largest connected component and use its number of nodes/edges (\texttt{backbone\_centers\_main}, \texttt{backbone\_links\_main}) as a proxy for backbone length, while also computing MW and heavy-atom count (HAC). A candidate is labeled \texttt{shorter\_backbone} if its main-component nodes or edges decrease relative to the original, and \texttt{smaller\_bulk} if both MW and HAC decrease (optionally by margins); we accept \texttt{overall\_shorter\_or\_smaller} if either condition holds. During generation, we additionally enforce a heuristic token-length range $[L_0/2,\,L_0)$ to prevent extreme collapse while encouraging substantial compression.

\begin{table*}[t]
\centering
\footnotesize
\caption{\textbf{Held-out CICERO evaluation of pCoMole-generated Cas9 designs.} We evaluated 25 designs per ortholog and condition. The guidance-disabled control removes the PAM objective and hard PAM constraint while keeping the remaining pCoMole settings fixed. Exact-match values are percentages with Wilson 95\% confidence intervals; PWM similarity compares the full predicted PAM distributions.}
\label{tab:cicero}
\resizebox{\textwidth}{!}{%
\begin{tabular}{llcccc}
\toprule
\textbf{Ortholog} & \textbf{PAM guidance} & \textbf{Protein2PAM exact match} & \textbf{Protein2PAM PWM sim.} & \textbf{CICERO exact match} & \textbf{CICERO PWM sim.} \\
\midrule
St3Cas9 & Enabled  & 100 [87, 100] & 0.896 & 88 [70, 96] & 0.837 \\
St3Cas9 & Disabled & 16 [6, 35]    & 0.634 & 4 [1, 20]   & 0.758 \\
GeoCas9 & Enabled  & 100 [87, 100] & 0.792 & 72 [52, 86] & 0.766 \\
GeoCas9 & Disabled & 88 [70, 96]   & 0.392 & 20 [9, 39]  & 0.619 \\
\bottomrule
\end{tabular}%
}
\end{table*}

\section{Independent and Experimental Validation Details}
\label{sec:independent_validation}

\paragraph{Held-out PAM evaluation with CICERO.}
We used CICERO \citep{fang2026cicero} only for post hoc evaluation of PAM preservation in St3Cas9 and GeoCas9. For each ortholog and condition, we evaluated 25 generated designs. The guidance-disabled control removed the Protein2PAM PAM objective and hard PAM constraint while keeping the remaining pCoMole settings fixed. Exact-match confidence intervals in Table~\ref{tab:cicero} are Wilson 95\% intervals.

\paragraph{Bacterial expression and fluorescence imaging.}
Competent \textit{Escherichia coli} BL21 cells were transformed individually with sequence-verified plasmids encoding pCoMole\_eGFP\_1 or pCoMole\_eGFP\_2. Both variants were expressed from a pET21a-derived bacterial expression backbone containing the constitutive J23119 promoter. Plasmid DNA was isolated from selected clones, verified by Sanger sequencing, and retransformed into BL21 cells to confirm fluorescence retention. The original 239-residue eGFP construct and the 228-residue negative control were included for comparison. Green fluorescence was imaged using 488-nm excitation, and phase-contrast and fluorescence images were collected for each construct. Merged images were generated to visualize fluorescence relative to the bacterial cells, and fluorescence retention was assessed qualitatively from the collected images. Complete amino-acid sequences of both variants and the control constructs are provided below in Appendix~\ref{app:wetlab_sequences}.

\subsection{Wet-lab eGFP Sequences}
\label{app:wetlab_sequences}
The 239-residue in-house eGFP starting construct and the three edited constructs tested in the wet-lab experiment are listed below.

\paragraph{Original eGFP (239 aa).}
\begin{quote}\ttfamily\small
\seqsplit{MVSKGEELFTGVVPILVELDGDVNGHKFSVSGEGEGDATYGKLTLKFICTTGKLPVPWPTLVTTLTYGVQCFSRYPDHMKQHDFFKSAMPEGYVQERTIFFKDDGNYKTRAEVKFEGDTLVNRIELKGIDFKEDGNILGHKLEYNYNSHNVYIMADKQKNGIKVNFKIRHNIEDGSVQLADHYQQNTPIGDGPVLLPDNHYLSTQSALSKDPNEKRDHMVLLEFVTAAGITLGMDELYK}
\end{quote}

\paragraph{pCoMole\_eGFP\_1 (229 aa; 10 deletions, 1 substitution).}
\begin{quote}\ttfamily\small
\seqsplit{MSKGEELFTGVVPILVELDGDVNGHKFSVSGEGEGDATYGKLTLKFICTTGKLPVPWPTLVTTLTYGVQCFSRYPDHMKQHDFFKSAMPEGYVQERTISFKDDGNYKTRAEVKFEGDTLVNRIELKGIDFKEDGNILGHKLEYNYNSHNVYIMADKQKNGIKVNFKIRHNIEDGSVQLADHYQQNTPIGDGPVLLPDNHYLSTQSALSKDPNEKRDHMVLLEFVTAAGK}
\end{quote}

\paragraph{pCoMole\_eGFP\_2 (229 aa; 10 deletions, 2 substitutions).}
\begin{quote}\ttfamily\small
\seqsplit{MSKGEELFTGVVPILVELDGDVNGHKFSVSGEGEGDATYGKLTLKFICTTGKLPVPWPTLVTTLTYGVQCFSRYPDHMKQHDFFKSAMPEGYVQERTISFKDDGNYKTRAEVKFEGDTLVNRIELKGIDFKEDGNILGHKLEYNYNSHNVYIMADKQKNGIKVNFKIRHNIEDGSVQLADHYQQNTPIGDGPVLLPDNHYLSTQSVLSKDPNEKRDHMVLLEFVAGITK}
\end{quote}

\paragraph{Negative control (228 aa; 11 deletions, 2 substitutions).}
\begin{quote}\ttfamily\small
\seqsplit{MSKGEELFTGVVPILVELDGDVNGHKFSVSGEGEGDATYGKLTLKFICTTGKLPVPWPTLVTTLTYGVQCFSRYPDHMKQHDFFKSAMPEGYVQERTISFKDDGNYKTRAEVKFEGDTLVNRIELKGIDFKEDGNILGHKLEYNYNSHNVYIMADKQKNGIKVNFKIRHNIEDGSVQLADHYQQNTPIGDGPVLLPDYLSTSSPEKRDHLLEFVTAAGITLGMDELYK}
\end{quote}
% TODO (internal, do not expose in compiled text): add BL21 subtype/source, vector and cloning details, culture/induction conditions, imaging instrument/settings, replicate counts, and control-selection details once the experimental record is available.

% TODO (internal): integrate the rebuttal 95% CIs into the main GFP table only after reconciling the canonical benchmark outputs, since the current manuscript table and rebuttal summary use different GFP means.

\section{pCoMole Sampling Details.}
\label{sec:sampling}
\paragraph{GFP Shrinkage.} For GFP shrinkage, unless otherwise stated, we use \texttt{num\_steps=10},\texttt{num\_candidates=50}, and \texttt{num\_rollouts=10}, with \texttt{objective\_weights=[3, 1, 1]} for length reduction, brightness, and excitation alignment, respectively. To further bias sampling toward shorter sequences in this setting, we disable insertions by setting $\lambda_{\mathrm{ins}}=0$ and amplify deletions by multiplying the deletion rate by $100$, which empirically encourages aggressive shrinkage while maintaining the GFP validity and emission constraints.

\paragraph{Cas9 Shrinkage.}
Across all Cas9 shrinkage experiments, we amplify the deletion rate by multiplying the base Edit Flow deletion intensity, which empirically encourages aggressive length reduction while still allowing the Doob-$h$ guidance to preserve Cas9-likeness and PAM constraints. Unless otherwise stated, we disable insertions by setting the insertion rate to zero (i.e., $\lambda_{\mathrm{ins}}=0$), since insertions are systematically misaligned with the shrinkage objective and can dilute search effort. Our PAM matching CE loss comprises of two components: (i) the CE loss between the PAM logits and the target PAM nucleotide for all  non-``N'' positions; and (ii) the CE loss between the PAM logits and a uniform distribution. We use the PAM predicted by protein2PAM for the input sequence as the PAM to be maintained during pCoMole shrinkage.

\paragraph{Main Cas9 results.}
For the main Cas9 shrinkage experiments, we use \texttt{num\_steps=20}, \texttt{num\_candidates=10}, and \texttt{num\_rollouts=5}. We set equal objective weights \texttt{objective\_weights=[1,1,1]} (Cas9-likeness, PAM preservation loss, and length reduction, respectively), scale the deletion rate by $1000$ and keep insertions disabled via $\lambda_{\mathrm{ins}}=0$. 

\paragraph{Cas9 compute-matched time-to-usable design benchmark.}
For the compute-matched Cas9 benchmark, we use \texttt{num\_steps=5}, \texttt{num\_candidates=10}, and \texttt{num\_rollouts=5}. We set equal objective weights \texttt{objective\_weights=[1,1,1]} (Cas9-likeness, PAM preservation loss, and length reduction, respectively), scale the deletion rate by $2500$ and keep insertions disabled via $\lambda_{\mathrm{ins}}=0$. Because SCISOR exhibits high sample variance on Cas9, we average time-to-usable design across all sequences sampled within a 3600-second budget for all three methods. We used CjCas9 for this experiment. 

\paragraph{Compute-matched benchmark.}
For this benchmark, we use \texttt{num\_steps=10}, \texttt{num\_candidates=10}, and \texttt{num\_rollouts=5}. We set equal objective weights \texttt{objective\_weights=[1,1,1]} (Cas9-likeness, PAM preservation loss, and length reduction, respectively), scale the deletion rate by $3000$ and keep insertions disabled via $\lambda_{\mathrm{ins}}=0$. 

\paragraph{Benchmark setting with fixed terminal length.}
For the RayGun/SCISOR benchmark where we enforce an exact terminal length match, we use a shorter horizon but substantially increase lookahead to reduce variance under the hard length target: \texttt{num\_steps=10}, \texttt{num\_candidates=20}, and \texttt{num\_rollouts=20}. We use \texttt{objective\_weights=[1,3,1]} to place additional emphasis on PAM preservation during the more tightly constrained generation, and we again disable insertions ($\lambda_{\mathrm{ins}}=0$).

\paragraph{Deletion rate scaling ablation.}
For this benchmark, we use \texttt{num\_steps=10}, \texttt{num\_candidates=10}, and \texttt{num\_rollouts=5}. We set equal objective weights \texttt{objective\_weights=[1,1,1]} (Cas9-likeness, PAM preservation loss, and length reduction, respectively), and keep insertions disabled via $\lambda_{\mathrm{ins}}=0$. 

\paragraph{Insertion ablation.}
For the Cas9 insertion ablation, we match the main-results hyperparameters \texttt{num\_steps=20}, \texttt{num\_candidates=10}, \texttt{num\_rollouts=5}, and \texttt{objective\_weights=[1,1,1]}. The only change is whether insertions are permitted: in the \emph{no-insertions} condition we set $\lambda_{\mathrm{ins}}=0$, while in the \emph{insertions-enabled} condition we retain the base insertion rate from the pre-trained Edit Flow. In both conditions we keep the $1000\times$ deletion-rate amplification.

\paragraph{Peptidomimetic Binder Design.}
Unless otherwise noted (e.g., in the hyperparameter ablations), we run pCoMole for peptidomimetic design with \texttt{num\_steps=30}, \texttt{num\_candidates=30}, and \texttt{num\_rollouts=10}. We use the fixed \texttt{objective\_weights=[4, 2, 4, 0.2, 4, 2, 1]} for the seven peptidomimetic objectives respectively: non-toxicity, solubility, permeability, half-life, affinity, motif score, and specificity. For developability endpoints with both peptide-specific and small-molecule predictor outputs, the reported objective value is the peptide-likeness-adaptive mixture in Eq.~\eqref{eq:peptide_Admetica_mixture}. Because the peptidomimetics Edit Flow backbone operates on SELFIES strings while all property and binding-score predictors take SMILES as input, we convert each sampled SELFIES sequence to SMILES on the fly for objective and constraint evaluation at each sampling step. Reported average lengths (tokenized SMILES length) are rounded to integers for readability.

\section{Benchmark Models}

\paragraph{RayGun.}
For RayGun baselines, we use the publicly available pre-trained RayGun implementation with 8.8M parameters trained on UniRef50. All RayGun generations reported in this work are performed using this fixed pre-trained checkpoint. RayGun experiments were run on the H100 NVIDIA NVL GPU system with 94\,GB of VRAM.

\paragraph{SCISOR.}
For SCISOR baselines, we use the publicly available pre-trained \texttt{UniRef50\_L} variant. We run SCISOR generation using this fixed pre-trained model without further fine-tuning. SCISOR experiments were run on one NVIDIA RTX A6000 GPU with 48\,GB of VRAM.

\section{Impact Statement}
pCoMole directly addresses a central bottleneck in therapeutic translation: making powerful molecules smaller, more controllable, and more deliverable. By unifying multi-objective optimization, hard feasibility, and variable-length sequence editing within a single generative framework, pCoMole makes it possible to systematically shrink complex biomolecules such as fluorescent proteins, genome editors, and peptide therapeutics while preserving task-critical properties. In particular, the ability to reduce the size of systems like CRISPR-Cas9 without altering target specificity has the potential to significantly expand the space of deliverable and clinically viable genome-editing tools. At the same time, improved deliverability necessarily raises safety considerations, including off-target activity, immunogenicity, and context-dependent behavior \emph{in vivo}. We view pCoMole as a tool that must be coupled with careful experimental validation, safety screening, and responsible deployment, especially for genome-editing and therapeutic applications.

\newpage
\section{Additional Benchmark Results}
\begin{table}[t]
\setlength{\tabcolsep}{3pt}
\renewcommand{\arraystretch}{1.3}
\caption{\textbf{Comparison of pCoMole to SCISOR and RayGun at shrinking CjCas9.} All three methods were used to shrink WT CjCas9 to a sequence length of 934.}
\label{tab:cas9_benchmark}
\begin{center}
        \begin{tabular}{c|ccc}
        \toprule
        \textbf{Method} &
        \textbf{Length Diff.} &
        \textbf{Avg. Cas9 Likelihood} &
        \textbf{PAM Match \%}\\
        \midrule
        pCoMole w/ Cas9 Edit Flows   &
        984 $\longrightarrow$ 934 &
        0.95 &
        100\\
        pCoMole w/ UniRef Edit Flows   &
        984 $\longrightarrow$ 934 &
        0.98 &
        100\\
        SCISOR &
        984 $\longrightarrow$ 934 &
        0.82&
        35\\
        RayGun &
        984 $\longrightarrow$ 934 &
        0.0 &
        0 \\
        \bottomrule
        \end{tabular}
\end{center}
\end{table}

\begin{table}[t]
\centering
\scriptsize
\setlength{\tabcolsep}{4pt}
\renewcommand{\arraystretch}{1.5} % Default value: 1
\caption{\textbf{pCoMole designs short peptidomimetic binders for 15 PDB targets with known peptide binders, achieving substantial length reduction and improved predicted properties.} For each target, we report the average scores over 100 designed peptidomimetics and the property score change ($\Delta$) relative to the pre-existing binder. Length is the number of SMILES tokens after tokenization. All designed sequences satisfy the specified constraints, and positive $\Delta$ indicates improvement.}
\label{tab:mimetics_pdb}

\begin{tabular}{c|c|ccccccc}
\toprule
\textbf{PDB} & \textbf{\makecell{Length\\Change}}& \textbf{$\Delta$Non-Toxicity}& \textbf{$\Delta$Solubility}& \textbf{$\Delta$Permeability}& \textbf{$\Delta$Half-life (h)}& \textbf{$\Delta$Affinity}& \textbf{$\Delta$Motif Score}& \textbf{$\Delta$Specificity} \\
\midrule
1AYC & 59 $\rightarrow$ 38& -0.1288 & +0.0163 & +0.1436 & +12.8558 & +0.1001 & +0.0268 & -0.0386 \\
1B8Q & 51 $\rightarrow$ 33& -0.2324 & +0.0456 & +0.1512 & +33.3370 & -0.0643 & +0.0024 & -0.0389 \\
1DDV & 41 $\rightarrow$ 28& +0.3973 & +0.0293 & +0.1945 & +31.1299 & +0.2025 & -0.0214 & +0.0098 \\
1E6I & 43 $\rightarrow$ 29& -0.1412 & +0.1025 & +0.1748 & +23.5607 & +0.1515 & +0.0547 & -0.0537 \\
2LTV & 85 $\rightarrow$ 64& +0.3128 & -0.0909 & +0.0584 & +9.6710  & +0.1178 & +0.1034 & -0.0978 \\
2Q8Y & 72 $\rightarrow$ 49& -0.0639 & -0.0591 & +0.0971 & +18.0023 & +0.2046 & +0.0122 & -0.0437 \\
3IDJ & 50 $\rightarrow$ 38& +0.1706 & -0.0250 & +0.2047 & +23.6463 & +0.0487 & -0.0953 & -0.0141 \\
4GNE & 54 $\rightarrow$ 39& -0.2138 & +0.2024 & +0.0570 & +17.7531 & +0.2295 & +0.0346 & -0.0570 \\
5AZ8 & 80 $\rightarrow$ 63& -0.2233 & -0.0964 & +0.0061 & +6.1258  & +0.0782 & +0.0934 & -0.0162 \\
5KRI & 105$\rightarrow$ 87& -0.1807 & -0.0622 & +0.0079 & +2.6242  & +0.0738 & +0.0252 & -0.0448 \\
5M02 & 61 $\rightarrow$ 45& -0.2005 & -0.0956 & +0.0699 & +20.2843 & +0.2425 & +0.1078 & -0.0021 \\
6MLC & 45 $\rightarrow$ 30& -0.1460 & +0.0722 & +0.1717 & +20.0400 & +0.1731 & +0.0259 & -0.0053 \\
7JVS & 79 $\rightarrow$ 58& +0.0537 & -0.1033 & +0.0425 & +7.7421  & +0.1060 & +0.0240 & +0.0021 \\
7LUL & 70 $\rightarrow$ 47& +0.0936 & -0.0911 & +0.1835 & +23.2782 & +0.0769 & +0.0373 & -0.0320 \\
8CN1 & 30 $\rightarrow$ 23& +0.0239 & +0.0481 & +0.1976 & +31.8092 & +0.0371 & +0.0021 & -0.0416 \\
\bottomrule
\end{tabular}
\end{table}

% \input{Tables/losses}
% \input{Tables/generation_quality}

% \input{Tables/gfp_raygun}
% \input{Tables/losses}
% \input{Tables/gfp_objective_ablation}

% \input{Tables/cas9_del_rate_scaling_ablation}

% \input{Tables/cas9_main_table}
% 
% % \input{Tables/mimetic_drugs}
% \input{Tables/mimetic_constraint_ablation}
% \input{Tables/mimetic_coverage}
% \input{Tables/runtimes}
% \input{Tables/runtimes}
% \input{Tables/gfp_budget}

\clearpage
\begin{figure}[t]

    \centering
        \includegraphics[width=0.8\textwidth]{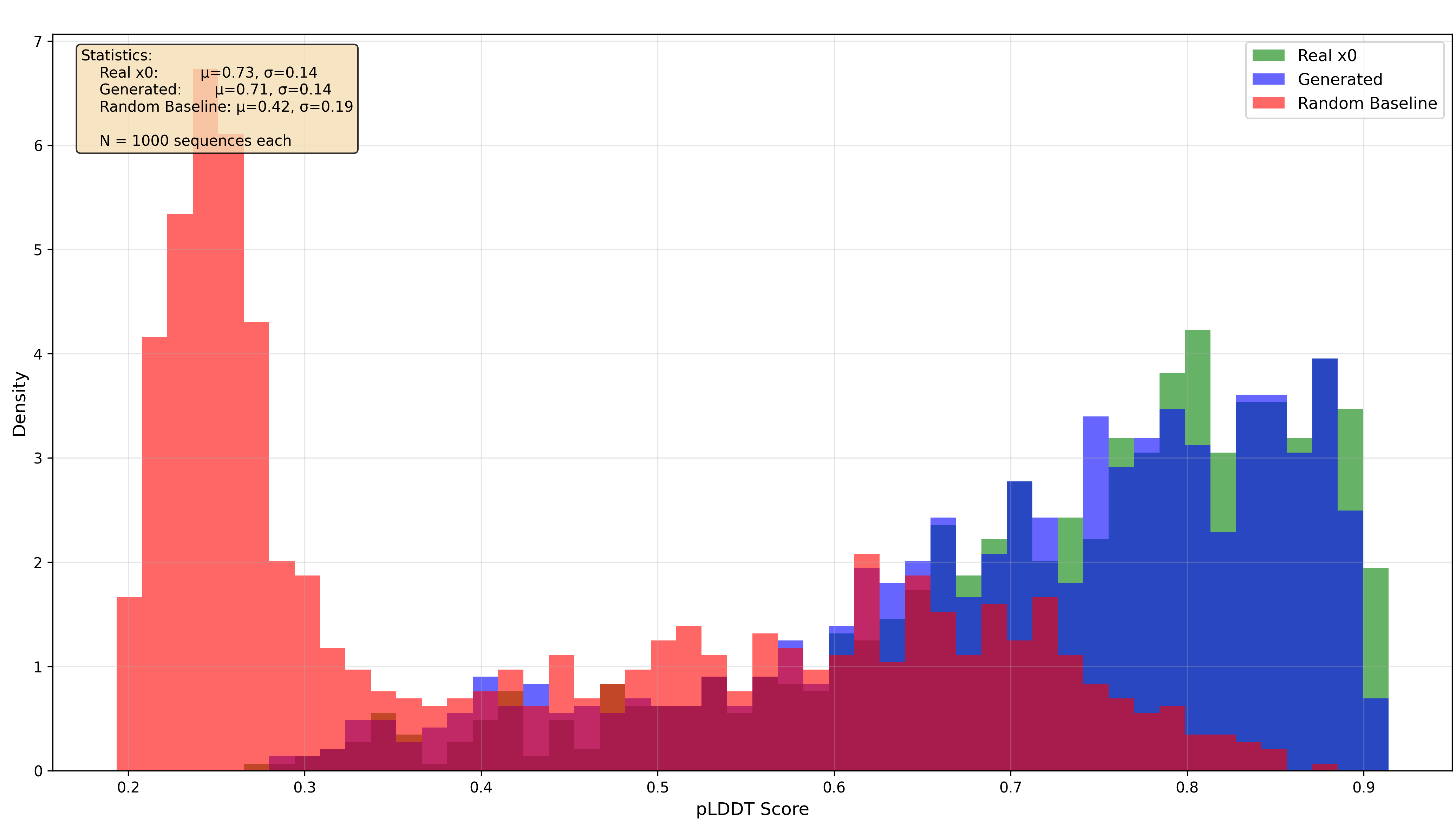}
    \caption{\textbf{Distribution of average pLDDT scores across a set of 1000 sequences generated by the UniRef$_{S}$ Edit Flow, compared to the 1000 true Uniref sequences used as inputs.} 1000 UniRef sequences with length $\leq350$ were sampled and used as $x_0$ for these generations.}
    \label{fig:plddts}
    % \label{fig:dock1}
    \vskip -0.1in
\end{figure}

\begin{figure}[t]
  \vskip 0.2in
  \begin{center}
    \centerline{\includegraphics[width=\columnwidth]{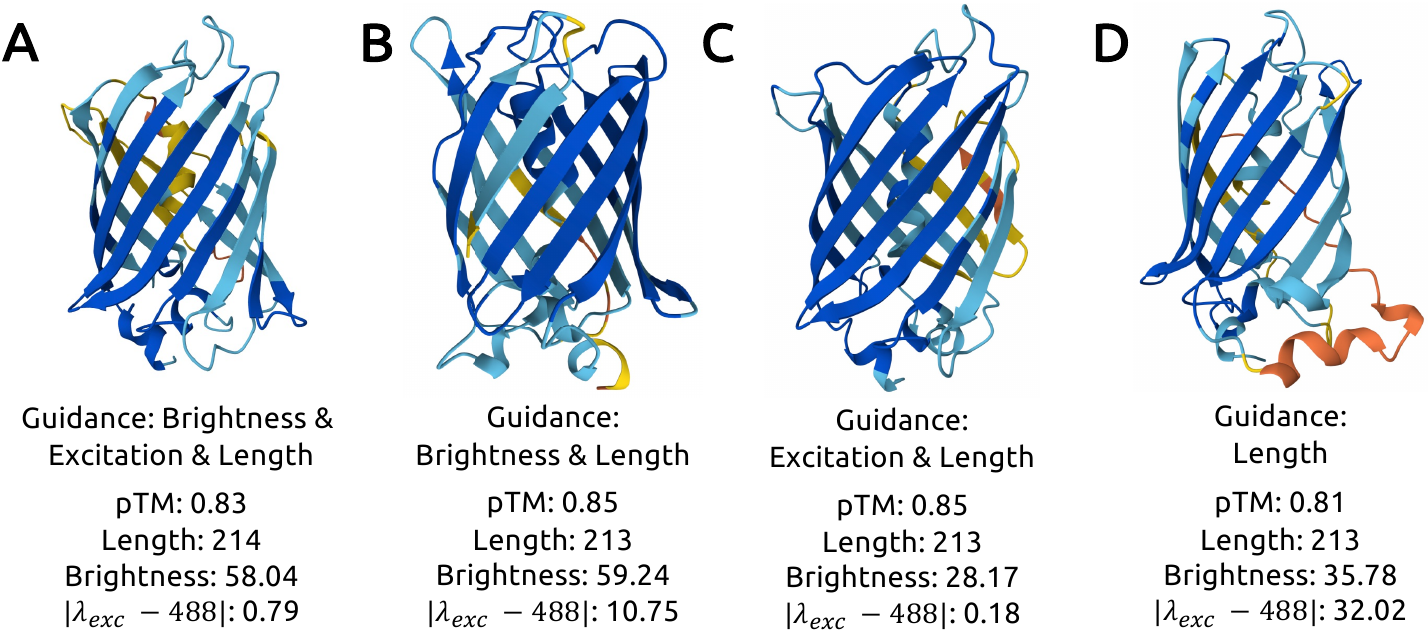}}
    \caption{\textbf{Objective guidance controls optical trade-offs while preserving GFP structure under shrinkage.} Representative pCoMole-designed GFP variants generated under four objective-guidance settings: \textbf{(A)} brightness, excitation alignment, and length reduction \textbf{(B)} brightness and length reduction \textbf{(C)} excitation alignment and length reduction \textbf{(D)} length reduction only. We present their secondary structures together with AlphaFold3-predicted pTM scores, sequence length, predicted brightness, and excitation alignment error.}
    \label{fig:gfp}
  \end{center}
\end{figure}

\begin{figure}[t]
    \centering
      \includegraphics[width=\textwidth]{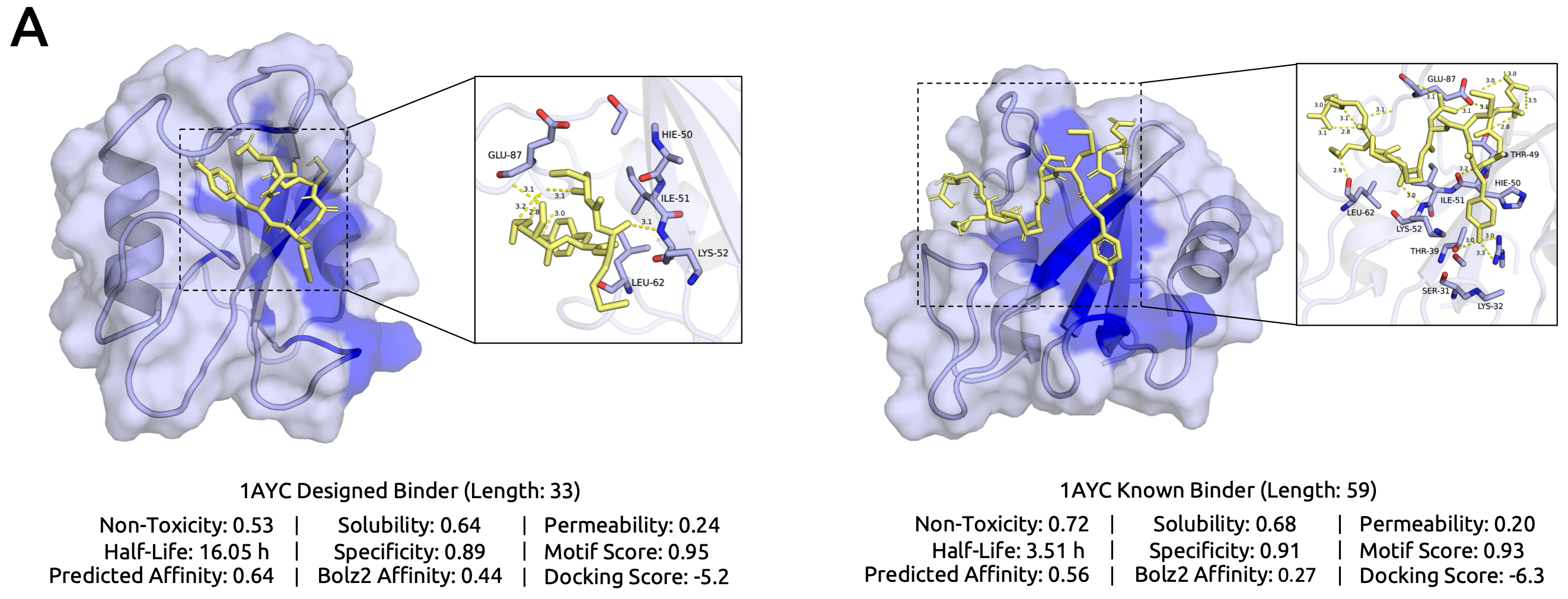}
      \vspace{2em}\\
      \includegraphics[width=\textwidth]{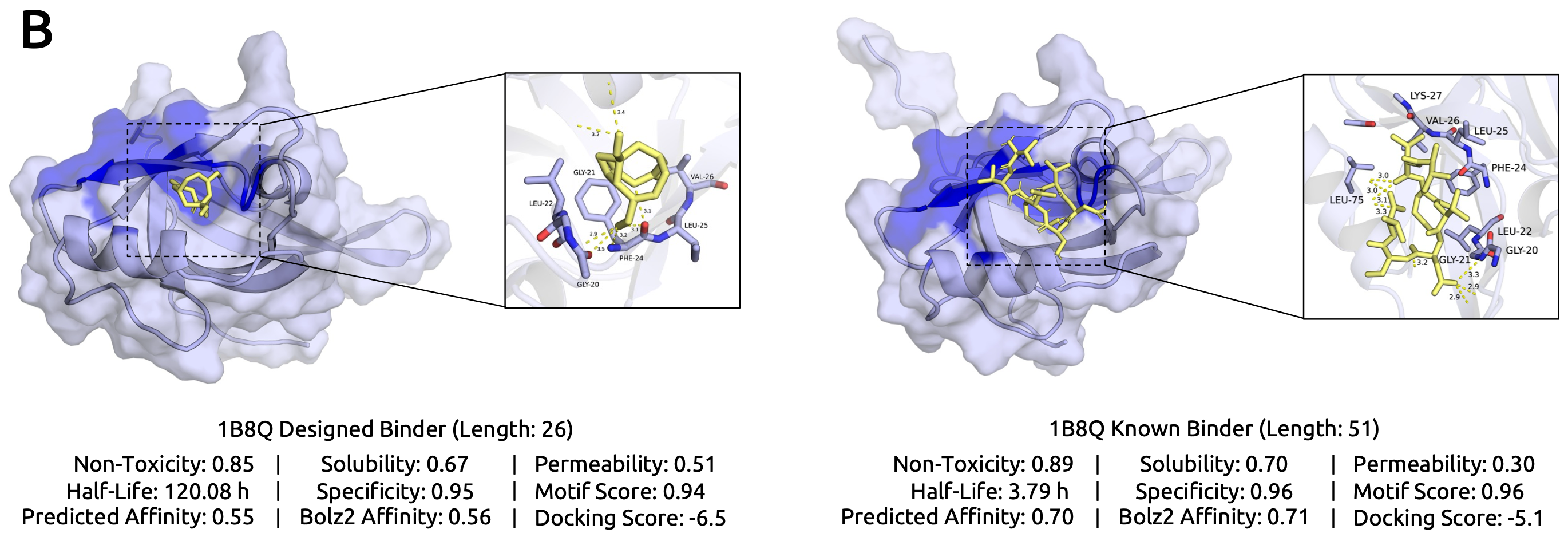}
         \vspace{2em}\\
      \includegraphics[width=\textwidth]{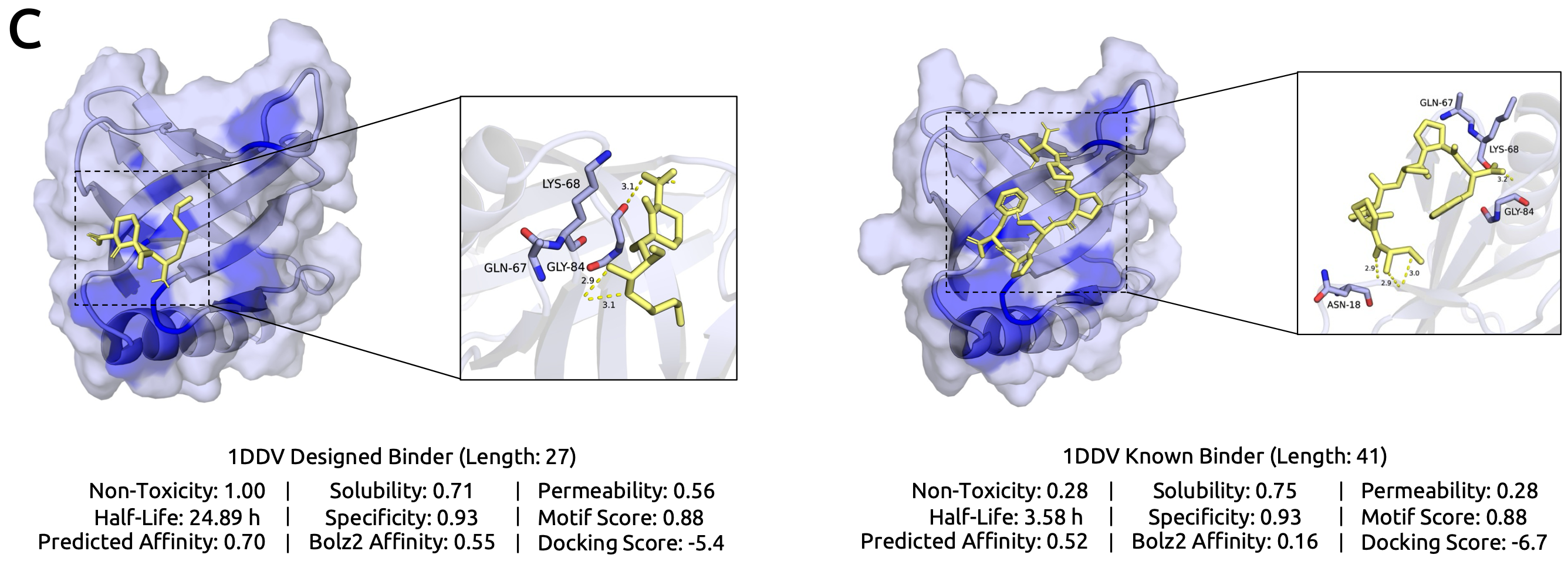}
    \caption{\textbf{Complex structures of known peptide binders and pCoMole-designed peptidomimetic binders with three target PDB proteins (A) 1AYC (B) 1B8Q (C) 1DDV.} Binders are shown in yellow, target proteins in light blue, and the dark-blue surface highlights the target motifs used by pCoMole to enforce motif-specific binding during design. Insets zoom into the binding interface to illustrate key contacts. Predicted property scores for each binder are reported.}
    \label{fig:dock1}
    \vskip -0.1in
\end{figure}

\begin{figure}[t]
    \centering
      \includegraphics[width=\textwidth]{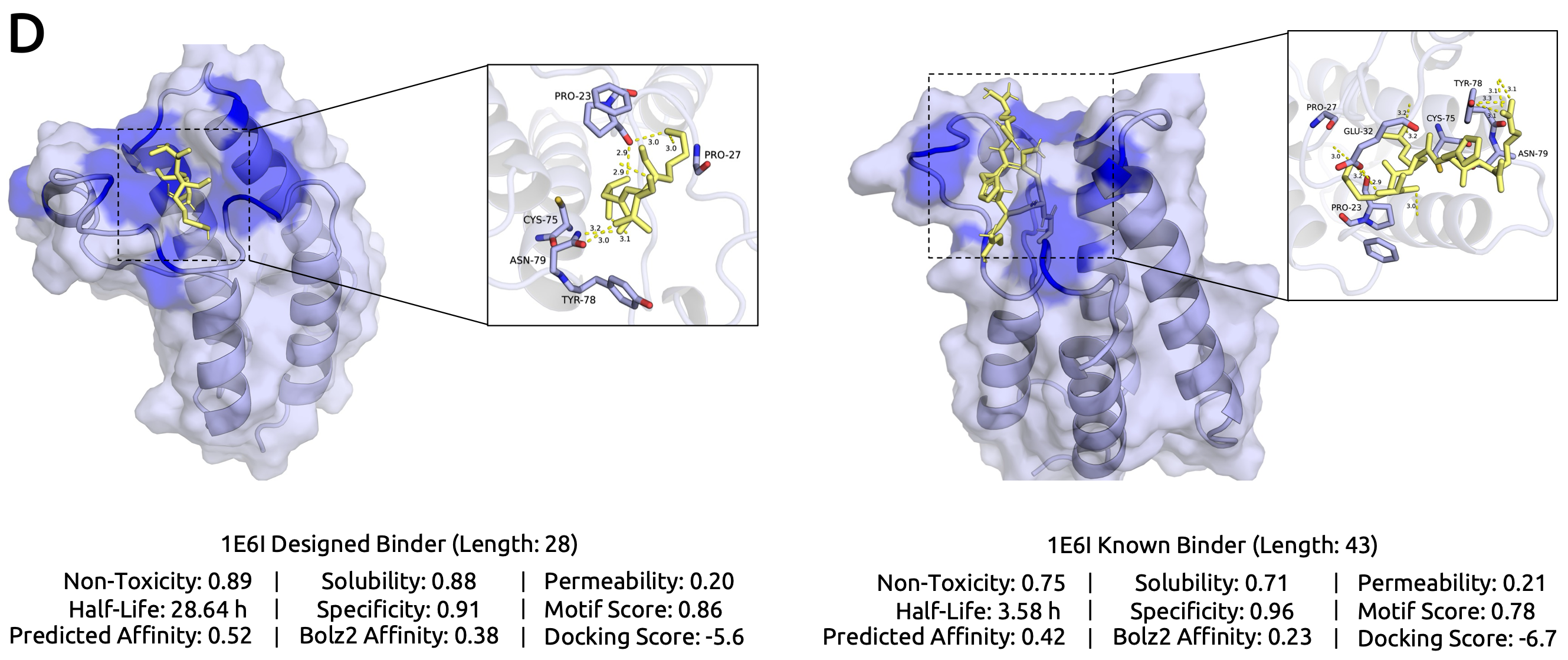}
      \vspace{2em}
      \includegraphics[width=\textwidth]{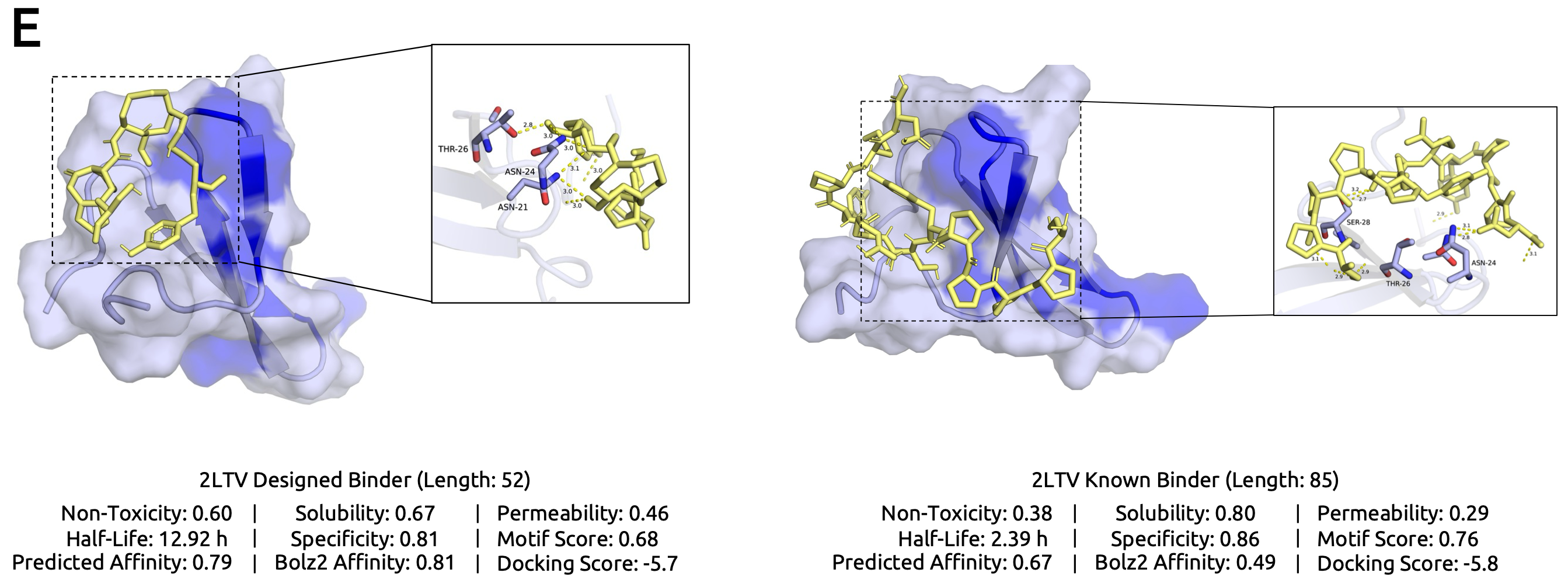}
         \vspace{2em}
      \includegraphics[width=\textwidth]{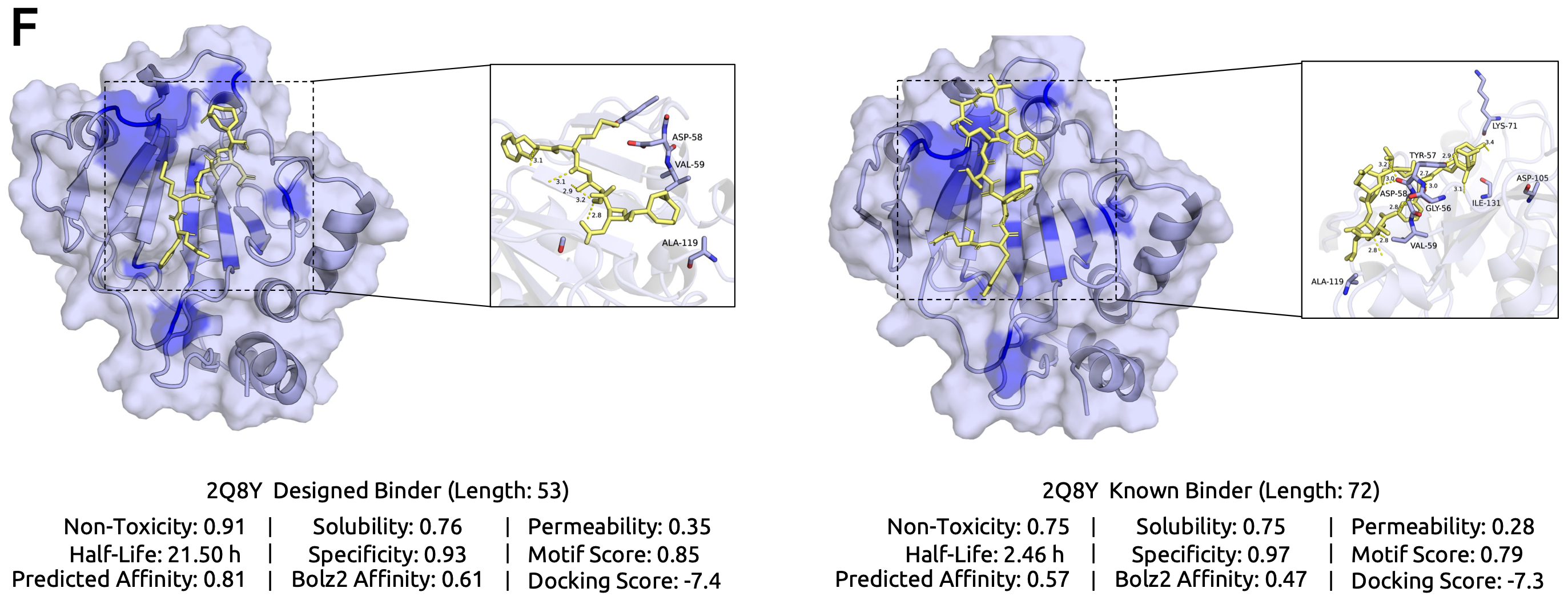}
    \caption{\textbf{Complex structures of known peptide binders and pCoMole-designed peptidomimetic binders with three target PDB proteins (D) 1E6I (E) 2LTV (F) 2Q8Y.} Binders are shown in yellow, target proteins in light blue, and the dark-blue surface highlights the target motifs used by pCoMole to enforce motif-specific binding during design. Insets zoom into the binding interface to illustrate key contacts. Predicted property scores for each binder are reported.}
    \label{fig:dock2}
    \vskip -0.1in
\end{figure}

\begin{figure}[t]
    \centering
      \includegraphics[width=\textwidth]{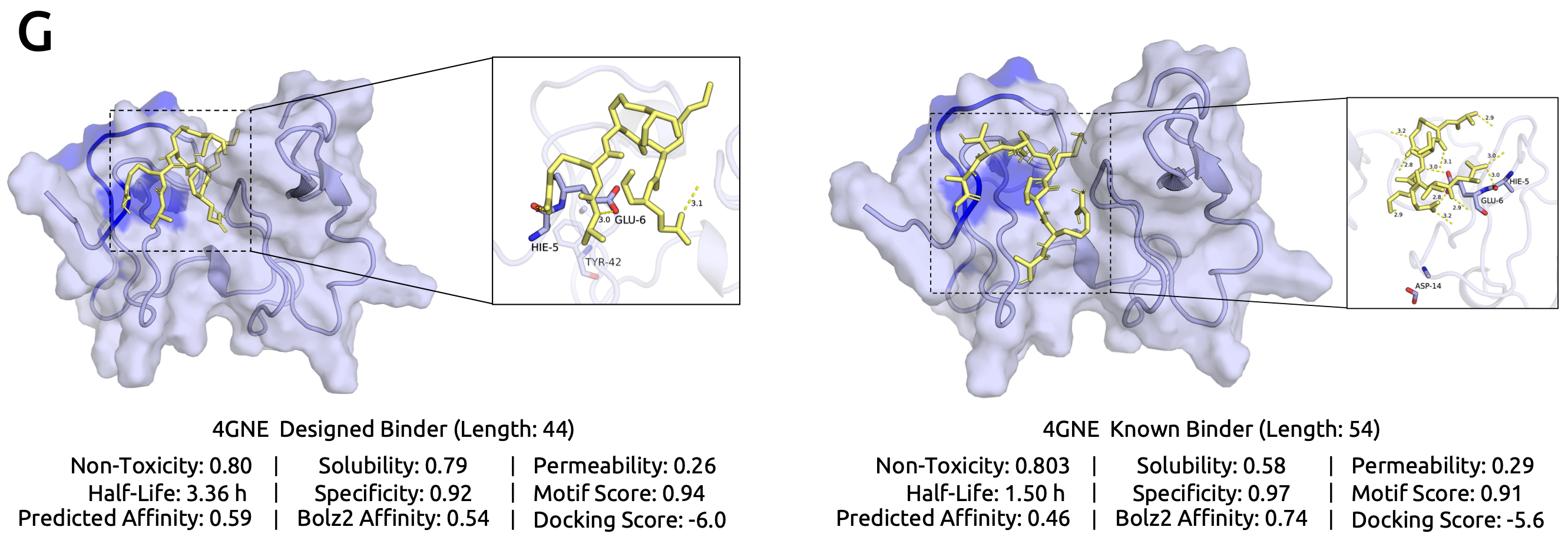}
      \vspace{2em}\\
      \includegraphics[width=\textwidth]{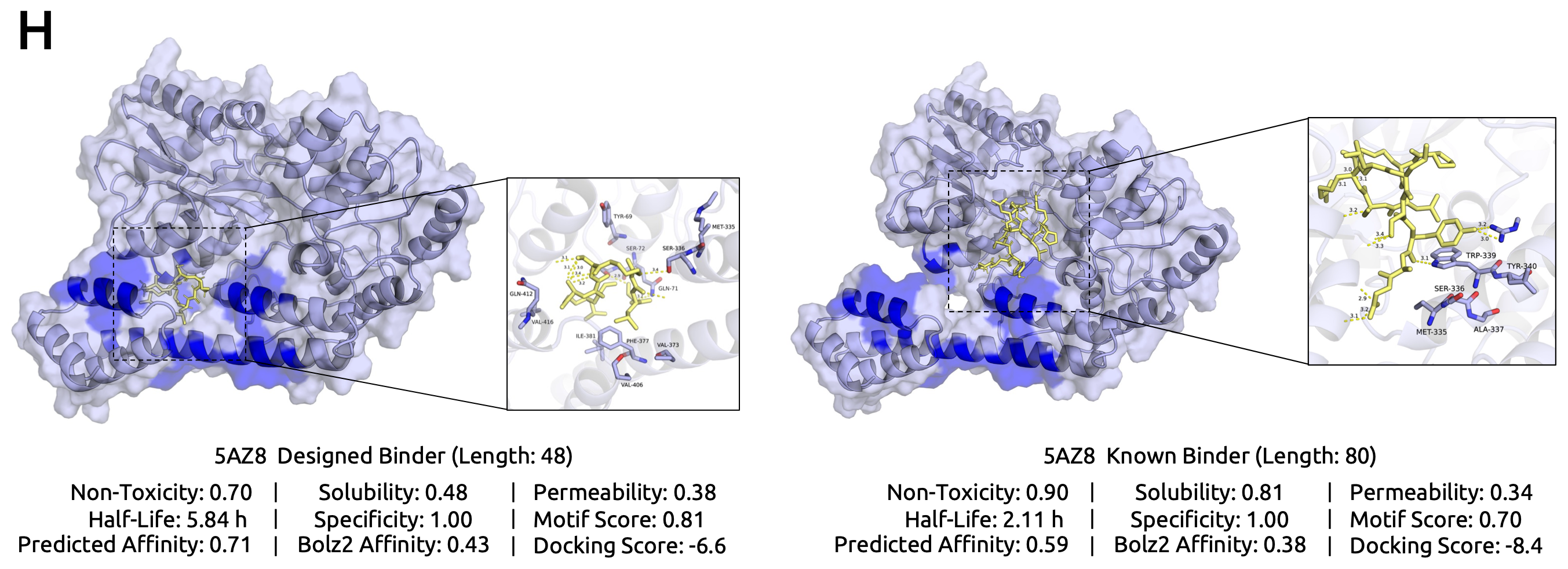}
      \vspace{2em} \\
      \includegraphics[width=\textwidth]{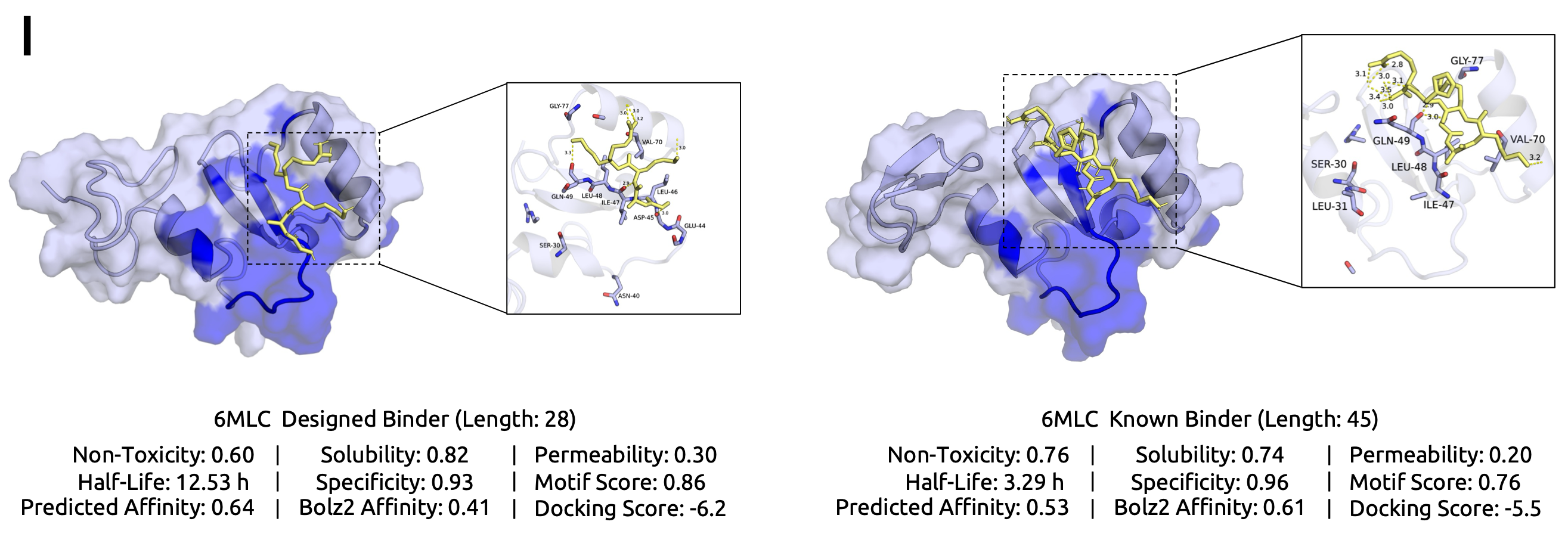}
    \caption{\textbf{Complex structures of known peptide binders and pCoMole-designed peptidomimetic binders with three target PDB proteins (G) 4GNE (H) 5AZ8 (I) 6MLC.} Binders are shown in yellow, target proteins in light blue, and the dark-blue surface highlights the target motifs used by pCoMole to enforce motif-specific binding during design. Insets zoom into the binding interface to illustrate key contacts. Predicted property scores for each binder are reported.}
    \label{fig:dock3}
    \vskip -0.1in
\end{figure}

\begin{figure}[t]
    \centering
      \includegraphics[width=\textwidth]{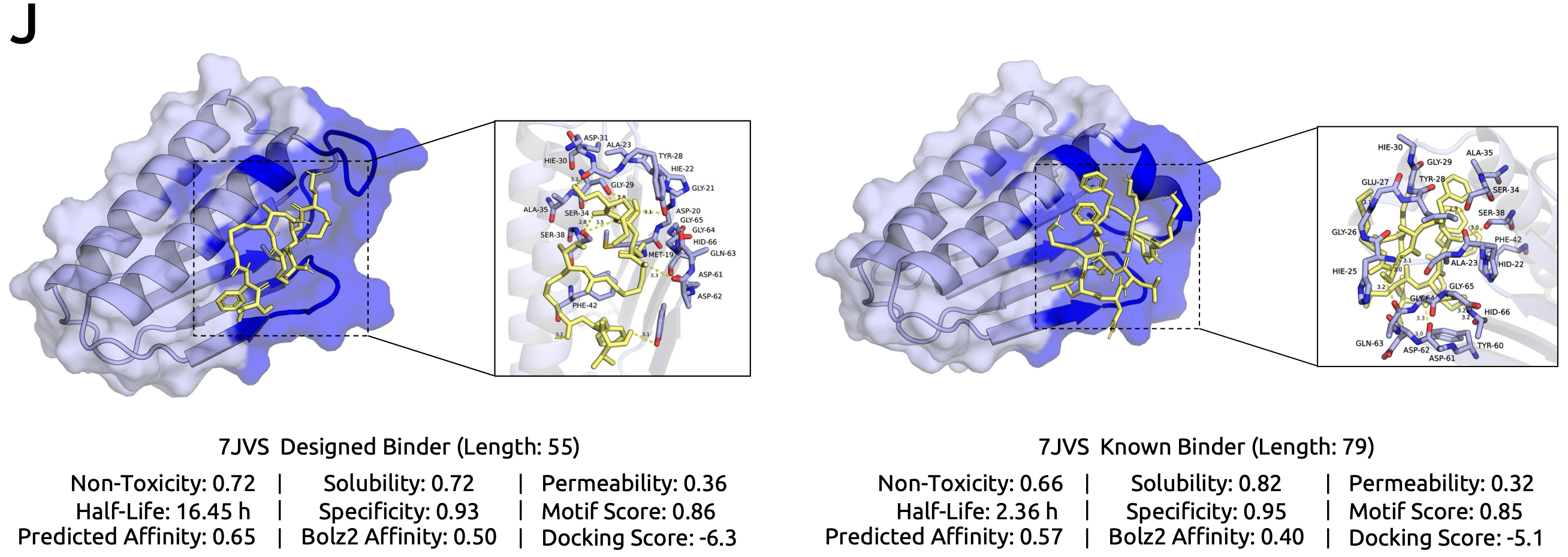}
      \vspace{2em}\\
      \includegraphics[width=\textwidth]{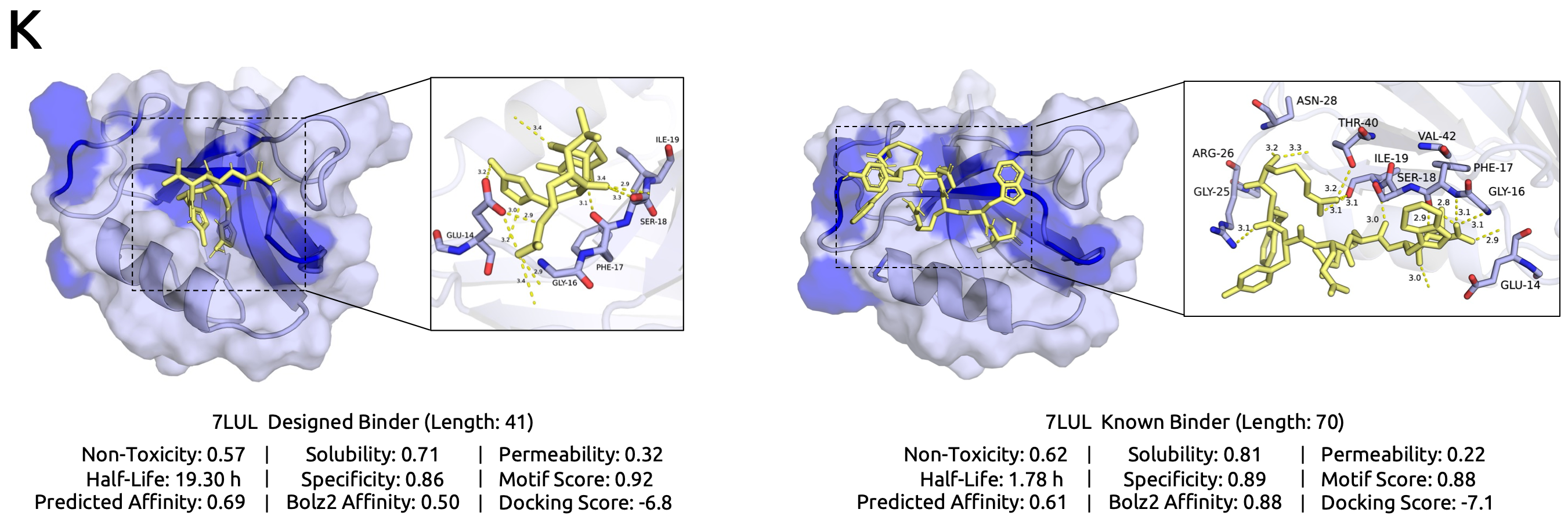}
       \vspace{2em}\\
      \includegraphics[width=\textwidth]{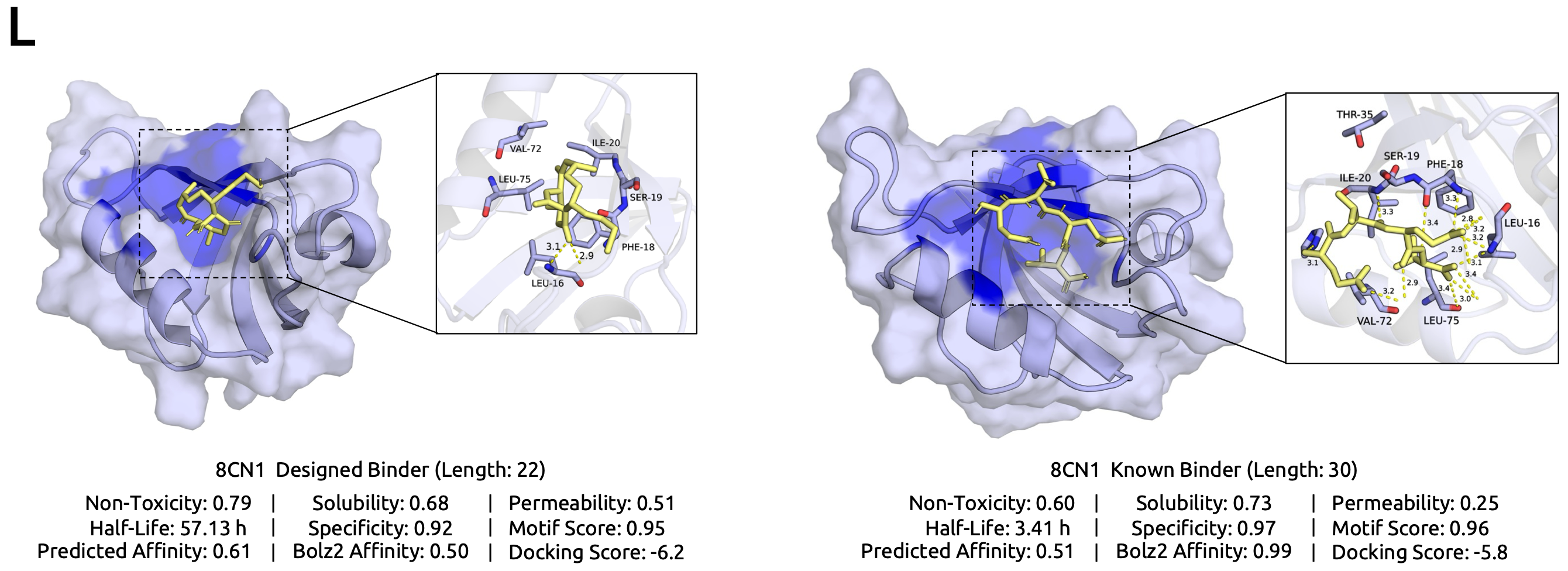}
    \caption{\textbf{Complex structures of known peptide binders and pCoMole-designed peptidomimetic binders with three target PDB proteins (J) 7JVS (K) 7LUL (L) 8CN1.} Binders are shown in yellow, target proteins in light blue, and the dark-blue surface highlights the target motifs used by pCoMole to enforce motif-specific binding during design. Insets zoom into the binding interface to illustrate key contacts. Predicted property scores for each binder are reported.}
    \label{fig:dock4}
    \vskip -0.1in
\end{figure}

\begin{figure}[t]
  \vskip 0.2in
  \begin{center}
    \centerline{\includegraphics[width=\columnwidth]{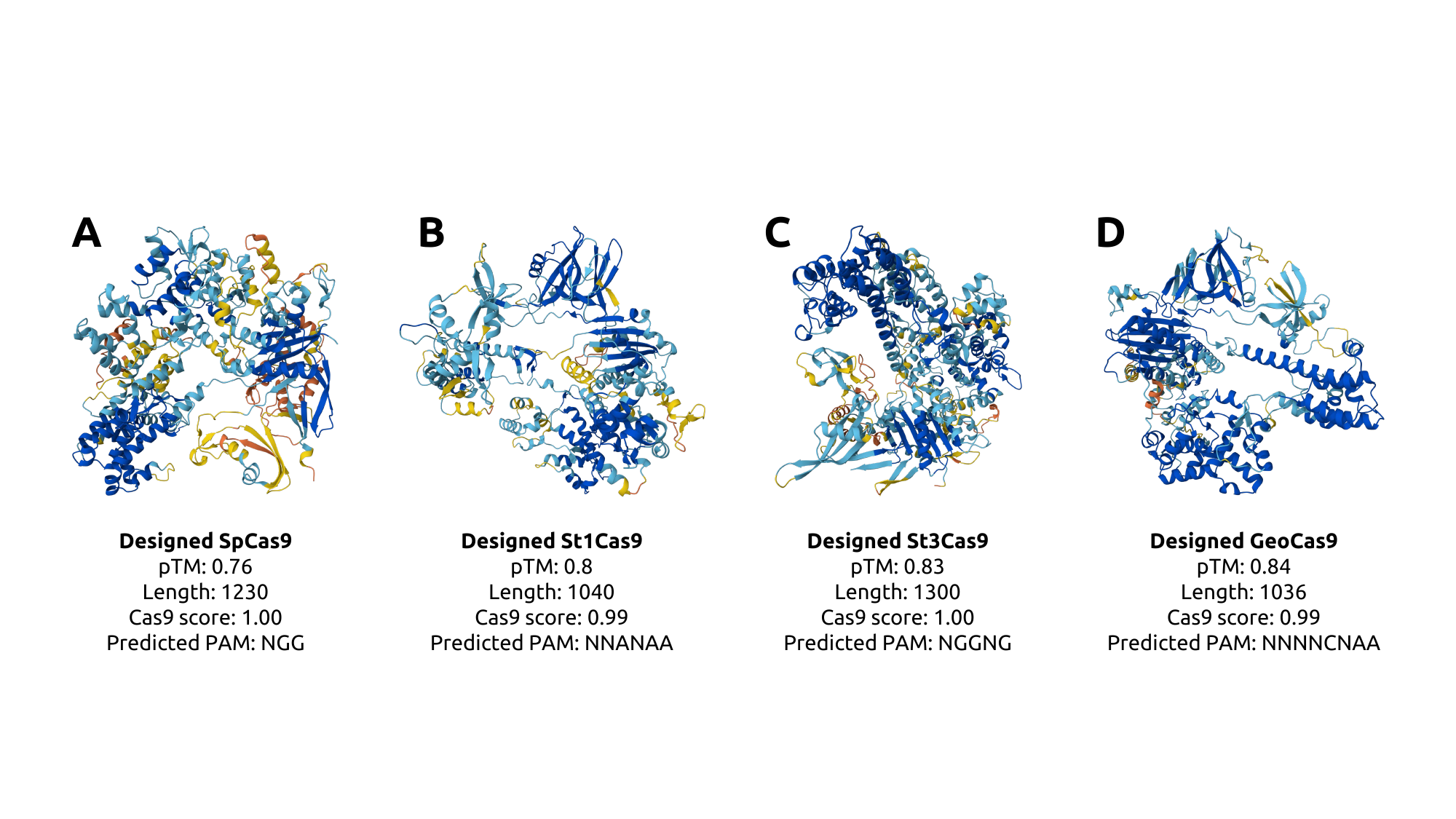}}
    \caption{\textbf{pCoMole effectively shrinks a diverse set of Cas9 orthologs.} Representative pCoMole-designed Cas9 sequences generated by shrinking four well-characterized Cas9 orthologs: \textbf{(A)} SpCas9, \textbf{(B)} St1Cas9 \textbf{(C)} St3Cas9 \textbf{(D)} GeoCas9. We present their secondary structures together with AlphaFold3-predicted pTM scores, sequence length, Cas9-likelihood score, and predicted PAM.}
    \label{fig:designed_cas9}
  \end{center}
\end{figure}

\clearpage
\newpage
\section*{NeurIPS Paper Checklist}

\begin{enumerate}

\item {\bf Claims}
    \item[] Question: Do the main claims made in the abstract and introduction accurately reflect the paper's contributions and scope?
    \item[] Answer: \answerYes{} % Replace by \answerYes{}, \answerNo{}, or \answerNA{}.
    \item[] Justification: The abstract and the contribution list in Sec~\ref{sec:introduction} (Introduction) clearly state the four contributions: introducing pCoMole as an offline method for Pareto-aligned biomolecular sequence editing under hard terminal constraints, deriving a Doob-$h$ guided editing process with an augmented Tchebycheff utility (Sec~\ref{sec:method}, Sec~\ref{sec:doob_h_guidance}), making it practical via short Monte Carlo rollouts with theoretical guarantees (Sec~\ref{sec:short_rollouts}), and demonstrating improved feasibility and trade-off control across GFP shrinkage (Sec~\ref{sec:gfp}), Cas9 shrinkage (Sec~\ref{sec:cas9}), and peptidomimetic binder design.
    \item[] Guidelines:
    \begin{itemize}
        \item The answer \answerNA{} means that the abstract and introduction do not include the claims made in the paper.
        \item The abstract and/or introduction should clearly state the claims made, including the contributions made in the paper and important assumptions and limitations. A \answerNo{} or \answerNA{} answer to this question will not be perceived well by the reviewers. 
        \item The claims made should match theoretical and experimental results, and reflect how much the results can be expected to generalize to other settings. 
        \item It is fine to include aspirational goals as motivation as long as it is clear that these goals are not attained by the paper. 
    \end{itemize}

\item {\bf Limitations}
    \item[] Question: Does the paper discuss the limitations of the work performed by the authors?
    \item[] Answer: \answerYes{} % Replace by \answerYes{}, \answerNo{}, or \answerNA{}.
    \item[] Justification: Limitations are discussed in Sec~\ref{sec:ablation} and the Discussion section. Specifically, pCoMole incurs higher computational cost than shrinkage-only baselines because each edit step scores multiple candidate edits and estimates their continuation values via Monte Carlo rollouts, with runtime scaling in the number of steps, candidates per step, and rollouts per candidate (Table~\ref{tab:runtime_tiers}). The hyperparameter ablations (Tables~\ref{tab:step_ablation},~\ref{tab:rollout_ablation}, and~\ref{tab:candidate_ablation}) further document diminishing returns at larger budgets. The Discussion notes that pCoMole results are validated \textit{in silico} and that wet-lab validation is an important next step.
    \item[] Guidelines:
    \begin{itemize}
        \item The answer \answerNA{} means that the paper has no limitation while the answer \answerNo{} means that the paper has limitations, but those are not discussed in the paper. 
        \item The authors are encouraged to create a separate ``Limitations'' section in their paper.
        \item The paper should point out any strong assumptions and how robust the results are to violations of these assumptions (e.g., independence assumptions, noiseless settings, model well-specification, asymptotic approximations only holding locally). The authors should reflect on how these assumptions might be violated in practice and what the implications would be.
        \item The authors should reflect on the scope of the claims made, e.g., if the approach was only tested on a few datasets or with a few runs. In general, empirical results often depend on implicit assumptions, which should be articulated.
        \item The authors should reflect on the factors that influence the performance of the approach. For example, a facial recognition algorithm may perform poorly when image resolution is low or images are taken in low lighting. Or a speech-to-text system might not be used reliably to provide closed captions for online lectures because it fails to handle technical jargon.
        \item The authors should discuss the computational efficiency of the proposed algorithms and how they scale with dataset size.
        \item If applicable, the authors should discuss possible limitations of their approach to address problems of privacy and fairness.
        \item While the authors might fear that complete honesty about limitations might be used by reviewers as grounds for rejection, a worse outcome might be that reviewers discover limitations that aren't acknowledged in the paper. The authors should use their best judgment and recognize that individual actions in favor of transparency play an important role in developing norms that preserve the integrity of the community. Reviewers will be specifically instructed to not penalize honesty concerning limitations.
    \end{itemize}

\item {\bf Theory assumptions and proofs}
    \item[] Question: For each theoretical result, does the paper provide the full set of assumptions and a complete (and correct) proof?
    \item[] Answer: \answerYes{} % Replace by \answerYes{}, \answerNo{}, or \answerNA{}.
    \item[] Justification: All theoretical results are stated formally and accompanied by complete proofs in App~\ref{sec:proofs}. This includes the Doob-$h$ transform construction (Proposition~\ref{prop:doob_transform}, proved as Proposition~\ref{prop:doob_transform_proof}), the augmented Tchebycheff Pareto coverage result (Proposition~\ref{prop:atc_pareto}), the rollout kernel approximation bound (Proposition~\ref{prop:approx_kernel}), the additive-to-multiplicative error conversion (Proposition~\ref{prop:add_to_mult}), the multistep stability result (Proposition~\ref{prop:multistep_stability}), and the incumbent feasibility guarantee (Proposition~\ref{prop:incumbent_feasible}). Log-domain rollout estimation details supporting these results are provided in App~\ref{app:log_domain}.
    \item[] Guidelines:
    \begin{itemize}
        \item The answer \answerNA{} means that the paper does not include theoretical results. 
        \item All the theorems, formulas, and proofs in the paper should be numbered and cross-referenced.
        \item All assumptions should be clearly stated or referenced in the statement of any theorems.
        \item The proofs can either appear in the main paper or the supplemental material, but if they appear in the supplemental material, the authors are encouraged to provide a short proof sketch to provide intuition. 
        \item Inversely, any informal proof provided in the core of the paper should be complemented by formal proofs provided in appendix or supplemental material.
        \item Theorems and Lemmas that the proof relies upon should be properly referenced. 
    \end{itemize}

    \item {\bf Experimental result reproducibility}
    \item[] Question: Does the paper fully disclose all the information needed to reproduce the main experimental results of the paper to the extent that it affects the main claims and/or conclusions of the paper (regardless of whether the code and data are provided or not)?
    \item[] Answer: \answerYes{} % Replace by \answerYes{}, \answerNo{}, or \answerNA{}.
    \item[] Justification: The paper provides full details required for reproduction. Edit Flow training procedures, datasets, model architectures, and optimizer settings are described in App~\ref{sec:diversity} (Edit Flow details) and Score model details follow. Sampling details for all three editing tasks (GFP, Cas9, peptidomimetics) are documented in App~\ref{sec:sampling}, including all hyperparameters such as \texttt{num\_steps}, \texttt{num\_candidates}, \texttt{num\_rollouts}, and objective weights. Code is also released anonymously at \url{https://anonymous.4open.science/r/pCoMole-18EC/} as noted in the Discussion.
    \item[] Guidelines:
    \begin{itemize}
        \item The answer \answerNA{} means that the paper does not include experiments.
        \item If the paper includes experiments, a \answerNo{} answer to this question will not be perceived well by the reviewers: Making the paper reproducible is important, regardless of whether the code and data are provided or not.
        \item If the contribution is a dataset and\slash or model, the authors should describe the steps taken to make their results reproducible or verifiable. 
        \item Depending on the contribution, reproducibility can be accomplished in various ways. For example, if the contribution is a novel architecture, describing the architecture fully might suffice, or if the contribution is a specific model and empirical evaluation, it may be necessary to either make it possible for others to replicate the model with the same dataset, or provide access to the model. In general. releasing code and data is often one good way to accomplish this, but reproducibility can also be provided via detailed instructions for how to replicate the results, access to a hosted model (e.g., in the case of a large language model), releasing of a model checkpoint, or other means that are appropriate to the research performed.
        \item While NeurIPS does not require releasing code, the conference does require all submissions to provide some reasonable avenue for reproducibility, which may depend on the nature of the contribution. For example
        \begin{enumerate}
            \item If the contribution is primarily a new algorithm, the paper should make it clear how to reproduce that algorithm.
            \item If the contribution is primarily a new model architecture, the paper should describe the architecture clearly and fully.
            \item If the contribution is a new model (e.g., a large language model), then there should either be a way to access this model for reproducing the results or a way to reproduce the model (e.g., with an open-source dataset or instructions for how to construct the dataset).
            \item We recognize that reproducibility may be tricky in some cases, in which case authors are welcome to describe the particular way they provide for reproducibility. In the case of closed-source models, it may be that access to the model is limited in some way (e.g., to registered users), but it should be possible for other researchers to have some path to reproducing or verifying the results.
        \end{enumerate}
    \end{itemize}

\item {\bf Open access to data and code}
    \item[] Question: Does the paper provide open access to the data and code, with sufficient instructions to faithfully reproduce the main experimental results, as described in supplemental material?
    \item[] Answer: \answerYes{} % Replace by \answerYes{}, \answerNo{}, or \answerNA{}.
    \item[] Justification: Anonymous code is provided at \url{https://anonymous.4open.science/r/pCoMole-18EC/} as stated in the Reproducibility paragraph of the Discussion. The experiments use publicly available datasets and resources, including UniRef sequences, FPBase fluorescent proteins, the CRISPR-Cas Atlas, ChemDiv peptidomimetics, and PDB targets, all cited in App~\ref{sec:diversity}. Pre-trained baselines (RayGun, SCISOR) and predictors (FPredX, Protein2PAM, Boltz-2, AutoDock VINA, AlphaFold3) are likewise publicly available and cited.
    \item[] Guidelines:
    \begin{itemize}
        \item The answer \answerNA{} means that paper does not include experiments requiring code.
        \item Please see the NeurIPS code and data submission guidelines (\url{https://neurips.cc/public/guides/CodeSubmissionPolicy}) for more details.
        \item While we encourage the release of code and data, we understand that this might not be possible, so \answerNo{} is an acceptable answer. Papers cannot be rejected simply for not including code, unless this is central to the contribution (e.g., for a new open-source benchmark).
        \item The instructions should contain the exact command and environment needed to run to reproduce the results. See the NeurIPS code and data submission guidelines (\url{https://neurips.cc/public/guides/CodeSubmissionPolicy}) for more details.
        \item The authors should provide instructions on data access and preparation, including how to access the raw data, preprocessed data, intermediate data, and generated data, etc.
        \item The authors should provide scripts to reproduce all experimental results for the new proposed method and baselines. If only a subset of experiments are reproducible, they should state which ones are omitted from the script and why.
        \item At submission time, to preserve anonymity, the authors should release anonymized versions (if applicable).
        \item Providing as much information as possible in supplemental material (appended to the paper) is recommended, but including URLs to data and code is permitted.
    \end{itemize}

\item {\bf Experimental setting/details}
    \item[] Question: Does the paper specify all the training and test details (e.g., data splits, hyperparameters, how they were chosen, type of optimizer) necessary to understand the results?
    \item[] Answer: \answerYes{} % Replace by \answerYes{}, \answerNo{}, or \answerNA{}.
    \item[] Justification: Full training and test details are reported across the appendices on Edit Flow training, score model details, and pCoMole sampling. This includes optimizer choice (AdamW with weight decay 0.03), learning rates ($1\times 10^{-5}$ for UniRef and Cas9 Edit Flows, $1\times 10^{-6}$ for the GFP fine-tune), warmup and cosine decay schedules, model architecture (768-dimensional, 8 Transformer layers, 12 attention heads), 80-10-10 train/test/validation splits, and complete sampling hyperparameters per task in App~\ref{sec:sampling}. Train/test losses are reported in Table~\ref{tab:editflow}.
    \item[] Guidelines:
    \begin{itemize}
        \item The answer \answerNA{} means that the paper does not include experiments.
        \item The experimental setting should be presented in the core of the paper to a level of detail that is necessary to appreciate the results and make sense of them.
        \item The full details can be provided either with the code, in appendix, or as supplemental material.
    \end{itemize}

\item {\bf Experiment statistical significance}
    \item[] Question: Does the paper report error bars suitably and correctly defined or other appropriate information about the statistical significance of the experiments?
    \item[] Answer: \answerYes{} % Replace by \answerYes{}, \answerNo{}, or \answerNA{}.
    \item[] Justification: Quantitative results in Tables~\ref{tab:gfp_raygun},~\ref{tab:runtime_tiers},~\ref{tab:gfp_ablation},~\ref{tab:cas9_main_table},~\ref{tab:mimetics_drug},~\ref{tab:cas9_benchmark},~\ref{tab:cas9_ablation},~\ref{tab:mimetics_pdb},~\ref{tab:constraint_ablation},~\ref{tab:step_ablation},~\ref{tab:rollout_ablation},~\ref{tab:candidate_ablation}, and~\ref{tab:coverage} report aggregate metrics over multiple generated sequences per condition (e.g., 50 shrunk variants per Cas9 input in Sec~\ref{sec:cas9}, 100 sequences per method in the RayGun/SCISOR Cas9 benchmark, 100 peptidomimetic candidates per PDB target). Distributions of mean pLDDT scores over 1{,}000 sequences are shown in Figure~\ref{fig:plddts} for the UniRef$_S$ Edit Flow.
    \item[] Guidelines:
    \begin{itemize}
        \item The answer \answerNA{} means that the paper does not include experiments.
        \item The authors should answer \answerYes{} if the results are accompanied by error bars, confidence intervals, or statistical significance tests, at least for the experiments that support the main claims of the paper.
        \item The factors of variability that the error bars are capturing should be clearly stated (for example, train/test split, initialization, random drawing of some parameter, or overall run with given experimental conditions).
        \item The method for calculating the error bars should be explained (closed form formula, call to a library function, bootstrap, etc.)
        \item The assumptions made should be given (e.g., Normally distributed errors).
        \item It should be clear whether the error bar is the standard deviation or the standard error of the mean.
        \item It is OK to report 1-sigma error bars, but one should state it. The authors should preferably report a 2-sigma error bar than state that they have a 96\% CI, if the hypothesis of Normality of errors is not verified.
        \item For asymmetric distributions, the authors should be careful not to show in tables or figures symmetric error bars that would yield results that are out of range (e.g., negative error rates).
        \item If error bars are reported in tables or plots, the authors should explain in the text how they were calculated and reference the corresponding figures or tables in the text.
    \end{itemize}

\item {\bf Experiments compute resources}
    \item[] Question: For each experiment, does the paper provide sufficient information on the computer resources (type of compute workers, memory, time of execution) needed to reproduce the experiments?
    \item[] Answer: \answerYes{} % Replace by \answerYes{}, \answerNo{}, or \answerNA{}.
    \item[] Justification: Compute resources are reported in App~\ref{sec:diversity} (Edit Flow details) and the Benchmark Models appendix. Edit Flow training uses one NVIDIA RTX A6000 GPU with 48~GB of VRAM with epoch counts specified per model (around 30 for UniRef$_S$, 10 for UniRef$_L$, and 100 for Cas9). Baseline RayGun runs use an NVIDIA H100 NVL GPU with 94~GB of VRAM, and SCISOR runs use a single NVIDIA RTX A6000 with 48~GB of VRAM. pCoMole runtime, including comparisons to baselines, is reported in Tables~\ref{tab:runtime_tiers}.
    \item[] Guidelines:
    \begin{itemize}
        \item The answer \answerNA{} means that the paper does not include experiments.
        \item The paper should indicate the type of compute workers CPU or GPU, internal cluster, or cloud provider, including relevant memory and storage.
        \item The paper should provide the amount of compute required for each of the individual experimental runs as well as estimate the total compute. 
        \item The paper should disclose whether the full research project required more compute than the experiments reported in the paper (e.g., preliminary or failed experiments that didn't make it into the paper). 
    \end{itemize}
    
\item {\bf Code of ethics}
    \item[] Question: Does the research conducted in the paper conform, in every respect, with the NeurIPS Code of Ethics \url{https://neurips.cc/public/EthicsGuidelines}?
    \item[] Answer: \answerYes{} % Replace by \answerYes{}, \answerNo{}, or \answerNA{}.
    \item[] Justification: The research conforms with the NeurIPS Code of Ethics. The work does not involve human subjects or sensitive personal data, all datasets and pre-trained models used are publicly available and properly cited, and the submission preserves anonymity through the anonymous code repository.
    \item[] Guidelines:
    \begin{itemize}
        \item The answer \answerNA{} means that the authors have not reviewed the NeurIPS Code of Ethics.
        \item If the authors answer \answerNo, they should explain the special circumstances that require a deviation from the Code of Ethics.
        \item The authors should make sure to preserve anonymity (e.g., if there is a special consideration due to laws or regulations in their jurisdiction).
    \end{itemize}

\item {\bf Broader impacts}
    \item[] Question: Does the paper discuss both potential positive societal impacts and negative societal impacts of the work performed?
    \item[] Answer: \answerYes{} % Replace by \answerYes{}, \answerNo{}, or \answerNA{}.
    \item[] Justification: The paper includes a dedicated Impact Statement that discusses both the positive societal impacts of pCoMole, including its potential to expand the space of deliverable and clinically viable genome-editing tools by reducing the size of systems such as CRISPR-Cas9 without altering target specificity, and the safety considerations that this raises, including off-target activity, immunogenicity, and context-dependent behavior \emph{in vivo}. The Impact Statement explicitly notes that pCoMole must be coupled with careful experimental validation, safety screening, and responsible deployment for genome-editing and therapeutic applications.
    \item[] Guidelines:
    \begin{itemize}
        \item The answer \answerNA{} means that there is no societal impact of the work performed.
        \item If the authors answer \answerNA{} or \answerNo, they should explain why their work has no societal impact or why the paper does not address societal impact.
        \item Examples of negative societal impacts include potential malicious or unintended uses (e.g., disinformation, generating fake profiles, surveillance), fairness considerations (e.g., deployment of technologies that could make decisions that unfairly impact specific groups), privacy considerations, and security considerations.
        \item The conference expects that many papers will be foundational research and not tied to particular applications, let alone deployments. However, if there is a direct path to any negative applications, the authors should point it out. For example, it is legitimate to point out that an improvement in the quality of generative models could be used to generate Deepfakes for disinformation. On the other hand, it is not needed to point out that a generic algorithm for optimizing neural networks could enable people to train models that generate Deepfakes faster.
        \item The authors should consider possible harms that could arise when the technology is being used as intended and functioning correctly, harms that could arise when the technology is being used as intended but gives incorrect results, and harms following from (intentional or unintentional) misuse of the technology.
        \item If there are negative societal impacts, the authors could also discuss possible mitigation strategies (e.g., gated release of models, providing defenses in addition to attacks, mechanisms for monitoring misuse, mechanisms to monitor how a system learns from feedback over time, improving the efficiency and accessibility of ML).
    \end{itemize}
    
\item {\bf Safeguards}
    \item[] Question: Does the paper describe safeguards that have been put in place for responsible release of data or models that have a high risk for misuse (e.g., pre-trained language models, image generators, or scraped datasets)?
    \item[] Answer: \answerNA{} % Replace by \answerYes{}, \answerNo{}, or \answerNA{}.
    \item[] Justification: The paper does not release new pre-trained foundation models, datasets, or generators that pose a high risk for misuse. Released artifacts consist of editing methodology and Edit Flow models trained on publicly available biological sequence data, and the Impact Statement notes that downstream therapeutic and genome-editing use requires careful experimental validation and safety screening before deployment.
    \item[] Guidelines:
    \begin{itemize}
        \item The answer \answerNA{} means that the paper poses no such risks.
        \item Released models that have a high risk for misuse or dual-use should be released with necessary safeguards to allow for controlled use of the model, for example by requiring that users adhere to usage guidelines or restrictions to access the model or implementing safety filters. 
        \item Datasets that have been scraped from the Internet could pose safety risks. The authors should describe how they avoided releasing unsafe images.
        \item We recognize that providing effective safeguards is challenging, and many papers do not require this, but we encourage authors to take this into account and make a best faith effort.
    \end{itemize}

\item {\bf Licenses for existing assets}
    \item[] Question: Are the creators or original owners of assets (e.g., code, data, models), used in the paper, properly credited and are the license and terms of use explicitly mentioned and properly respected?
    \item[] Answer: \answerYes{} % Replace by \answerYes{}, \answerNo{}, or \answerNA{}.
    \item[] Justification: All existing assets used in the paper are properly credited via citations to the original works, including the UniRef and UniRef50 sequence databases, FPBase, the CRISPR-Cas Atlas~\citep{ruffolo2024design}, ChemDiv peptidomimetics, ESM-2-650M~\citep{lin2023evolutionary}, FPredX~\citep{tam2022fpredx}, Protein2PAM~\citep{nayfach2025engineering}, Boltz-2~\citep{passaro2025boltz}, AutoDock VINA~\citep{trott2010autodock}, AlphaFold3~\citep{abramson2024accurate}, and the RayGun~\citep{devkota2024miniaturizing} and SCISOR~\citep{baron2025diffusion} baselines, all of which are cited in the Edit Flow details, score model, and benchmark model appendices.
    \item[] Guidelines:
    \begin{itemize}
        \item The answer \answerNA{} means that the paper does not use existing assets.
        \item The authors should cite the original paper that produced the code package or dataset.
        \item The authors should state which version of the asset is used and, if possible, include a URL.
        \item The name of the license (e.g., CC-BY 4.0) should be included for each asset.
        \item For scraped data from a particular source (e.g., website), the copyright and terms of service of that source should be provided.
        \item If assets are released, the license, copyright information, and terms of use in the package should be provided. For popular datasets, \url{paperswithcode.com/datasets} has curated licenses for some datasets. Their licensing guide can help determine the license of a dataset.
        \item For existing datasets that are re-packaged, both the original license and the license of the derived asset (if it has changed) should be provided.
        \item If this information is not available online, the authors are encouraged to reach out to the asset's creators.
    \end{itemize}

\item {\bf New assets}
    \item[] Question: Are new assets introduced in the paper well documented and is the documentation provided alongside the assets?
    \item[] Answer: \answerYes{} % Replace by \answerYes{}, \answerNo{}, or \answerNA{}.
    \item[] Justification: The new assets introduced are the pCoMole framework and the trained Edit Flow models. These are documented through the methodology in Sec~\ref{sec:method}, the architectural and training details in App~\ref{sec:diversity}, the score model details in the corresponding appendix, the sampling configuration in App~\ref{sec:sampling}, and the anonymous code repository at \url{https://anonymous.4open.science/r/pCoMole-18EC/}.
    \item[] Guidelines:
    \begin{itemize}
        \item The answer \answerNA{} means that the paper does not release new assets.
        \item Researchers should communicate the details of the dataset\slash code\slash model as part of their submissions via structured templates. This includes details about training, license, limitations, etc. 
        \item The paper should discuss whether and how consent was obtained from people whose asset is used.
        \item At submission time, remember to anonymize your assets (if applicable). You can either create an anonymized URL or include an anonymized zip file.
    \end{itemize}

\item {\bf Crowdsourcing and research with human subjects}
    \item[] Question: For crowdsourcing experiments and research with human subjects, does the paper include the full text of instructions given to participants and screenshots, if applicable, as well as details about compensation (if any)? 
    \item[] Answer: \answerNA{} % Replace by \answerYes{}, \answerNo{}, or \answerNA{}.
    \item[] Justification: The paper does not involve crowdsourcing or research with human subjects. All experiments use publicly available datasets and \textit{in silico} evaluations.
    \item[] Guidelines:
    \begin{itemize}
        \item The answer \answerNA{} means that the paper does not involve crowdsourcing nor research with human subjects.
        \item Including this information in the supplemental material is fine, but if the main contribution of the paper involves human subjects, then as much detail as possible should be included in the main paper. 
        \item According to the NeurIPS Code of Ethics, workers involved in data collection, curation, or other labor should be paid at least the minimum wage in the country of the data collector. 
    \end{itemize}

\item {\bf Institutional review board (IRB) approvals or equivalent for research with human subjects}
    \item[] Question: Does the paper describe potential risks incurred by study participants, whether such risks were disclosed to the subjects, and whether Institutional Review Board (IRB) approvals (or an equivalent approval/review based on the requirements of your country or institution) were obtained?
    \item[] Answer: \answerNA{} % Replace by \answerYes{}, \answerNo{}, or \answerNA{}.
    \item[] Justification: The paper does not involve human subjects research and therefore no IRB approval is required.
    \item[] Guidelines:
    \begin{itemize}
        \item The answer \answerNA{} means that the paper does not involve crowdsourcing nor research with human subjects.
        \item Depending on the country in which research is conducted, IRB approval (or equivalent) may be required for any human subjects research. If you obtained IRB approval, you should clearly state this in the paper. 
        \item We recognize that the procedures for this may vary significantly between institutions and locations, and we expect authors to adhere to the NeurIPS Code of Ethics and the guidelines for their institution. 
        \item For initial submissions, do not include any information that would break anonymity (if applicable), such as the institution conducting the review.
    \end{itemize}

\item {\bf Declaration of LLM usage}
    \item[] Question: Does the paper describe the usage of LLMs if it is an important, original, or non-standard component of the core methods in this research? Note that if the LLM is used only for writing, editing, or formatting purposes and does \emph{not} impact the core methodology, scientific rigor, or originality of the research, declaration is not required.
    %this research? 
    \item[] Answer: \answerNA{} % Replace by \answerYes{}, \answerNo{}, or \answerNA{}.
    \item[] Justification: LLMs are not used as an important, original, or non-standard component of the core methodology. The framework concerns Pareto-constrained editing of biomolecular sequences via Doob-$h$ guided discrete flows with a Transformer-based Edit Flow backbone, and does not employ language models in the methodology beyond the pre-trained ESM-2-650M sequence encoder used for protein sequence embeddings as cited in App~\ref{sec:diversity}.
    \item[] Guidelines:
    \begin{itemize}
        \item The answer \answerNA{} means that the core method development in this research does not involve LLMs as any important, original, or non-standard components.
        \item Please refer to our LLM policy in the NeurIPS handbook for what should or should not be described.
    \end{itemize}

\end{enumerate}

\end{document}